%%%%%%%% ICML 2025 EXAMPLE LATEX SUBMISSION FILE %%%%%%%%%%%%%%%%%

\documentclass[]{article}

% Recommended, but optional, packages for figures and better typesetting:
\usepackage{microtype}
\usepackage{graphicx}
\usepackage{subfigure}
\usepackage{booktabs} % for professional tables

% hyperref makes hyperlinks in the resulting PDF.
% If your build breaks (sometimes temporarily if a hyperlink spans a page)
% please comment out the following usepackage line and replace
% \usepackage{icml2025} with \usepackage[nohyperref]{icml2025} above.
\usepackage{hyperref}
\usepackage{multirow}

% Attempt to make hyperref and algorithmic work together better:

\usepackage{algorithmic}% http://ctan.org/pkg/algorithms
% \usepackage{algpseudocode, algorithmicx}
% Use the following line for the initial blind version submitted for review:
% \usepackage{icml2025}

% If accepted, instead use the following line for the camera-ready submission:
\usepackage[accepted]{icml2025}

% For theorems and such
\usepackage{amsmath}
\usepackage{amssymb}
\usepackage{mathtools}
\usepackage{amsthm}
\usepackage{adjustbox}

% our packages
\usepackage{thm-restate}
\usepackage{tikz}
\usepackage{pgfplots}
\usetikzlibrary{matrix,fit,calc,arrows.meta,decorations.pathreplacing, pgfplots.groupplots}
\tikzset{
    axis break gap/.initial=0mm
}
\definecolor{gr}{RGB}{60,180,100}
\definecolor{bl}{RGB}{70,70,240}
\definecolor{sky}{RGB}{100,180,240}
\definecolor{yl}{RGB}{250,170,30}
\definecolor{or}{RGB}{200,140,80}
\definecolor{pp}{RGB}{200,150,240}
\definecolor{darkred}{RGB}{200,30,0}

\newcommand{\overbar}[1]{\mkern 1.5mu\overline{\mkern-1.5mu#1\mkern-1.5mu}\mkern 1.5mu}
\newcommand{\bX}{\mathbf{X}}
\newcommand{\bZ}{\mathbf{Z}}
\newcommand{\bz}{\mathbf{z}}
\newcommand{\bw}{\mathbf{w}}
\newcommand{\bW}{\mathbf{W}}
\newcommand{\bF}{\mathbf{F}}
\newcommand{\bH}{\mathbf{H}}
\newcommand{\bL}{\mathbf{L}}
\newcommand{\bP}{\mathbf{P}}
\newcommand{\bc}{\mathbf{c}}

\newcommand{\bU}{\mathbf{U}}
\newcommand{\bB}{\mathbf{B}}

\newcommand{\hbw}{\widehat{\mathbf{w}}}
\newcommand{\hbW}{\widehat{\mathbf{W}}}

\newcommand{\hbZ}{\widehat{\mathbf{Z}}}
\newcommand{\diag}{\mathrm{diag}}
\newcommand{\Diag}{\mathrm{Diag}}

\DeclareMathOperator{\vect}{vec}
\newcommand{\round}{\mathrm{Round}}

% if you use cleveref..
\usepackage[capitalize,noabbrev]{cleveref}

%%%%%%%%%%%%%%%%%%%%%%%%%%%%%%%%
% THEOREMS
%%%%%%%%%%%%%%%%%%%%%%%%%%%%%%%%
\theoremstyle{plain}

\theoremstyle{definition}

\theoremstyle{remark}

% Todonotes is useful during development; simply uncomment the next line
%    and comment out the line below the next line to turn off comments
%\usepackage[disable,textsize=tiny]{todonotes}
\usepackage[textsize=tiny]{todonotes}

% The \icmltitle you define below is probably too long as a header.
% Therefore, a short form for the running title is supplied here:
\icmltitlerunning{GuidedQuant: Large Language Model Quantization via Exploiting End Loss Guidance}

\pgfplotsset{compat=1.16}
\begin{document}

\twocolumn[
\icmltitle{GuidedQuant: Large Language Model Quantization via \\Exploiting End Loss Guidance}

% It is OKAY to include author information, even for blind
% submissions: the style file will automatically remove it for you
% unless you've provided the [accepted] option to the icml2025
% package.

% List of affiliations: The first argument should be a (short)
% identifier you will use later to specify author affiliations
% Academic affiliations should list Department, University, City, Region, Country
% Industry affiliations should list Company, City, Region, Country

% You can specify symbols, otherwise they are numbered in order.
% Ideally, you should not use this facility. Affiliations will be numbered
% in order of appearance and this is the preferred way.
\icmlsetsymbol{equal}{*}
\icmlsetsymbol{intern}{$\dagger$}

\begin{icmlauthorlist}
\icmlauthor{Jinuk Kim}{snu,nprc,intern}
\icmlauthor{Marwa El Halabi}{sail}
\icmlauthor{Wonpyo Park}{google}
\icmlauthor{Clemens JS Schaefer}{google}
\icmlauthor{Deokjae Lee}{snu,nprc}
\icmlauthor{Yeonhong Park}{snu}
\icmlauthor{Jae W. Lee}{snu}
%\icmlauthor{}{sch}
\icmlauthor{Hyun Oh Song}{snu,nprc}
%\icmlauthor{}{sch}
%\icmlauthor{}{sch}
\end{icmlauthorlist}

\icmlaffiliation{snu}{Department of Computer Science and Engineering, Seoul National University}
\icmlaffiliation{nprc}{Neural Processing Research Center}
\icmlaffiliation{sail}{Samsung AI Lab, Montreal}
\icmlaffiliation{google}{Google}

% \icmlcorrespondingauthor{Firstname1 Lastname1}{first1.last1@xxx.edu}
\icmlcorrespondingauthor{Hyun Oh Song}{hyunoh@snu.ac.kr}

% You may provide any keywords that you
% find helpful for describing your paper; these are used to populate
% the "keywords" metadata in the PDF but will not be shown in the document
\icmlkeywords{Machine Learning, ICML}

\vskip 0.3in
]

% this must go after the closing bracket ] following \twocolumn[ ...

% This command actually creates the footnote in the first column
% listing the affiliations and the copyright notice.
% The command takes one argument, which is text to display at the start of the footnote.
% The \icmlEqualContribution command is standard text for equal contribution.
% Remove it (just {}) if you do not need this facility.

\printAffiliationsAndNotice{$^\dagger$Work partly done during an internship at Google.}  % leave blank if no need to mention equal contribution
% \printAffiliationsAndNotice{\icmlEqualContribution} % otherwise use the standard text.
% However, layer-wise reconstruction error treats all features equally and overlooks that each feature affects end-to-end loss differently, while optimizing independent weight errors neglects the interplay among weights, which is crucial to the end performance.

\begin{abstract}
Post-training quantization is a key technique for reducing the memory and inference latency of large language models by quantizing weights and activations without requiring retraining.
However, existing methods either (1) fail to account for the varying importance of hidden features to the end loss or, when incorporating end loss, (2) neglect the critical interactions between model weights.
To address these limitations, we propose GuidedQuant, a novel quantization approach that integrates gradient information from the end loss into the quantization objective while preserving cross-weight dependencies within output channels. 
GuidedQuant consistently boosts the performance of state-of-the-art quantization methods across weight-only scalar, weight-only vector, and weight-and-activation quantization.
Additionally, we introduce a novel non-uniform scalar quantization algorithm, which is guaranteed to monotonically decrease the quantization objective value, and outperforms existing methods in this category.
We release the code at \url{https://github.com/snu-mllab/GuidedQuant}.
\end{abstract}

\section{Introduction}
\label{introduction}

\begin{table}[t]
\caption{Summary of results of GuidedQuant applied to state-of-the-art PTQ methods on the Llama-2-7B model. Wiki2-4K and Wiki2-2K represent perplexity on WikiText2 dataset with context size of 4096 and 2048, respectively. W4A4KV4 indicates quantization of all weight, activation, and KV cache to 4 bits.}
\vspace{0.5em}
\centering
\begin{adjustbox}{max width=1.0\columnwidth}
\begin{tabular}{c lcc}
\toprule
 &  
Method &  
Bits$\downarrow$ &  
Wiki2-4K$\downarrow$
\\
% \cmidrule(lr){1-1}\cmidrule(lr){2-4}\cmidrule(lr){5-7}\cmidrule(lr){8-10}
% \cmidrule(lr){1-2}\cmidrule{3-4}\cmidrule(lr){5-6}\cmidrule{7-8}
\cmidrule(lr){2-2}\cmidrule(lr){3-3}\cmidrule(lr){4-4}
Type & Original & 16 & 5.12\\
\cmidrule(lr){1-1}\cmidrule(lr){2-2}\cmidrule(lr){3-3}\cmidrule(lr){4-4}
\multirow{3}{*}{\shortstack[c]{Weight-only\\ Scalar}}&SqueezeLLM
&2.01 & 39.58
\\
&LNQ (Ours) 
& 2.01 & 23.31
\\
&LNQ + GQuant (Ours)
& 2.01 & \textbf{8.83}
\\
\cmidrule(lr){1-1}\cmidrule(lr){2-2}\cmidrule(lr){3-3}\cmidrule(lr){4-4}
\multirow{2}{*}{\shortstack[c]{Weight-only\\ Vector}}&QTIP
& 2.00 & 6.82 \\
& QTIP + GQuant (Ours)
& 2.00  & \textbf{6.11}
\\
\midrule
 &  
Method &  
Bits$\downarrow$ &  
Wiki2-2K$\downarrow$
\\
\cmidrule(lr){2-2}\cmidrule(lr){3-3}\cmidrule(lr){4-4}
Type & Original & 16 & 5.47 \\
\cmidrule(lr){1-1}\cmidrule(lr){2-2}\cmidrule(lr){3-3}\cmidrule(lr){4-4}
\multirow{2}{*}{\shortstack[c]{Weight-and-\\ Activation}}&SpinQuant
& W4A4KV4 & 5.95
\\
& SpinQuant + GQuant (Ours)
& W4A4KV4 & \textbf{5.89}
\\
\bottomrule
\end{tabular}
\end{adjustbox}
\label{tab:summary}
\end{table}

\begin{figure*}
    \centering
    \resizebox{\textwidth}{!}{%
\begin{tikzpicture}[
    2d-arr/.style={matrix of nodes, row sep=-\pgflinewidth, column sep=-\pgflinewidth, 
    nodes in empty cells,
    nodes={draw,
      minimum size=15pt,
      anchor=center,
      align=center,
      inner sep=0pt
    },}
  ]

  \matrix (sal1) [2d-arr, ampersand replacement=\&] {
  |[fill=red!40, opacity=0.9]| \& |[fill=red!40, opacity=0.9]| \& |[fill=red!20, opacity=0.9]| \& |[fill=red!20, opacity=0.9]| \& |[fill=red!40, opacity=0.9]| \& |[fill=red!40, opacity=0.9]| \\
  |[fill=red!20, opacity=0.9]| \& |[fill=red!40, opacity=0.9]| \& |[fill=red!20, opacity=0.9]| \& |[fill=red!60, opacity=0.9]| \& |[fill=red!40, opacity=0.9]| \& |[fill=red!20, opacity=0.9]| \\
  |[fill=red!40, opacity=0.9]| \& |[fill=red!20, opacity=0.9]| \& |[fill=red!40, opacity=0.9]| \& |[fill=red!40, opacity=0.9]| \& |[fill=red!40, opacity=0.9]| \& |[fill=red!20, opacity=0.9]| \\
  |[fill=red!20, opacity=0.9]| \& |[fill=red!20, opacity=0.9]| \& |[fill=red!60, opacity=0.9]| \& |[fill=red!40, opacity=0.9]| \& |[fill=red!60, opacity=0.9]| \& |[fill=red!60, opacity=0.9]| \\
  |[fill=red!20, opacity=0.9]| \& |[fill=red!40, opacity=0.9]| \& |[fill=red!40, opacity=0.9]| \& |[fill=red!20, opacity=0.9]| \& |[fill=red!40, opacity=0.9]| \& |[fill=red!40, opacity=0.9]| \\
  };
    \node[above=-0.5em of sal1] (ssal1) {{$\frac{\partial \ell}{\partial (\mathbf{X} \mathbf{W})} \in \mathbb{R}^{n \times d_\mathrm{out}}$}};  

  \draw[dashed, black] ($(sal1-5-1.south west) + (-4.0em, -1.0em)$) -- ($(sal1-5-1.south west)+ (46.0em, -1.0em)$);
   
     \node[below=1.0em of sal1,xshift=-3.5em] (str) {{$\forall j \in \{1, \ldots, d_\mathrm{out}\}: $}};
   
    \node[right=2.0em of str,xshift=0em] (strh) {{$\bH_j = \bX^\top \mathrm{Diag}(\frac{\partial \ell}{\partial \bz_j})^2 \bX$}};
   
   \matrix (Hlw1) [2d-arr, ampersand replacement=\&, below=5.0em of sal1-5-1.south east, xshift=-4.0em] {
  |[fill=blue!40, opacity=0.9]| \\
  |[fill=blue!10, opacity=0.9]|         \\
  |[fill=blue!10, opacity=0.9]|         \\
  };
  
 \node[left=0.5em of Hlw1,xshift=1.0em, yshift=0.0em] {$\left(\vphantom{\rule{2em}{2.3em}}\right.$};
 \node[left=0.5em of Hlw1,xshift=8.2em, yshift=0.0em] {$\left.\vphantom{\rule{2em}{2.3em}}\right)^{\!\!\!\top}$};

  \node[above=-0.5em of Hlw1] (sHlw1) {{$\widehat{\mathbf{W}}_{:, j}$}};  

   \node[right=-0.2em of Hlw1] {\Large{$-$}};

     \matrix (Hlw2) [2d-arr, ampersand replacement=\&, right=1.5em of Hlw1] {
  |[fill=blue!20, opacity=0.9]| \\
  |[fill=blue!20, opacity=0.9]| \\
  |[fill=blue!20, opacity=0.9]| \\
  };
  
     \node[above=-0.5em of Hlw2] (sHlw2) {{${\mathbf{W}}_{:, j}$}};
   
  \matrix (H1) [2d-arr, ampersand replacement=\&, right=1.5em of Hlw2] {
  |[fill=red!30!, opacity=0.9]| \& |[fill=red!30!, opacity=0.9]| \& |[fill=red!30!, opacity=0.9]| \\
  |[fill=red!30!, opacity=0.9]| \& |[fill=red!30!, opacity=0.9]| \& |[fill=red!30!, opacity=0.9]| \\
  |[fill=red!30!, opacity=0.9]| \& |[fill=red!30!, opacity=0.9]| \& |[fill=red!30!, opacity=0.9]| \\
  };
    
      \node[above=-0.5em of H1] (sH1) {{${\mathbf{H}}_{j}$}};

   \matrix (Hrw1) [2d-arr, ampersand replacement=\&, right=1.0em of H1] {
  |[fill=blue!40, opacity=0.9]| \\
  |[fill=blue!10, opacity=0.9]|         \\
  |[fill=blue!10, opacity=0.9]|         \\
  };

 \node[left=0.5em of Hrw1,xshift=1.0em, yshift=0.0em] {$\left(\vphantom{\rule{2em}{2.3em}}\right.$};
 \node[left=0.5em of Hrw1,xshift=7.8em, yshift=0.0em] {$\left.\vphantom{\rule{2em}{2.3em}}\right)$};
  
   \node[right=-0.2em of Hrw1] {\Large{$-$}};

      \node[above=-0.5em of Hrw1] (sHrw1) {{$\widehat{\mathbf{W}}_{:, j}$}};
  
     \matrix (Hrw2) [2d-arr, ampersand replacement=\&, right=1.5em of Hrw1] {
  |[fill=blue!20, opacity=0.9]| \\
  |[fill=blue!20, opacity=0.9]| \\
  |[fill=blue!20, opacity=0.9]| \\
  };

    \node[above=-0.5em of Hrw2] (sHrw2) {{${\mathbf{W}}_{:, j}$}};

  \draw[dashed, black] ($(Hrw1-1-1.north west) + (8.0em, 4.0em)$) -- ($(Hrw1-1-1.north west)+ (8.0em, -5.0em)$);

%%%%%%%%%%%%%%%%%%%%%%%%%%%%%%%%%%%%%%%%%%%%%%%%%%%%%%%%%%%%%%%%%%%%%%%%%%%
     \node[right=14.5em of str] (str2) {{$\forall k \in \{1, \ldots, g \}: \quad (g \ll d_\mathrm{out})$}};

     \node[right=3.0em of str2,xshift=0em] (strh2) {{$\overbar{\bH}_k = \frac{1}{| J_k |} \sum_{j \in J_k} \bH_j $}};

   \matrix (Hglw1) [2d-arr, ampersand replacement=\&, right=4.0em of Hrw2] {
  |[fill=blue!40, opacity=0.9]| \&  |[fill=blue!40, opacity=0.9]| \\
  |[fill=blue!10, opacity=0.9]|          \&  |[fill=blue!40, opacity=0.9]| \\
  |[fill=blue!10, opacity=0.9]|          \&  |[fill=blue!40, opacity=0.9]| \\
  };
  
   \node[right=-0.2em of Hglw1] {\Large{$-$}};

 \node[left=0.5em of Hglw1,xshift=1.0em, yshift=0.0em] {$\left(\vphantom{\rule{2em}{2.3em}}\right.$};
 \node[left=0.5em of Hglw1,xshift=11.2em, yshift=0.0em] {$\left.\vphantom{\rule{2em}{2.3em}}\right)^{\!\!\!\top}$};

  \node[above=-0.5em of Hglw1] (sHglw1) {{$\widehat{\mathbf{W}}_{:, J_k}$}};
  
     \matrix (Hglw2) [2d-arr, ampersand replacement=\&, right=1.5em of Hglw1] {
 |[fill=blue!20, opacity=0.9]| \&  |[fill=blue!20, opacity=0.9]| \\
 |[fill=blue!20, opacity=0.9]| \&  |[fill=blue!20, opacity=0.9]| \\
 |[fill=blue!20, opacity=0.9]| \&  |[fill=blue!20, opacity=0.9]| \\
  };
  
     \node[above=-0.5em of Hglw2] (sHglw2) {{${\mathbf{W}}_{:, J_k}$}};
   
  \matrix (Hg1) [2d-arr, ampersand replacement=\&, right=1.5em of Hglw2] {
  |[fill=green!20!, opacity=0.9]| \& |[fill=green!20!, opacity=0.9]| \& |[fill=green!20!, opacity=0.9]| \\
  |[fill=green!20!, opacity=0.9]| \& |[fill=green!20!, opacity=0.9]| \& |[fill=green!20!, opacity=0.9]| \\
  |[fill=green!20!, opacity=0.9]| \& |[fill=green!20!, opacity=0.9]| \& |[fill=green!20!, opacity=0.9]| \\
  };
    
      \node[above=-0.5em of Hg1] (sHg1) {{${\overbar{\mathbf{H}}}_{k}$}};

   \matrix (Hgrw1) [2d-arr, ampersand replacement=\&, right=1.0em of Hg1] {
  |[fill=blue!40, opacity=0.9]| \&  |[fill=blue!40, opacity=0.9]| \\
  |[fill=blue!10, opacity=0.9]|          \&  |[fill=blue!40, opacity=0.9]| \\
  |[fill=blue!10, opacity=0.9]|          \&  |[fill=blue!40, opacity=0.9]| \\
  };

 \node[left=0.5em of Hgrw1,xshift=1.0em, yshift=0.0em] {$\left(\vphantom{\rule{2em}{2.3em}}\right.$};
 \node[left=0.5em of Hgrw1,xshift=10.7em, yshift=0.0em] {$\left.\vphantom{\rule{2em}{2.3em}}\right)$};

   \node[right=-0.2em of Hgrw1] {\Large{$-$}};

    \node[above=-0.5em of Hgrw1] (sHgrw1) {{$\widehat{\mathbf{W}}_{:, J_k}$}};
  
     \matrix (Hgrw2) [2d-arr, ampersand replacement=\&, right=1.5em of Hgrw1] {
 |[fill=blue!20, opacity=0.9]| \&  |[fill=blue!20, opacity=0.9]| \\
 |[fill=blue!20, opacity=0.9]| \&  |[fill=blue!20, opacity=0.9]| \\
 |[fill=blue!20, opacity=0.9]| \&  |[fill=blue!20, opacity=0.9]| \\
  };
  
    \node[above=-0.5em of Hgrw2] (sHgrw2) {{${\mathbf{W}}_{:, J_k}$}};

%%%%%%%%%%%%%%%%%%%%%%%%%%%%%%%%%%%%%%%%%%%%%%%%%%%%%%%%%%%%%%%%%%%%%%%%%%%

  \node[right=0.0em of sal1] (str) {\Large{$\odot$}};
%   \node[right=0.9em of sal2] (str2) {\Large{$\ast$}};

  \matrix (X) [2d-arr, ampersand replacement=\&, right=1.0em of str, nodes={draw, fill=gray!20, opacity=0.9}] {
    \&  \&   \\
    \&  \&   \\
    \&  \&   \\
    \&  \&   \\
    \&  \&   \\
  };
  
      \node[above=-0.5em of X] (sX) {{${\mathbf{X}} \in \mathbb{R}^{n \times d_\mathrm{in}}$}};

%   \node[right=0.2em of X] (eq) {\Large{$\ast$}};

  \matrix (lw1) [2d-arr, ampersand replacement=\&, right=1.0em of X] {
  |[fill=blue!40, opacity=0.9]| \&  |[fill=blue!40, opacity=0.9]| \&  |[fill=blue!40, opacity=0.9]| \&  |[fill=blue!40, opacity=0.9]| \& |[fill=blue!40, opacity=0.9]| \& |[fill=blue!10, opacity=0.9]| \\
  |[fill=blue!10, opacity=0.9]|          \&  |[fill=blue!40, opacity=0.9]| \&  |[fill=blue!40, opacity=0.9]| \&  |[fill=blue!10, opacity=0.9]|          \& |[fill=blue!40, opacity=0.9]| \& |[fill=blue!10, opacity=0.9]| \\
  |[fill=blue!10, opacity=0.9]|          \&  |[fill=blue!40, opacity=0.9]| \&  |[fill=blue!10, opacity=0.9]|          \&  |[fill=blue!10, opacity=0.9]|          \& |[fill=blue!40, opacity=0.9]| \& |[fill=blue!40, opacity=0.9]| \\
  };

 \node[right=0.0em of lw1] (eq) {\Large{$-$}};
%  \node[right=0.0em of lw2] (eq2) {\Large{$-$}};

 \node[left=0.5em of lw1,xshift=1.0em, yshift=0.0em] {$\left(\vphantom{\rule{2em}{2.7em}}\right.$};
 \node[left=0.5em of lw1,xshift=23.0em, yshift=0.0em] {$\left.\vphantom{\rule{2em}{2.7em}}\right)$};

 \node[left=0.5em of lw1,xshift=-5.5em, yshift=0.0em] {$\left(\vphantom{\rule{2em}{3.5em}}\right.$};
 \node[left=0.5em of lw1,xshift=24.0em, yshift=0.0em] {$\left.\vphantom{\rule{2em}{3.5em}}\right)$};

 \node[left=0.5em of lw1,xshift=-18.0em, yshift=0.0em] {$\left\lVert\vphantom{\rule{2em}{4em}}\right.$};
 \node[left=0.5em of lw1,xshift=25.5em, yshift=0.0em] {$\left.\vphantom{\rule{2em}{4em}}\right\rVert_F^2$};

  \matrix (rw1) [2d-arr, ampersand replacement=\&, right=1.8em of lw1] {
  |[fill=blue!20, opacity=0.9]| \&  |[fill=blue!20, opacity=0.9]| \&  |[fill=blue!20, opacity=0.9]| \&  |[fill=blue!20, opacity=0.9]| \& |[fill=blue!20, opacity=0.9]| \& |[fill=blue!20, opacity=0.9]| \\
  |[fill=blue!20, opacity=0.9]| \&  |[fill=blue!20, opacity=0.9]| \&  |[fill=blue!20, opacity=0.9]| \&  |[fill=blue!20, opacity=0.9]| \& |[fill=blue!20, opacity=0.9]| \& |[fill=blue!20, opacity=0.9]| \\
  |[fill=blue!20, opacity=0.9]| \&  |[fill=blue!20, opacity=0.9]| \&  |[fill=blue!20, opacity=0.9]| \&  |[fill=blue!20, opacity=0.9]| \& |[fill=blue!20, opacity=0.9]| \& |[fill=blue!20, opacity=0.9]| \\
  };

  \node[above=-0.3em of rw1] {$\mathbf{W} \in \mathbb{R}^{d_\mathrm{in} \times d_\mathrm{out}}$};

  \node[above=-0.3em of lw1] {$\widehat{\mathbf{W}} \in \mathbb{R}^{d_\mathrm{in} \times d_\mathrm{out}}$};

    \draw [decorate,decoration={brace,amplitude=5pt,mirror,raise=4ex}] 
  ($(rw1-2-1.south west) + (0.2em, 0em)$) -- ($(rw1-2-3.south west) + (-0.2em, 0em)$) node[midway,yshift=-3em]{$J_1$};
  
  \node[below=1.0em of rw1-3-4.south west]{$\ldots$};
  
  \draw [decorate,decoration={brace,amplitude=5pt,mirror,raise=4ex}] 
  ($(rw1-2-5.south west) + (0.2em, 0em)$) -- ($(rw1-2-6.south east) + (-0.2em, 0em)$) node[midway,yshift=-3em]{$J_g$};

    \draw [decorate,decoration={brace,amplitude=5pt,mirror,raise=4ex}] 
  ($(lw1-2-1.south west) + (0.2em, 0em)$) -- ($(lw1-2-3.south west) + (-0.2em, 0em)$) node[midway,yshift=-3em]{$J_1$};
  
  \node[below=1.0em of lw1-3-4.south west]{$\ldots$};
  
  \draw [decorate,decoration={brace,amplitude=5pt,mirror,raise=4ex}] 
  ($(lw1-2-5.south west) + (0.2em, 0em)$) -- ($(lw1-2-6.south east) + (-0.2em, 0em)$) node[midway,yshift=-3em]{$J_g$};

\end{tikzpicture}
}

    \caption{%Illustration of the GuidedQuant method quantizing a single linear layer. 
    Top: The proposed GuidedQuant's layer-wise quantization objective \eqref{eq:ours}. Bottom-left: Its equivalent quadratic form \eqref{eq:ours-quad}. Bottom-right: The approximated objective \eqref{eq:ours2} proposed in \cref{sec:challenge}. We denote the input, weight, and quantized weight matrices as $\bX \in \mathbb{R}^{n \times d_\mathrm{in}}$, $\mathbf{W}\in\mathbb{R}^{d_\mathrm{in} \times d_\mathrm{out}}$, and  $\hat{\mathbf{W}}\in\mathbb{R}^{d_\mathrm{in} \times d_\mathrm{out}}$, respectively. The groups $J_1, \ldots, J_g$ form a partition of the set $\{1, \ldots, d_\mathrm{out}\}$, and $\bz_j \in \mathbb{R}^{d_\mathrm{out}}$ denotes the $j$-th column of $\bZ = \bX\bW$.}
    \label{fig:qual2}
\end{figure*}

\looseness=-1 Large language models (LLMs) have shown remarkable capabilities across a range of tasks, from text generation to complex reasoning.  However, these advancements come at the cost of substantial memory usage and inference latency. Quantization provides an effective solution to these challenges. 
Weight-only quantization methods quantize only the model weights, reducing data transfer and thus accelerating inference in memory-bound scenarios such as small-batch inference \citep{gholami2024ai, kim2023squeezellm, tseng2024qtip}. 
On the other hand, weight-and-activation quantization methods quantize both the model weights and activations. In addition to reducing data transfer, these methods also speed up arithmetic operations, making them particularly beneficial for large-batch scenarios such as pre-filling input tokens or generating batched samples \citep{ashkboos2024quarot, liu2024spinquant}.
Weight-only quantization techniques have used three grid types: \emph{uniform scalar} \cite{frantar2022gptq}, \emph{non-uniform scalar} \cite{kim2023squeezellm}, and \emph{vector} quantization \citep{tseng2024qtip, van2024gptvq}, each with its own advantages (see \cref{sec:exp} for details). In contrast, weight-and-activation methods typically use a uniform scalar grid, as using a non-uniform grid would require dequantization before multiplication, preventing the use of faster arithmetic operations.

Quantization benefits come at the cost of performance degradation. Quantization-Aware Training (QAT) methods rely on retraining the quantized model to mitigate this, which is prohibitively expensive at the scale of modern LLMs.
In constrast, Post-Training Quantization (PTQ) methods quantize the pretrained model using a small calibration dataset or no data,
\textit{without} retraining the entire model. Most existing PTQ methods for LLMs rely on a surrogate objective rather than the end loss to make quantization feasible.

One common PTQ strategy, which we refer to as layer-wise output-based quantization, 
aims to quantize each layer by minimizing the mean squared error between the layer's original output and the quantized one \citep{Nagel2020, frantar2022gptq, egiazarian2024extreme, chee2024quip, tseng2024quip, tseng2024qtip, liu2024spinquant}.
However, this strategy treats all hidden features equally, overlooking their varying impact on the end loss.

Alternatively, methods such as \citet{choi2016towards,kim2023squeezellm} leverage gradient information from the end loss to assess the impact of individual weight errors. This is done by computing the gradient of the end loss with respect to weights via a single backpropagation step on a calibration dataset. Saliency scores are then assigned to weights based on these gradients, and the model is quantized by approximately minimizing the sum of saliency-weighted weight errors.
This objective corresponds to a quadratic approximation of the change in the end loss, based on its second-order Taylor expansion, where the Hessian is %assumed to be \emph{diagonal} (see \Cref{eq:kmeans}) and 
approximated by the \emph{diagonal} of the empirical Fisher information matrix \cite{hassibi1992second}.
A key limitation of this approach is that it ignores cross-weight interactions, which are crucial for overall performance.

\paragraph{Contributions} In this work, we propose \textit{GuidedQuant}, a novel PTQ approach that integrates gradient information from the end loss while preserving cross-weight dependencies within output channels. In particular, GuidedQuant computes saliency scores for layer outputs using the gradients of the end loss with respect to these outputs. 
Each layer is then quantized independently by approximately minimizing the sum of saliency-weighted output errors.
Unlike previous methods that assume a diagonal Hessian, this objective is equivalent to a refined quadratic approximation assuming a \emph{block-diagonal} Hessian, again approximated by the empirical Fisher information matrix. While cross-layer and cross-output channel interactions are still ignored, dependencies within output channels are preserved, enabling a more accurate estimation of quantization's impact on the end loss.

Computing and storing the diagonal blocks of the Fisher matrix for a given layer is too expensive for modern LLMs.
To address this, we partition the layer's outputs into a small number of groups and average the Fisher matrix's blocks within each group (\cref{fig:qual2}).
Other block-diagonal Fisher matrix approximations of the Hessian have been used for pruning CNNs \citep{singh2020woodfisher} and BERT LLMs \cite{kurtic2022optimal} with arbitrary blocks along the diagonal, and for quantizing CNNs \citep{li2021brecq} with diagonal blocks corresponding to the model's residual blocks (see \cref{app:blk_hess} for more details). However, our work is the first to make this approach computationally and storage-efficient at the scale of modern LLMs.

GuidedQuant can be applied as a direct plug-in 
%to a wide range of existing PTQ methods, including weight-only scalar, weight-only vector, and weight-and-activation quantization.
to any layer-wise output-based 
PTQ method. %which minimizes layer-wise output errors.
We demonstrate its effectiveness by integrating it into the current state-of-the-art methods for weight-only vector quantization, QTIP \cite{tseng2024qtip}, and weight-and-activation quantization, SpinQuant \cite{liu2024spinquant}, which are both layer-wise output-based PTQ methods. GuidedQuant consistently improves their performance (\cref{tab:summary}).

%Additionally, we observe that the current best layer-wise output-based non-uniform quantization method \citep{van2024gptvq} is suboptimal. 
%adopts a suboptimal strategy in quantizing weights.

For weight-only scalar quantization, the current state-of-the-art methods are SqueezeLLM \cite{kim2023squeezellm} and GPTVQ 1D \citep{van2024gptvq}. 
%The former minimizes saliency-weighted weight errors while 
Since GPTVQ 1D is a layer-wise output-based PTQ method, GuidedQuant can be applied to it. However, GPTVQ 1D employs a suboptimal algorithm for minimizing layer-wise output errors. To address this, we introduce a novel Layer-wise Non-uniform Quantization method, \textit{LNQ}, which minimize layer-wise output errors using an alternating minimization algorithm, where the codebook is optimized in closed-form, and assignments are optimized via a coordinate descent (CD) algorithm. 
LNQ outperforms GPTVQ 1D and matches or surpasses SqueezeLLM. Applying GuidedQuant to LNQ further improves its performance, achieving state-of-the-art results (\cref{tab:summary}).

\section{Preliminaries}

\looseness=-1 Consider a neural network with $L$ linear layers, trained with a loss function $\ell$ and a calibration data of size $n$. We denote the loss computed on the $i$-th data point as $\ell_i$.
Let $\bW^{(l)} \in \mathbb{R}^{d_{\mathrm{in}}^{(l)} \times d_{\mathrm{out}}^{(l)}}$ be the weight matrix of the $l$-th linear layer, where each column vector $\bw^{(l)}_j \in \mathbb{R}^{d_{\mathrm{in}}^{(l)}}$ corresponds to an output channel. We denote its quantized approximation as $\hbW^{(l)}$.
%of the weight and $\hbw \in \mathbb{R}^{d_{\mathrm{out}}}$ denotes its quantized counterpart. 
The input and output feature maps of this layer are $\bX^{(l)} \in \mathbb{R}^{n \times d_{\mathrm{in}}^{(l)}}$ and $\bZ^{(l)} \in \mathbb{R}^{n \times d_{\mathrm{out}}^{(l)}}$, respectively. The output of the linear layer is computed as $\bZ^{(l)} = \bX^{(l)} \bW^{(l)}$, and the output after quantization as $\hbZ^{(l)} = \bX^{(l)} \hbW^{(l)}$. Let $\bw = [\vect(\bW^{(1)})^\top, \cdots, \vect(\bW^{(L)})^\top]^\top$ and $\hbw =  [\vect(\hbW^{(1)})^\top, \cdots, \vect(\hbW^{(L)})^\top]^\top$ be the vectors of weights in all $L$ layers before and after quantization, where $\vect(\bW^{\ell})$ corresponds to stacking the columns of $\bW^{\ell}$.
 
Most existing PTQ methods for LLMs are layer-wise output-based quantization methods, which quantize each layer by approximately minimizing the objective 
\begin{equation}\label{eq:layerwiseObj}
    \| \bX^{(l)} {\bW}^{(l)} - \bX^{(l)} \widehat{\bW}^{(l)}\|_F^2 = \sum_{i=1}^n \sum_{j=1}^{d_\mathrm{out}^{(l)}} \left(Z_{ij}^{(l)} - \widehat{Z}_{ij}^{(l)} \right)^2
\end{equation}
 ignoring the varying impact of outputs on the end loss $\ell$. %Optimizing this objective is NP-Hard for all grid types. 
Existing methods employ various heuristics to minimize this objective, such as AdaRound \cite{Nagel2020}, CD methods \cite{nair2024cdquant, behdin2023quantease, egiazarian2024extreme, chee2024quip}, OBQ \cite{frantar2022optimal}, GPTQ\footnote{Also referred to as OPTQ.} \cite{frantar2022gptq}, GPTVQ \cite{van2024gptvq}, and AQLM \cite{egiazarian2024extreme}.

%One line of work in post-training quantization uses the layer-wise reconstruction error, $\lVert \bX^{(l)} \hbw - \bX^{(l)} \bw \rVert_2^2$, as its objective and solves this quadratic optimization problem layer by layer \citep{frantar2022gptq, egiazarian2024extreme}. 
%However, this approach minimizes the squared error of each output feature with equal importance, overlooking that each output influences the end loss $\ell$ differently.
A more accurate proxy objective, first introduced in early pruning methods \citep{lecun1989optimal, hassibi1992second}, is the following quadratic approximation of the change in the end loss 
\begin{equation}\label{eq:second-order}
    \ell(\hbw) - \ell(\bw) \approx \tfrac{1}{2} (\hbw - \bw)^\top \nabla^2 \ell(\bw) (\hbw - \bw).
\end{equation}
This approximation is derived from the second-order Taylor approximation of $\ell$, assuming that the trained model has converged and thus the gradient is close to zero. Since computing the Hessian is infeasible even for small models, a popular approach first proposed in \citet{hassibi1992second} approximates the Hessian by the empirical Fisher information matrix $\bF = \frac{1}{n}\sum_{i=1}^n \nabla \ell_i(\mathbf{w}) \nabla \ell_i(\mathbf{w})^\top$, which yields the following quadratic approximation, $(\hbw - \bw)^\top \bF (\hbw - \bw).$

SqueezeLLM \cite{kim2023squeezellm} is a weight-only non-uniform scalar PTQ method for LLMs which uses this quadratic approximation, but further approximates the Fisher information matrix by its diagonal $\diag(\bF)$, ignoring off-diagonal entries. The resulting objective is given by
\begin{equation}
\label{eq:kmeans}
(\hbw - \bw)^\top \diag(\bF) (\hbw - \bw) = \sum_k F_{kk} (\widehat{w}_k - w_k)^2.\end{equation} 
\looseness=-1 
%For uniform scalar quantization, this objective reduces to the unweighted weight error $\|\hbw - \bw\|_2^2$, which is minimized by rounding each weight to the nearest grid point.
For non-uniform scalar quantization, minimizing this objective corresponds to solving a weighted $k$-means problem in 1D, which can be solved exactly 
%in $O(nk + n \log n)$ time -- decided not to include runtimes to avoid making this too lengthy
using a dynamic programming algorithm \citep{Groenlund2017Fast}. SqueezeLLM instead employs Lloyd's algorithm  with $k$-means++ initialization \citep{lloyd1982least,arthur2007k}, 
which is only guaranteed to achieve a $\Theta(\log k)$ approximation in expectation, where $k$ is the number of clusters, but is faster in practice \cite{Hyun2024}.
%It can be implemented in $O(k^2 \log k \log n + T k \log n + + n \log n)$ runtime \cite{Hyun2024}. 
However, the diagonal approximation is highly inaccurate, as both the Hessian matrix and its Fisher approximation are usually strongly non-diagonal, as observed in prior work for small CNNs \citep{hassibi1992second, singh2020woodfisher}. We also confirm this observation for the Fisher matrix of Llama-2-7B in \cref{fig:fisher1,fig:fisher2}. % \cref{app:blk_hess}.
%A more accurate approximation is the block-diagonal approximation employed for both pruning \cite{} and quantization \cite{}, motivated by the observation that Hessians tend to be diagonally dominant. These methods don't scale..
%ignoring off-diagonal entries of $\bF$ omits the pairwise term $\sum_{k\neq l} F_{kl} (\hat{w}_k - w_k) (\hat{w}_l - w_l)$ in the second-order error estimate (\cref{eq:second-order}), thereby neglecting the cross-weight interactions in the objective.
%Considering these pairwise interactions are critical for preserving the performance, highlighting the need for a more comprehensive approach.

\section{GuidedQuant}

In this section, we introduce our PTQ approach \textit{GuidedQuant}. We first propose a layer-wise quantization objective that more accurately approximates the impact of quantization on the final loss compared to surrogate objectives used in existing PTQ methods. We then present a simplified version of this objective, making it computationally and memory efficient for LLMs with up to 70B parameters.
\looseness=-1
%We first present the 
%, a novel post-training quantization framework that incorporates end loss guidance into the quantization objective while explicitly considering the pairwise interactions between weights (\cref{fig:obj}). 
%We first discuss the limitation of the existing layer-wise reconstruction objective and motivate our proposed approach, which incorporates the gradient information of the output feature maps to reweight the layer-wise reconstruction error.
%Next, we show that the proposed objective is equivalent to the second-order error estimate of the end loss while \textit{including} the pairwise term, offering a direct enhancement upon prior work.
%Finally, we address the practical challenges associated with adopting this objective and present techniques to enhance its computational efficiency.

\begin{figure}[t]
    \centering
\begin{tikzpicture}

\begin{groupplot}[
        group style={columns=2, rows=1, horizontal sep=1.3cm, 
        vertical sep=3.5cm},
        grid style = {
            dash pattern=on 3pt off 2pt,
            % line cap = round,
            % black,
            line width = 0.4pt
        }
        ]

\nextgroupplot[
            width=0.47\textwidth,
            height=5cm,
            % Plot style
            every axis plot/.append style={thick},
            % Grid
            xmajorgrids={true},
            xminorgrids={true},
            ymajorgrids={true},
            yminorgrids={true},
            % Title
            title style={at={(0.5,1.3)},anchor=south},
            title={},
            % Tick
            tick label style={font=\scriptsize},
            tick pos=left,
            xlabel near ticks,
            ylabel near ticks,
            minor y tick num=0,
            % ytick style={draw=none},
            % xtick style={draw=none},
            % Tick and Range
            xtick={2.0, 3.0, 4.0},
            xticklabels={2.0, 3.0, 4.0},
            ytick={5.0, 5.5, 6.0, 6.5, 7.0, 7.5},
            yticklabels={5.0, 5.5, 6.0, 8, 24, 40},
            xmin=1.9,
            xmax=4.1,
            ymin=4.9,
            ymax=7.7,
            % Label
            xlabel={Bits $\downarrow$},
            ylabel={WikiText2 Perplexity $\downarrow$},
            xlabel shift=0.0cm,         
            ylabel shift=0.0cm,
            xlabel style={align=center},
            label style={font=\scriptsize},
            % Legend
            legend to name=grouplegend1,
            legend style={legend columns=2, font=\scriptsize},
            legend cell align={left},
            after end axis/.code={
            \draw (rel axis cs:0,0.48) +(-2mm,-1mm) -- +(2mm,1mm)
              ++(0pt,-\pgfkeysvalueof{/tikz/axis break gap})
              +(-2mm,-1mm) -- +(2mm,1mm)
              (rel axis cs:0,0.48) +(0mm,0mm) -- +(0mm,0mm)
              ++(0pt,-\pgfkeysvalueof{/tikz/axis break gap})
              +(-2mm,-1mm) -- +(2mm,1mm);
              \draw (rel axis cs:0,0.48) +(-2mm,0mm) -- +(2mm,2mm)
              ++(0pt,-\pgfkeysvalueof{/tikz/axis break gap})
              +(-2mm,-1mm) -- +(2mm,1mm)
              (rel axis cs:0,0.48) +(0mm,0mm) -- +(0mm,0mm)
              ++(0pt,-\pgfkeysvalueof{/tikz/axis break gap})
              +(-2mm,-1mm) -- +(2mm,1mm);
            }
            ]

% \addplot[black!75, mark=square] table [x=Original-speedup, y=Original-acc, col sep=comma]{data/resnet34.csv};\addlegendentry{Original}

\addplot[mark=none, black!75, samples=2, dashed] {5.12};\addlegendentry{Original-FP16}

\addplot[gr, mark=otimes] table [x=bits, y=kmeans, col sep=comma]{data/ppl.csv};\addlegendentry{SqueezeLLM (weighted $k$-means)}

\addplot[bl, mark=diamond] table [x=bits, y=layerwise, col sep=comma]{data/ppl.csv};\addlegendentry{LNQ (layer-wise)}

% \addplot[bl, mark=o, mark size=1.6pt] table [x=Depth-speedup, y=Depth-acc, col sep=comma]{data/resnet34.csv};\addlegendentry{}

\addplot[red, mark=triangle] table [x=bits, y=ours, col sep=comma]{data/ppl.csv};\addlegendentry{LNQ + GuidedQuant (our objective)}

\coordinate (c11) at (rel axis cs:0,1);

\coordinate (c13) at (rel axis cs:1,1);

\end{groupplot}

\coordinate (mid) at ($(c11)!.5!(c13)$);
\node[above] at ($(mid)+(-0.00, 0.05)$) {\pgfplotslegendfromname{grouplegend1}};

\end{tikzpicture}
% \vspace{-1em}
\caption{Non-uniform scalar quantization results on Llama-2-7B with different objectives: \textit{layer-wise} output error objective \eqref{eq:layerwiseObj} used in LNQ (\cref{alg:lnq}),
\textit{weighted $k$-means} objective \eqref{eq:kmeans} used in SqueezeLLM,
and our approximated \textit{GuidedQuant} objective \eqref{eq:ours2} used in LNQ combined with GuidedQuant.
We report perplexity on WikiText2 with a context size of 4096. Results are  from \cref{tab:scalar}.
% See \cref{sec:objComparisonSetup} for experimental setup details.
%denotes $\left\lVert \blue{\frac{\partial \ell}{\partial \mathbf{z}} \odot} (\bX\hbw - \bX\bw) \right\rVert_2^2 \propto \sum_{k} F_{kk} (\hat{w}_k - w_k)^2 \blue{+ \sum_{k \neq l}F_{kl} (\hat{w}_k - w_k) (\hat{w}_l - w_l)}$. The terms in blue highlight the differences from existing methods. 
%For GuidedQuant, we implement the approximated objective proposed in \cref{sec:challenge}.
}
\label{fig:obj}
\end{figure}
\subsection{Objective}

As discussed earlier, most existing PTQ methods treat all output features as equally important, by employing the surrogate objective in Eq. \eqref{eq:layerwiseObj}.
In contrast, we propose to modify this objective to account for the varying impact of each output feature on the final loss.

To that end, we approximate the change in the end loss $\ell$ resulting from the output feature $Z_{ij}^{(l)}$ changing to $\widehat{Z}_{ij}^{(l)}$ after quantization, using a first-order Taylor expansion, assuming independence of output features: $$\ell(\widehat{Z}_{ij}^{(l)}) - \ell(Z_{ij}^{(l)}) \approx \frac{\partial \ell}{\partial Z_{ij}^{(l)}} (\widehat{Z}_{ij}^{(l)} - Z_{ij}^{(l)}).$$ 

Accordingly, we propose to scale each output error by the gradient of the end loss with respect to that output, leading to the following layer-wise objective:
\begin{align} \label{eq:ours}
%\left\lVert
& \left\lVert \frac{\partial \ell}{\partial \bZ^{(l)}} \odot (\bX^{(l)} {\bW}^{(l)} - \bX^{(l)} \widehat{\bW}^{(l)}) \right\rVert_F^2 \nonumber \\
%\right \rVert_F^2 
& = \sum_{i=1}^n \sum_{j=1}^{d_\mathrm{out}^{(l)}} \left(\frac{\partial \ell}{\partial Z_{ij}^{(l)}} (Z_{ij}^{(l)} - \widehat{Z}_{ij}^{(l)} )\right)^2,
\end{align}
% \small
% \begin{align} \label{eq:ours}
% \left\lVert {\frac{\partial \ell}{\partial \bZ} \odot} (\bX \widehat{\bW} - \bX \bW) \right\rVert_F^2 = \sum_{i=1}^n \sum_{j=1}^{d_\mathrm{out}} \left({\frac{\partial \ell}{\partial Z_{ij}}}(\widehat{Z}_{ij} - Z_{ij})\right)^2,
% \end{align}
% \normalsize
where $\odot$ denotes the element-wise multiplication. This criterion was previously proposed in \citet{molchanov2019importance} for pruning neurons and filters in vision models, where pruning the $j$th neuron in layer $l$ corresponds to setting $\widehat{\bw}_j^{(l)} = 0$.
%We highlight the term that differs from the layer-wise reconstruction objective in blue.
%This formulation incorporates the end loss directly into the layer-wise objective, more accurately capturing the impact of the quantization to the end loss by explicitly modeling the contribution of each feature map to the end loss (compare \textit{layer-wise} and \textit{GuidedQuant} in \cref{fig:obj}).

%\subsection{Theoretical Analysis}
%\label{sub:analysis}

We note that the objective in Eq. \eqref{eq:ours} can be viewed as a simplification of the second-order Taylor approximation of the change in the end loss given in Eq. \eqref{eq:second-order}, where the Hessian is approximated by the empirical Fisher information matrix, and where interactions between weights belonging to different layers or output channels of the same layer are ignored. 
In other words, we adopt a \emph{block-diagonal} approximation of the Fisher matrix $\bF$ where we only keep the $d_\mathrm{in}^{(l)} \times d_\mathrm{in}^{(l)}$ blocks $\bF^{(l)}_j= \tfrac{1}{n} \sum_{i=1}^n (\frac{\partial \ell_i}{\partial \bw^{(l)}_j})(\frac{\partial \ell_i}{\partial \bw^{(l)}_j})^\top$ corresponding to interactions within each output channel $j$ of every layer $l$, and ignore all off-block entries.

\begin{restatable}{remark}{chainrule} \label{prop:chainrule} 
The sum of the layer-wise objective in Eq. \eqref{eq:ours} over all layers is equal to the following quadratic approximation of the change in the end loss 
\begin{equation}\label{eq:ours-sum}
    n \sum_{l=1}^L  \sum_{j=1}^{d_\mathrm{out}^{(l)}} (\bw^{(l)}_j - \widehat{\bw}^{(l)}_j)^\top \bF^{(l)}_j (\bw^{(l)}_j - \widehat{\bw}^{(l)}_j).
\end{equation}

%Minimizing the saliency-weighted output error, $\left\lVert \frac{\partial \ell}{\partial \mathbf{z}_j^{(l)}} \odot \left(\bX^{(l)}\hbw_j^{(l)} - \bX^{(l)}\bw_j^{(l)}\right) \right\rVert_2^2$, is equivalent to minimizing quadratic approximation of the change in the end loss, $\frac{1}{2} (\hat{w}_{p:q} - w_{p:q})^\top F_{p:q, p:q} (\hat{w}_{p:q} - w_{p:q})$, where $ p\!=\!\sum_{u=1}^l d_\mathrm{in}^{(u)} d_\mathrm{out}^{(u)} + (j-1) d_\mathrm{in}^{(l)} + 1$, and $q=p + d_\mathrm{in}^{(l)}$. Here, $\bF$ denotes the Fisher information matrix, and $\bw$ represents the vector of concatenated weights from all linear layers.
\end{restatable}
The proof of \cref{prop:chainrule} follows from the chain rule, and is given in \Cref{app:proof}. A similar observation was made in \citet{molchanov2019importance}.

Assuming that the quantization grid used is separable over layers, which is typically the case, minimizing the objective in Eq. \eqref{eq:ours-sum} is equivalent to independently minimizing 
\begin{equation}\label{eq:ours-quad}
    \sum_{j=1}^{d_\mathrm{out}^{(l)}} (\bw^{(l)}_j - \widehat{\bw}^{(l)}_j)^\top \bH^{(l)}_j (\bw^{(l)}_j - \widehat{\bw}^{(l)}_j), 
\end{equation}
for every layer, where $\bH^{(l)}_j = n \bF^{(l)}_j$, or equivalently the layer-wise objective in Eq. \eqref{eq:ours}. 

Thus our proposed objective is a more accurate approximation of the change in the end loss than the layer-wise output error objective \eqref{eq:layerwiseObj}, which assumes 
$ \frac{\partial \ell}{\partial \bZ^{(l)}} \propto \mathbf{I}$, as well as the weighted $k$-means objective \eqref{eq:kmeans} used in SqueezeLLM, which ignores all off-diagonal entries in the Fisher matrix including those within the blocks $\bF^{(l)}_j$. As a result, our approach achieves better performance, even with the additional approximation discussed in \cref{sec:challenge}, as highlighted in \cref{fig:obj} for non-uniform scalar quantization, and later across other formats in \cref{sec:exp}.

In \cref{fig:fisher1,fig:fisher2}, we visualize a submatrix of the Fisher information matrix corresponding to the first two output channels in the linear layers of the first Transformer block of Llama-2-7B. The visualization confirms that the Fisher matrix exhibits strong off-diagonal values and a prominent block-diagonal structure, with blocks corresponding to $\bF^{(l)}_j$ for the two output channels $j \in \{1,2\}$.

\subsection{Averaging Approximation}
\label{sec:challenge}

The layer-wise output error objective \eqref{eq:layerwiseObj} can be  written as $$\sum_{j=1}^{d_\mathrm{out}^{(l)}} \left(\bw^{(l)}_j - \widehat{\bw}^{(l)}_j\right)^\top \bH^{(l)} \left(\bw^{(l)}_j - \widehat{\bw}^{(l)}_j\right),$$ where $\bH^{(l)} = \bX^{(l)}{}^\top \bX^{(l)} \in \mathbb{R}^{d_\mathrm{in}^{(l)} \times d_\mathrm{in}^{(l)}}$.
%We omit the layer index $l$ for brevity throughout this section.
% Recall that $\mathbf{w}_j \in \mathbb{R}^{d_\mathrm{out}}$ and $\mathbf{z}_j \in \mathbb{R}^n$ denote the $j$-th output channel (column) of $\bW$ and $\bZ$, respectively.
%We begin by revisiting the layer-wise reconstruction objective, where the objective is to minimize 
% \vspace{-1.5em}
% \normalsize
% \begin{align}\label{eq:layerwise-channel}\left\lVert \bX\hbw_j - \bX\bw_j \right\rVert_2^2\end{align}
% \normalsize
% for each $\hat{\mathbf{w}}_j \in \mathbb{R}^{d_\mathrm{out}}$.
 %\Cref{eq:layerwise-channel}, $\bH = \bX^\top \bX \in \mathbb{R}^{d_\mathrm{in} \times d_\mathrm{in}}$, and optimize the following quadratic program:
% \vspace{-1.5em}
% \normalsize
% \begin{align*}\underset{\hat{\mathbf{w}}_j}{\mathrm{minimize}}\;\;(\hbw_j - \bw_j)^\top \bH (\hbw_j - \bw_j).\end{align*}
% \normalsize
Most existing heuristics for optimizing this objective, such as GPTQ \cite{frantar2022gptq} and CD \cite{nair2024cdquant, behdin2023quantease},  require access  only to $\bH^{(l)}$ and not $\bX^{(l)}$. 
Thus, the Hessian matrix $\bH^{(l)}$ is typically precomputed,  which reduces the peak memory usage during optimization,
%by decoupling the large matrix $\bX\in \mathbb{R}^{n\times d_\mathrm{in}}$ from the quantization process, 
since $\bX^{(l)} \in \mathbb{R}^{n\times d_\mathrm{in}^{(l)}}$ is much larger than $\bH^{(l)}$, given that $d_\mathrm{in}^{(l)} \ll n$.
Additionally, the precomputed Hessian can be reused across multiple quantization configurations and bit-widths, amortizing the cost of its computation.

Our proposed objective \eqref{eq:ours-quad} can be  seamlessly integrated into any layer-wise output based quantization method by replacing $\bH^{(l)}$ by $\bH^{(l)}_j = n \bF^{(l)}_j$ for each output channel $j$. 
However, precomputing and storing $\bH^{(l)}_j$ for all $j$ incurs a memory cost of $\Theta ((d_\mathrm{in}^{(l)})^2 d_\mathrm{out}^{(l)})$ and a time complexity of $\Theta (n (d_\mathrm{in}^{(l)})^2 d_\mathrm{out}^{(l)})$ per layer $l$. This is infeasible at the scale of modern LLMs, where both $d_\mathrm{in}^{(l)}$ and $d_\mathrm{out}^{(l)}$ exceed $10^3$, and $n$ is much larger than both.

%prohibitive computation and storage demands that are infeasible to manage.

To address this challenge, we partition the output channels of each layer into $g$ distinct groups ($g \ll d_\mathrm{out}^{(l)}$) and %sharing a single matrix $\overbar{\bH}_k$ within each group obtained by 
replace the individual Hessian matrices $\bH^{(l)}_j$ within each group $k$ by a shared matrix $\overbar{\bH}_k^{(l)}$, obtained by averaging $\bH^{(l)}_j$ within the group. 
Formally, let $J_1^{(l)}, \ldots, J_g^{(l)}$ be a partition of the set $\{1, \ldots, d_\mathrm{out}^{(l)}\}$. 
For each group $k = 1, \ldots, g$, we define %a %group-specific matrix $\overbar{\bH}_k^{(l)}$ by
$
    \overbar{\bH}_k^{(l)} = \tfrac{1}{| J_k^{(l)} |} \sum_{j \in J_k^{(l)}} \bH_j^{(l)}. %\\
 %   &= \bX^\top \left( \frac{1}{| J_k |} \sum_{j \in J_k} \mathrm{diag}\left(\frac{\partial \ell}{\partial \bz_j}\right)^2 \right) \bX.
$
The resulting layer-wise objective then becomes
%Then, for each output channel index $j \in J_k$, we optimize the following quadratic program, using the layer-wise quantization algorithm as a subroutine:
%\vspace{-1.5em}
%\normalsize
\begin{equation}
\label{eq:ours2}
\sum_{k=1}^g \sum_{j \in J_k^{(l)}} \left(\bw^{(l)}_j - \widehat{\bw}^{(l)}_j\right)^\top   \overbar{\bH}_k^{(l)} \left(\bw^{(l)}_j - \widehat{\bw}^{(l)}_j\right).
% \underset{\hat{\bw}_j}{\mathrm{minimize}}\;\;({\hbw}_j - \bw_j)^\top \overbar{\bH}_k ({\hbw}_j - \bw_j).
\end{equation}
%\normalsize

Note that by the chain rule, we can write $$\bH_j^{(l)} = \bX^{(l)}{}^\top \Diag\left(\frac{\partial \ell}{\partial \mathbf{z}^{(l)}_j}\right)^2 \bX^{(l)},$$ where $\Diag(\frac{\partial \ell}{\partial \mathbf{z}^{(l)}_j})^2$ is the diagonal matrix whose diagonal entries %$(\frac{\partial \ell}{\partial \mathbf{z}^{(l)}_j})^2$ 
are the element-wise square of the gradient of $\ell$ with respect to the $j$th column $\bz_j^{(l)}$ of $\bZ^{(l)}$. 
We can thus compute $\overbar{\bH}_k^{(l)}$ by averaging the squared gradients: 
%$$\overbar{\bH}_k^{(l)} = \bX^{(l)}{}^\top \left( \frac{1}{| J_k |} \sum_{j \in J_k} \Diag\left(\frac{\partial \ell}{\partial \bz^{(l)}_j}\right)^2 \right) \bX^{(l)}.$$
$$\overbar{\bH}_k^{(l)} = \bX^{(l)}{}^\top \Diag\left(\frac{1}{| J_k |} \sum_{j \in J_k}\left(\frac{\partial \ell}{\partial \bz^{(l)}_j}\right)^2\right) \bX^{(l)}.$$

This averaging approximation reduces the number of $d_\mathrm{in}^{(l)} \times d_\mathrm{in}^{(l)}$ Hessian matrices that need to be computed for each layer $l$ from $d_\mathrm{out}^{(l)}$ to $g$ (\cref{fig:qual2}). Computing and storing $\overbar{\bH}_k^{(l)}$  for all $k$ requires a significantly lower memory cost of $\Theta((d_\mathrm{in}^{(l)})^2 g)$ and time complexity of $\Theta(n (d_\mathrm{in}^{(l)})^2 g)$ per layer $l$ %+ n d_\mathrm{out}^{(l)}
(assuming the squared gradients averages are already computed), making the method scalable.
To partition the output channels, we use a simple strategy that groups every $d_\mathrm{out}^{(l)}/g$ consecutive channels into a single group.
This simple approach works well in practice, though more sophisticated clustering algorithms may yield additional benefits.
In our implementation, we scale the gradients by a large constant (we used 
$10^3$ in all experiments) while computing the averaged Hessians $\overline{\bH}_k$ to prevent underflow.

GuidedQuant quantizes each layer independently by approximately minimizing the layer-wise objective in Eq. \eqref{eq:ours2}.
A complete overview of GuidedQuant is provided in \Cref{alg:gq}. As discussed, the layer-wise quantization algorithm $\mathcal{Q}$ can be any layer-wise output based quantization method. The gradient computation (Line 2) requires a single backpropagation step on the calibration dataset.
During this step, we only store the averaged squared gradients $\mathbf{s}_k^{(l)}$, which requires $O(n g L)$ storage.

\looseness=-1 The total memory cost of GuidedQuant (without the backpropagation step) is then $O(L g (d_\mathrm{in}^2 + n))$, and its total time complexity is $O\left(L g ( n d_\mathrm{in}^2 + ~T_{\mathcal{Q}}({d_\mathrm{in}, d_\mathrm{out}}/{g}))\right)$, where $d_\mathrm{in}, d_\mathrm{out}$ are the largest input and output channel dimensions across all $L$ layers and $T_{\mathcal{Q}}(d_1, d_2)$ is the time complexity of quantizing a $d_1 \times d_2$-weight matrix using $\mathcal{Q}$.
Each step in the for loop (Lines 3-6) can be done in parallel for all groups and layers. As previously discussed, the Hessian matrices $\overline{\bH}_k$'s only need to be computed once, and can be reused for different quantization configurations and bit-widths.

%The gradient computation (line 2) requires a single backpropagation step on the calibration dataset, with a complexity of $O(L n d_\mathrm{in} d_\mathrm{out})$, where $d_\mathrm{in}, d_\mathrm{out}$ are the largest input and output channel dimensions across all $L$ layers. We only store the averaged squared gradients $\mathbf{s}_k^{(l)}$.
%The total memory cost of GuidedQuant is then $O(L d_\mathrm{in}^2 g)$, and its total time complexity is $O\left(L (  n d_\mathrm{in} d_\mathrm{out} + n d_\mathrm{in}^2 g + n d_\mathrm{out} + g ~T_{\mathcal{Q}}({d_\mathrm{in}, d_\mathrm{out}}/{g}))\right)$, where $T_{\mathcal{Q}}(d_1, d_2)$ is the time complexity of quantizing a $d_1 \times d_2$-weight matrix using $\mathcal{Q}$.

%\paragraph{Time Complexity.}
%Computing the gradient (Lines 1–3) requires $O(d)$ time, where $d$ is the total number of network parameters. Computing the Hessian matrix per layer (Lines 4–6) costs $O(nd_\mathrm{in}^2g)$. Finally, quantizing each layer (Lines 7–9) takes $g T_{\mathcal{Q}}\bigl(\frac{d_\mathrm{in}d_\mathrm{out}}{g}\bigr)$, where $T_{\mathcal{Q}}(N)$ is the time complexity of quantizing an $N$-sized matrix using the algorithm $\mathcal{Q}$. Refer to \cref{app:gq_detail} for further details.

% \begin{figure}[t]
%   \centering
    \begin{algorithm}[tb]
    \caption{GuidedQuant}
    \label{alg:gq}
    \begin{algorithmic}[1]
    \INPUT Layer-wise quantization algorithm $\mathcal{Q}$, number of groups $g$, number of linear layers $L$\\
% {\bf notation:} \\ 
%   $\bullet$ $\bX^{(l)} \in \mathbb{R}^{n \times d_{\mathrm{in}}^{(l)}}$: input feature of $l$-th layer \\
%   $\bullet$ $\bW^{(l)} \in \mathbb{R}^{d_{\mathrm{in}}^{(l)} \times d_{\mathrm{out}}^{(l)}}$: weights of the $l$-th layer \\
%   $\bullet$ $\bZ^{(l)} = \bX^{(l)}\,\bW^{(l)}$: output feature of $l$-th layer \\
%   $\bullet$ $\mathbf{z}^{(l)}_j \in \mathbb{R}^{d_{\mathrm{out}}^{(l)}}$: $j$-th column of the output feature $\bZ^{(l)}$ \\
%   $\bullet$ $J_1^{(l)}, \ldots, J_g^{(l)}$: a partition of index set $\{1, \ldots d_\mathrm{out}^{(l)}\}$
 \STATE $J_k^{(l)} \!\!\gets \{\tfrac{d_\mathrm{out}^{(l)}}{g}(k-1) + 1, \ldots, \tfrac{d_\mathrm{out}^{(l)}}{g}k\}, \forall l \in [L], k \in [g]$
\STATE $\mathbf{s}_k^{(l)} \gets \frac{1}{| J_k |} \sum_{j \in J_k}(\frac{\partial \ell}{\partial \bz^{(l)}_j})^2, \forall l \in [L],  k \in [g]$

\FORALL{$l \in [L], k \in [g]$}
        %\STATE $\bS_k^{(l)} \leftarrow \frac{1}{| J_k |} \sum_{j \in J_k}\Diag\left(\frac{\partial \ell}{\partial \bz_j^{(l)}}\right)^2$ %for all $l \in [L], k \in [g]$
        \STATE $\overbar{\bH}^{(l)}_k \leftarrow {\bX^{(l)}} {}^\top \Diag(\mathbf{s}_k^{(l)}) \bX^{(l)}$% for all $l \in [L], k \in [g]$
        \STATE $\widehat{\bW}^{(l)}\!\left[:, J_k^{(l)}\right] \leftarrow \mathcal{Q}\left(\overbar{\bH}^{(l)}_k, \bW^{(l)}\!\left[:, J_k^{(l)}\right]\right)$ %for all $l \in [L], k \in [g]$
    \ENDFOR
    \OUTPUT $\widehat{\bW}^{(1)}, \ldots, \widehat{\bW}^{(L)}$.
    \end{algorithmic}
    \end{algorithm}
%     \vspace{-2em}
% \end{figure}

\begin{algorithm}[tb]
\caption{LNQ}
\label{alg:lnq}
\begin{algorithmic}[1]
\INPUT Hessian of the objective $\bH \in \mathbb{R}^{d_\mathrm{in}\times d_\mathrm{in}}$, input weight $\bW \in  \mathbb{R}^{d_\mathrm{in} \times d_\mathrm{out}}$, initial %codebook $\bc^{(j)}\in \mathbb{R}^m$ and 
assignment $\bP^{(j)}\in\mathbb{R}^{d_\mathrm{in} \times m}$ for each output channel $j$.\\

\STATE $\bH = \bL \bL^\top$ \hfill \COMMENT{\textit{Cholesky decomposition}}
\FOR{$j \in \{1, \dots, d_\mathrm{out}\}$}
    \FOR{$t = 1$ {\bf to} $T$} 
    \STATE $\mathbf{c}^{(j)} \leftarrow \left(\bP^{(j)}{}^\top \bL \bL^\top \bP^{(j)}\right)^{-1}\bP^{(j)}{}^\top \bL \bL^\top \bw_j$
    \STATE $\hat{\mathbf{\bw}}_{j} \leftarrow \bP^{(j)} \bc^{(j)}$
        \FOR{$k = 1$ {\bf to} $K$}
        \FOR{$i = 1$ {\bf to} $d_\mathrm{in}$}
        \STATE $c^{(j)}_{q^*} \!\! \leftarrow \!\!\!\!\!\!\!\!\!\underset{\widehat{W}_{ij} \in \{c_1^{(j)}, \ldots, c_m^{(j)}\}}{\mathrm{argmin}} \!\!\!\!\!\! (\hbw_j \!-\! \bw_j)^\top \bH (\hbw_j \!-\! \bw_j)$
        \STATE $\widehat{W}_{ij} \leftarrow c_{q^*}^{(j)}$
        \STATE $\forall q\in \{1, \ldots, m\}: P^{(j)}_{iq} =
                \begin{cases} 
                1 & \text{if } q = q^*, \\
                0 & \text{otherwise.}
                \end{cases}$
        \ENDFOR
        \ENDFOR
    \ENDFOR
    \STATE $\mathbf{c}^{(j)} \leftarrow \left(\bP^{(j)}{}^\top \bL \bL^\top \bP^{(j)}\right)^{-1}\bP^{(j)}{}^\top \bL \bL^\top \bw_j$
\ENDFOR
\OUTPUT $\widehat{\bW} = [\bP^{(1)} \bc^{(1)}, \ldots, \bP^{(d_\mathrm{out})} \bc^{(d_\mathrm{out})}]$.
\end{algorithmic}
\end{algorithm}

\section{Layer-wise Non-uniform Quantization}

% GuidedQuant leverages a layer-wise quantization algorithm to optimize our proposed objective in Problem \eqref{eq:ours2}. 
% Consequently, the design of the underlying quantization algorithm that solves the corresponding quadratic program can largely impact the overall performance.
% We observe that, in the context of non-uniform scalar quantization, existing layer-wise quantization algorithms address this problem in a suboptimal manner.
% To address this gap, we introduce a novel method which overcomes these limitations, which we denote as Layer-wise Non-uniform Quantization (LNQ). LNQ achieves superior performance compared to existing non-uniform scalar quantization method based on layer-wise reconstruction error, and can be seamlessly integrated with the GuidedQuant objective.

The choice of the layer-wise output based quantization method $\mathcal{Q}$ in 
GuidedQuant is critical to its overall performance. For weight-only non-uniform scalar quantization, the current state-of-the-art layer-wise output based quantization method is the 1D variant of GPTVQ \citep{van2024gptvq}, which alternates between optimizing the codebook via gradient descent and the assignments via GPTQ algorithm \citep{frantar2022gptq}. However, both of these steps can be improved. Given fixed assignments, the codebook admits an optimal closed form solution. Also, for optimizing assignments, recent works have demonstrated that coordinate descent (CD) methods outperform GPTQ in uniform weight-only quantization \citep{behdin2023quantease,nair2024cdquant}. In this section, we introduce Layer-wise Non-uniform Quantization (LNQ), an alternating minimization algorithm which leverages the closed form solution for the codebook and employs CD to optimize the assignments. We then discuss its theoretical guarantees, as well as the memory cost and computational complexity under our efficient implementation.
%LNQ consistently outperforms GPTVQ 1D and integrates seamlessly with the GuidedQuant framework. LNQ further ensures that the minimization objective does not increase during iterations, and guarantees convergence (\cref{prop:lnq}). 

\subsection{Optimization Problem}

%We begin by formulating the non-uniform scalar quantization problem with a layer-wise reconstruction objective and describing our method (LNQ) under this objective. It is worth noting that LNQ is fully compatible with the GuidedQuant objective, and in \cref{sec:exp-wo}, we present results for both LNQ alone and LNQ combined with GuidedQuant (\cref{tab:scalar}).

We omit the layer index $l$ for notational simplicity throughout this section.
Following prior work, we assign to each output channel a separate codebook, though LNQ can be easily adapted to finer-granularity grouping.
Non-uniform scalar quantization maps each scalar weight in the column $\mathbf{w}_j \in \mathbb{R}^{d_\mathrm{in}}$ to one of $m=2^b$ real values $\{c^{(j)}_1, \ldots, c^{(j)}_m\}$, where $b\in \mathbb{N}$ is the target bit-width. The quantized weights $\widehat{\mathbf{w}}_j$ can then be expressed as $\widehat{\mathbf{w}}_j = \bP^{(j)} \mathbf{c}^{(j)}$, where $\mathbf{c}^{(j)} \in \mathbb{R}^m$ is the vector containing the codebook values $\{c^{(j)}_1, \ldots, c^{(j)}_m\}$, and $\bP^{(j)} \in \{0, 1\}^{d_\mathrm{in} \times m}$ is the assignment matrix such that $P^{(j)}_{iq} = 1$ if $W_{ij}$ is assigned to $c^{(j)}_q$, and $P^{(j)}_{iq} = 0$ otherwise.

The optimization problem for layer-wise output-based non-uniform scalar quantization 
%Minimizing the layerwise output error objective \eqref{eq:layerwiseObj}
can then be written as follows:
%\vspace{-1.5em}
%\normalsize
\begin{align}
\label{eq:nuq-master}
\underset{\substack{\bP^{(j)} \in \{0, 1\}^{d_\mathrm{in} \times m} \\ \mathbf{c}^{(j)} \in \mathbb{R}^m}}{\mathrm{minimize}}\;\; 
% \min_{\underset{\mathbf{c}^{(j)} \in \mathbb{R}^m}{\bP^{(j)} \in \{0, 1\}^{d_\mathrm{in} \times m}}} ~~
&\sum_{j=1}^{d_{\mathrm{out}}}\| \bX\mathbf{w}_j - \bX\bP^{(j)}\mathbf{c}^{(j)} \|_2^2 \nonumber \\
%&~~~~~~~
\mathrm{subject\;to\;}~~~~ &\bP^{(j)}\mathbf{1}_m = \mathbf{1}_{d_\mathrm{in}}, 
\end{align}
%\normalsize
%where $\bX \in \mathbb{R}^{n \times d_\mathrm{in}}$ is the input feature map and the $n$ is the number of calibration data.
where $\mathbf{1}$ is the vector of all ones.
Note that the optimization for each column $j$ is independent of other columns, and can be done in parallel. %In what follows, we thus omit the index $j$ for notational simplicity.
%A common strategy for optimizing the problem that involves both continuous and discrete variables (as in Problem \eqref{eq:nuq-master}) is to alternate between optimizing continuous variable $\mathbf{c}$ and discrete variable $\bP$ while fixing the other.
%We propose an improved method in this framework, but we first discuss the suboptimal practice used in prior work and motivate our approach, highlighting the difference.

\subsection{LNQ Algorithm} \label{sec:lnq_alg}

%Existing methods to solve Problem \eqref{eq:nuq-master} include GPTVQ 1D, ....
%Problem \eqref{eq:nuq-master} is NP-Hard. In fact, even optimizing the assignments $\bP^{(j)}$ for a fixed codebook $\mathbf{c}^{(j)}$ corresponds to a closest vector problem, which is NP-Hard to approximate within almost-polynomial factors \cite{Dinur1998}. %CVP is equivalent to an ILS problem where the matrix has full col rank which can be reduced to an ILS problem with invertible matrix using QR decomposition, and in turn can be reduced to the quantization problem by taking x = A^{-1} y.

%corresponds to a quadratic semi-assignment problem, which is generally NP-hard \citep[Section 2.4]{silva2021quadratic}.  

We propose LNQ, an alternating minimization algorithm, which iteratively updates the codebook $\mathbf{c}^{(j)}$ and assignment matrix $\bP^{(j)}$ for each $j$, optimizing one while keeping the other fixed.
Alternating minimization is a common strategy used by most non-uniform quantization methods, including SqueezeLLM and GPTVQ.
LNQ quantizes each layer independently.
We present an overview of LNQ, applied to one layer with weights $\bW \in  \mathbb{R}^{d_\mathrm{in} \times d_\mathrm{out}}$ in \cref{alg:lnq}.

%In the context of LLM quantization, the 1D variant of GPTVQ is the state-of-the-art algorithm that proposes an alternating algorithm to address Problem \eqref{eq:nuq-master} \citep{van2024gptvq}. Specifically, given codebook $\bc$, they first update the discrete assignment $\bP$ using GPTQ algorithm, which iteratively rounds subsets of unquantized weights to the nearest centroid in $\bc$ while correcting the remaining weights to reduce the quantization error. Once $\bP$ is determined, GPTVQ further fine-tunes the continuous codebook $\bc$ with gradient descent algorithm.

% However, observe that for the fixed assignment $\bP$, Problem \eqref{eq:nuq-master} becomes least squares problem, which has a closed-form solution $\bc^*$ for $\bc$ as follows:

Given fixed assignment matrices $\mathbf{P}^{(j)}$, Problem \eqref{eq:nuq-master} reduces to a standard least-squares problem, which admits a closed-form optimal solution  $\bc^{(j)*} = (\bX \bP^{(j)})^\dagger \bX \bw_j$, where $\dagger$ denotes the Moore–Penrose pseudoinverse.
We assume that the matrix $\bP^{(j)}{}^\top \bH \bP^{(j)}$ is invertible, where recall that $\bH = \bX^\top \bX$.
Under this assumption, the closed-form solution is:
\begin{align}
\label{eq:conti-sol}
\mathbf{c}^{(j)*} = \left(\bP^{(j)}{}^\top \bH \bP^{(j)}\right)^{-1} \bP^{(j)}{}^\top \bH \bw_j.
\end{align}
In practice, $\bP^{(j)}{}^\top \bH \bP^{(j)}$ is not always invertible, even when $\bH$ is invertible (for example if no weight is assigned to a given codebook value $c^{(j)}_q$). To address this, we add a small constant $\lambda = 10^{-7}$ to the diagonal of the matrix, as commonly done in prior work \citep{frantar2022optimal, frantar2022gptq, van2024gptvq}.
In our implementation, we use \texttt{torch.linalg.lstsq} function to compute the least squares solution in Eq. \eqref{eq:conti-sol}, which takes $\bX\bP^{(j)}$ and $\bX\bw_j$ as inputs.
However, since $\bX$ is not explicitly stored, we compute the Cholesky decomposition of $\bH = \bX^\top \bX$, denoted as $\bH = \bL \bL^\top$, and instead provide $\bL^\top\bP^{(j)}$ and $\bL\bw_j$ to the solver.
Because Cholesky decomposition requires $\bH$ to be positive definite, we ensure this by adding a small constant to the diagonal of $\bH$.

For fixed codebooks $\bc^{(j)}$, Problem \eqref{eq:nuq-master} can be equivalently written as 
\begin{equation}\label{eq:nuq-fixedC}
    \underset{\widehat{\bw}_j \in \{c_1^{(j)}, \ldots, c_m^{(j)}\}^{d_\mathrm{in}}}{\mathrm{minimize}}
    \sum_{j=1}^{d_\mathrm{out}} (\widehat{\bw}_j - \bw_j)^\top \bH (\widehat{\bw}_j - \bw_j).
\end{equation}
%As discussed above, 
Even in the special case of uniform codebook, this problem  corresponds to a closest vector problem with box constraints, which is NP-Hard %even for $m=2$ \cite{micciancio2001hardness}, and is even NP-Hard
to approximate within any constant factor approximation for $m\geq 2$ \citep[Theorem 1]{arora1997hardness}.  
Existing heuristics for solving it include OBQ \cite{frantar2022optimal} which does not scale to LLMs with billions of parameters; its faster variant GPTQ \cite{frantar2022gptq};  LDLQ \cite{chee2024quip}, which is a more efficient implementation of GPTQ; greedy CD  \cite{nair2024cdquant}; and cyclic CD \citep{behdin2023quantease, chee2024quip,egiazarian2024extreme}.
%GPTQ is the most commonly used heuristic used to solve it \citep{frantar2022gptq}.  
Recent works show that both greedy CD  \cite{nair2024cdquant} and cyclic CD \citep{behdin2023quantease} outperform
GPTQ on this problem when using a uniform grid.
We thus adopt the cyclic CD algorithm, since it performs similarly to the greedy variant while being significantly less expensive \citep[Appendix D]{nair2024cdquant}.
In \Cref{sec:abl_disc}, we present an ablation study that further support this choice, showing that cyclic CD matches or outperforms GPTQ when used within LNQ for non-uniform scalar quantization.

%We prefer the coordinate descent algorithm over the GPTQ algorithm used in GPTVQ, as recent studies have shown that coordinate descent consistently achieves superior performance in minimizing layer-wise reconstruction errors \citep{behdin2023quantease, nair2024cdquant}. 

Cyclic CD is an iterative algorithm which iterates over coordinates in a fixed order, minimizing at each iteration the objective with respect to one coordinate, while keeping all others fixed. % \cite{tseng2001convergence}. 
The minimization for each coordinate (Line 8 in \cref{alg:lnq}) has a closed form solution, as shown in  \citet[Lemma 1]{behdin2023quantease} and \citet[Section B.2]{chee2024quip}: \looseness=-1
\begin{equation}\label{eq:cd_closed}
   \hspace{-10pt} \round_j\! \left(\!W_{i,j} - \frac{\bH_{i, [d_\mathrm{in}] \setminus i}}{H_{i,i}} (\widehat{\bW}_{[d_\mathrm{in}] \setminus i, j} - \bW_{[d_\mathrm{in}] \setminus i, j})\!\right),
\end{equation}
where $\round_j(\cdot)$ denotes rounding to the nearest point in the grid $\{c_1^{(j)}, \ldots, c_m^{(j)}\}$.

CD is a descent method when initialized with a feasible solution $\widehat{\bw}_j \in \{c_1^{(j)}, \ldots, c_m^{(j)}\}^{d_\mathrm{in}}$, i.e., it monotonically decreases the objective function value. It can be used as a standalone solver for problem \eqref{eq:nuq-fixedC} initialized with the original weights $\bW$, as in \citet{nair2024cdquant, behdin2023quantease},  or  to refine the output of another quantization method, as done in the uniform quantization method QuIP, which runs CD after LDLQ \citep{chee2024quip}. %initialized with its outputted quantized weights. In the latter case, the solution is guaranteed to not deteriorate. 

In LNQ,  at each iteration, we initialize CD with the quantized weights corresponding to the current assignment and codebook $\hat{\mathbf{\bw}}_{j} = \bP^{(j)} \bc^{(j)}$ for each $j$. For the first iteration, any feasible assignment matrix can be used. In our experiments, we initialize with the assignments from SqueezeLLM.  
Since the codebooks are updated optimally and CD acts as descent method with feasible initialization, it follows that LNQ itself is a descent method and it converges. Refer to \cref{app:proof} for the proof.%as shown in \cref{prop:lnq}. 

%Another key advantage of the proposed LNQ is its guarantee that the objective function value does not increase at each iteration and the algorithm converges, 
\begin{restatable}{proposition}{lnq} \label{prop:lnq} For any $j \in [d_\mathrm{out}]$, 
%define $f_j(\cdot, \cdot): \mathbb{R}^m \times \{0, 1\}^{d_\mathrm{in} \times m} \rightarrow \mathbb{R}$ as 
let $f_j(\bc, \bP) = \left\lVert  \bX\mathbf{w}_j - \bX\bP\mathbf{c} \right\rVert_2^2$, and
let $\bc_{t}^{(j)}$ and $\bP_{t}^{(j)}$ denote $\bc^{(j)}$ and $\bP^{(j)}$ at the $t$-th iteration of LNQ. Then, $f_j(\bc_{t}^{(j)}, \bP_{t}^{(j)}) \geq f_j(\bc_{{t}+1}^{(j)}, \bP_{t}^{(j)}) \geq f_j(\bc_{{t}+1}^{(j)}, \bP_{{t}+1}^{(j)})$ for all $t$, and the sequence $\{f_j(\bc_{t}^{(j)}, \bP_{t}^{(j)})\}_{t \geq 1}$ converges.
\end{restatable}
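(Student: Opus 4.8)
The plan is to show that each iteration of \cref{alg:lnq} performs two updates, neither of which increases the per-channel objective $f_j$, and then to conclude by monotone convergence. Fix an output channel $j$ and drop the superscript $(j)$. One iteration of the inner loop consists of (i) the codebook update (Line 4), which replaces $\bc_t$ by $\bc_{t+1}$ while the assignment $\bP_t$ is held fixed, followed by (ii) a run of cyclic coordinate descent (the nested sweeps over $k$ and $i$), which updates the assignment from $\bP_t$ to $\bP_{t+1}$ while the codebook $\bc_{t+1}$ is held fixed. I would prove the two inequalities in the statement by treating these updates in turn.

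For update (i): with $\bP_t$ fixed, the $j$-th term of Problem \eqref{eq:nuq-master} is an ordinary least-squares problem in $\bc$, and Line 4 sets $\bc_{t+1}$ to its exact minimizer, namely Eq. \eqref{eq:conti-sol} under the standing invertibility assumption on $\bP_t^\top\bH\bP_t$ (and $\bc_{t+1}=(\bX\bP_t)^\dagger\bX\bw_j$ in general). Hence $f_j(\bc_{t+1},\bP_t)=\min_{\bc}f_j(\bc,\bP_t)\le f_j(\bc_t,\bP_t)$, which is the first inequality. For update (ii): after Line 5 the algorithm holds $\hbw_j=\bP_t\bc_{t+1}$, whose entries all lie in the grid $\{c_1,\dots,c_m\}$, so this is a \emph{feasible} initialization for coordinate descent on Problem \eqref{eq:nuq-fixedC} with codebook $\bc_{t+1}$. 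Since each coordinate step (Line 8, closed form \eqref{eq:cd_closed}) exactly minimizes that objective over the single coordinate $\widehat W_{ij}$, the objective is non-increasing along the whole run; reading $\bP_{t+1}$ off the terminal $\hbw_j$ and using that $(\hbw_j-\bw_j)^\top\bH(\hbw_j-\bw_j)=\|\bX\bw_j-\bX\bP\bc\|_2^2=f_j(\bc,\bP)$ whenever $\hbw_j=\bP\bc$ (as $\bH=\bX^\top\bX$), we obtain $f_j(\bc_{t+1},\bP_{t+1})\le f_j(\bc_{t+1},\bP_t)$, the second inequality.

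Chaining the two inequalities shows $\{f_j(\bc_t,\bP_t)\}_{t\ge1}$ is non-increasing, and it is bounded below by $0$ since $f_j\ge0$; the monotone convergence theorem for real sequences then gives convergence, finishing the proof. I do not expect a substantial obstacle: the only delicate points are that the codebook update be an \emph{exact} minimizer (guaranteed by the invertibility assumption, or by the pseudoinverse, matching the \texttt{torch.linalg.lstsq} call in the implementation) and that the vector passed to coordinate descent at each iteration be grid-feasible (true by construction, being $\bP_t\bc_{t+1}$). If one wishes to account for the diagonal regularization $\lambda$ used in practice, the identical argument yields monotone decrease of the correspondingly regularized objective.
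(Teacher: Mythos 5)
Your proof is correct and follows essentially the same route as the paper's: the first inequality comes from the exactness of the closed-form codebook update, the second from CD being a descent method when initialized at the grid-feasible point $\bP_t\bc_{t+1}$, and convergence from monotone non-increase plus nonnegativity. Your version is slightly more careful in spelling out why the CD initialization is feasible and in noting the equivalence $(\hbw_j-\bw_j)^\top\bH(\hbw_j-\bw_j)=f_j(\bc,\bP)$, but the argument is the same.
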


%This is achieved through adopting closed-form solution for optimizing $\bc$ and employing the coordinate descent algorithm for optimizing $\bP$.
%We repeat this alternating optimization process multiple iterations, whereas GPTVQ terminates after a single iteration.

%It is worth noting that both LNQ and Lloyd's algorithm \citep{lloyd1982least} alternate optimization between continuous and discrete variables, adopting optimal solutions for continuous variables. However, LNQ uses coordinate descent for discrete variables, whereas Lloyd's algorithm directly computes optimal discrete solutions. Finding the optimal solution for a discrete variable is possible for in Lloyd's algorithm as it addresses a simpler objective that does not include pairwise term.

Since LNQ is a layer-wise output based method, GuidedQuant can be easily applied to it.
%by replacing $\bH$ by $\Bar{H}_k$ for each output channel $j$ in group $k$.
%Importantly, LNQ is fully compatible with the GuidedQuant. 
In \cref{sec:exp-wo}, we demonstrate the efficacy of LNQ both as a standalone approach and in combination with the GuidedQuant objective. %For the initial codebook $\bc$ and assignment $\bP$, we start from the quantized result of SqueezeLLM \citep{kim2023squeezellm}, which optimizes a Hessian-weighted $k$-means objective using Lloyd’s algorithm \citep{lloyd1982least}.

\paragraph{Time Complexity.} 
%We provide a summary of the LNQ for a weight matrix $\bW \in \mathbb{R}^{d_\mathrm{in} \times d_\mathrm{out}}$ in \cref{alg:lnq}.
Computing the Cholesky decomposition of $\bH$ (Line 1) requires $O(d_\mathrm{in}^3)$, optimizing the codebook (Line 4 and 14) requires $O(d_\mathrm{in}^2 m)$, and optimizing the codes (Lines 6-12) requires  $O(d_\mathrm{in}^2 K)$ time complexity.
The total time complexity of LNQ algorithm is $O(d_\mathrm{in}^3 + d_\mathrm{in}^2 d_\mathrm{out} T (m + K))$. Here, $T$ and $K$ denotes the number of iterations for alternating optimization and the number of cycles in coordinate descent, respectively.
We provide a detailed analysis of the time complexity in \cref{app:lnq_detail}. 
We discuss in \cref{sec:CDefficient} how to significantly speedup the implementation of CD on GPU, using precomputation and lazy batch-updates. Precomputation is also used in \citet{behdin2023quantease, chee2024quip}, while lazy batch-updates is only used in \citet{chee2024quip} (though not discussed in the paper). These tricks do no change the theoretical time complexity of CD, but they yield up to $3\times$ speedups in practice.

\section{Experiments}\label{sec:exp}

\begin{table}[t]
\caption{End-to-end inference throughput of Llama-2 models on RTX 4090 GPU. OOM indicates an Out-of-Memory error, meaning the GPU lacks memory to run model inference. See \cref{sec:inferSetup} for experimental setup details.}
\vspace{0.5em}
\centering
\begin{adjustbox}{max width=1.0\columnwidth}
\begin{tabular}{l cc cc cc}
\toprule
& \multicolumn{2}{c}{Llama-2-7B} & \multicolumn{2}{c}{Llama-2-13B} & \multicolumn{2}{c}{Llama-2-70B} \\
\cmidrule(lr){2-3}\cmidrule(lr){4-5}\cmidrule(lr){6-7}
Type &  
Bits$\downarrow$ &  
Tok/s$\uparrow$ & 
Bits$\downarrow$ &  
Tok/s$\uparrow$ &
Bits$\downarrow$ & 
Tok/s$\uparrow$ 
\\
% \cmidrule(lr){1-1}\cmidrule(lr){2-4}\cmidrule(lr){5-7}\cmidrule(lr){8-10}
% \cmidrule(lr){1-2}\cmidrule{3-4}\cmidrule(lr){5-6}\cmidrule{7-8}
\cmidrule(lr){1-1}\cmidrule(lr){2-3}\cmidrule(lr){4-5}\cmidrule(lr){6-7}
Original 
& 16 & 67
& 16 & OOM
& 16 & OOM
\\
\cmidrule(lr){1-1}\cmidrule(lr){2-3}\cmidrule(lr){4-5}\cmidrule(lr){6-7}
% AWQ & 
%  &  &  &  & 
%  &  &  &  & 
%  &  &  &  \\
% OmniQuant & 
%  &  &  &  & 
%  &  &  &  & 
%  &  &  &  \\
Uniform scalar 
&2.00 & 334
&2.00 & 200
&2.00 & 47
\\
Non-uniform scalar
& 2.01 & 347
& 2.01 & 203
& 2.01 & 47
\\
% QuIP\# 
% & 2.00 &
% & 2.00 &
% & 2.00 &
% \\
% AQLM 
% & 2.02 &
% & 2.19 &
% & 2.07 &
% \\
Vector
& 2.00 & 200
& 2.00 & 121
& 2.00 & 38
\\
\cmidrule(lr){1-1}\cmidrule(lr){2-3}\cmidrule(lr){4-5}\cmidrule(lr){6-7}
Uniform scalar
& 3.00 & 260
& 3.00 & 150
& 3.00 & OOM
\\
Non-uniform scalar
& 3.03 & 264
& 3.02 & 148
& 3.01 & OOM
\\
% QuIP\# 
% & 3.00 &
% & 3.00 &
% & 3.00 &
% \\
% AQLM 
% & 3.04 &
% & 3.04 &
% & 3.04 &
% \\
Vector
& 3.00 & 176
& 3.00 & 103
& 3.00 & OOM
\\
\cmidrule(lr){1-1}\cmidrule(lr){2-3}\cmidrule(lr){4-5}\cmidrule(lr){6-7}
Uniform scalar
& 4.00 & 214
& 4.00 & 121
& 4.00 & OOM
\\
Non-uniform scalar
& 4.05 & 209
& 4.04 & 116
& 4.03 & OOM
\\
% QuIP\#
% & 4.00 &
% & 4.00 &
% & 4.00 &
% \\
% AQLM 
% & 4.04 &
% & 3.94 &
% & 4.14 &
% \\
Vector
& 4.00 & 151
& 4.00 & 89
& 4.00 & OOM
\\
\bottomrule
\end{tabular}
\end{adjustbox}
\label{tab:speedup}
% \vspace{-2em}
\end{table}

\begin{table*}[t]
\caption{Weight-only scalar post-training quantization results \textit{without fine-tuning} with end-to-end loss. Wiki2 and C4 denotes perplexity on WikiText2 and C4, respectively. The perplexity is measured with the context size of 4096.}
\vspace{0.5em}
\centering
\begin{adjustbox}{max width=1.69\columnwidth}
\begin{tabular}{l ccc ccc ccc}
\toprule
& \multicolumn{3}{c}{Llama-2-7B} & \multicolumn{3}{c}{Llama-2-13B} & \multicolumn{3}{c}{Llama-2-70B} \\
\cmidrule(lr){2-4}\cmidrule(lr){5-7}\cmidrule(lr){8-10}
Method & 
Bits$\downarrow$ & 
Wiki2$\downarrow$ & 
C4$\downarrow$ & 
% Tok/s$\uparrow$ & 
Bits$\downarrow$ & 
Wiki2$\downarrow$ & 
C4$\downarrow$ &
% Tok/s$\uparrow$ &
Bits$\downarrow$ & 
Wiki2$\downarrow$ &
C4$\downarrow$ %&
% Tok/s$\uparrow$ 
\\
% \cmidrule(lr){1-1}\cmidrule(lr){2-4}\cmidrule(lr){5-7}\cmidrule(lr){8-10}
\cmidrule(lr){1-1}\cmidrule(lr){2-2}\cmidrule{3-4}\cmidrule(lr){5-5}\cmidrule{6-7}\cmidrule(lr){8-8}\cmidrule{9-10}
Original & 
16 & 5.12 & 6.63 &  %& 
16 & 4.57 & 6.05 &  %& 
16 & 3.12 & 4.97 \\%& OOM \\
% \cmidrule(lr){1-1}\cmidrule(lr){2-4}\cmidrule(lr){5-7}\cmidrule(lr){8-10}
\cmidrule(lr){1-1}\cmidrule(lr){2-2}\cmidrule{3-4}\cmidrule(lr){5-5}\cmidrule{6-7}\cmidrule(lr){8-8}\cmidrule{9-10}
% AWQ & 
%  &  &  &  & 
%  &  &  &  & 
%  &  &  &  \\
% OmniQuant & 
%  &  &  &  & 
%  &  &  &  & 
%  &  &  &  \\
QuIP & 
-- & -- & -- & %-- & 
2.00 & 13.48 & 16.16 &%  &
2.01 & 5.90 & 8.17 %&  
\\
SqueezeLLM &  
2.01 & 39.58 & 44.05 & %&
2.01 & 16.24 & 19.20 &  %&
2.01 & 9.17 & 13.03 \\%&  \\
GPTVQ 1D  & 
% 2.13 & 14.61 & 16.95 & %-- & 
% 2.13 & 8.39 & 11.20 & %-- &
2.03 & 51.87 & 47.33 & %-- & 
2.03 & 9.53 & 12.62 & %-- &
2.03 & 6.03 & 8.44 \\
% 2.13 & 5.20 & 7.39 & -- \\
LNQ  (Ours) & 
2.01 & 23.31 & 26.71 &  
2.01 & 8.78 & 11.80 &
2.01 & 5.23 & 7.31 \\
LNQ + GuidedQuant (Ours) & 
2.01 & \textbf{8.83} & \textbf{11.15} & 
2.01 & \textbf{7.26} & \textbf{9.17} & 
2.01 & \textbf{5.04} & \textbf{7.04} \\
\cmidrule(lr){1-1}\cmidrule(lr){2-2}\cmidrule{3-4}\cmidrule(lr){5-5}\cmidrule{6-7}\cmidrule(lr){8-8}\cmidrule{9-10}
%  \cmidrule(lr){1-1}\cmidrule(lr){2-4}\cmidrule(lr){5-7}\cmidrule(lr){8-10}
GPTQ & 
3.00 & 8.06 & 10.61 &  %& 
3.00 & 5.85 &  7.86 &  %& 
3.00 & 4.40 &  6.26 \\%&  \\
% AWQ & 
%  &  &  &  & 
%  &  &  &  & 
%  &  &  &  \\
% OmniQuant & 
%  &  &  &  & 
%  &  &  &  & 
%  &  &  &  \\
QuIP  & 
-- & -- & -- & %-- & 
3.00 & 5.12 &  6.79 &  %& 
3.01 & 3.87 &  5.67 \\%&  \\
SqueezeLLM  &  
3.03 & 5.74 & 7.44 &  %& 
3.02 & 4.99 & 6.60 &  %&
3.01 & 3.53 & 5.31 \\%&  \\
GPTVQ 1D  &  
% 3.13 & 5.93 & 7.73 &  %& 
% 3.13 & 5.05 & 6.67 &  %& 
3.03 & 6.17 & 8.02 &  %& 
3.03 & 5.13 & 6.76 &  %& 
3.03 & 3.55 & 5.35
% 3.13 & 3.50 & 5.29 &  \\
\\
LNQ  (Ours) & 
3.03 & 5.89 & 7.74 &  
3.02 & 5.02 & 6.68 &  
3.01 & 3.50 & 5.31 \\
LNQ + GuidedQuant (Ours) & 
3.03 & \textbf{5.57} & \textbf{7.22} &  
3.02 & \textbf{4.91} & \textbf{6.49} &  
3.01 & \textbf{3.47} & \textbf{5.27} \\
\cmidrule(lr){1-1}\cmidrule(lr){2-2}\cmidrule{3-4}\cmidrule(lr){5-5}\cmidrule{6-7}\cmidrule(lr){8-8}\cmidrule{9-10}
%  \cmidrule(lr){1-1}\cmidrule(lr){2-4}\cmidrule(lr){5-7}\cmidrule(lr){8-10}
GPTQ & 
4.00 & 5.49 & 7.20 &  %& 
4.00 & 4.78 & 6.34 &  %& 
4.00 & 3.35 & 5.15 \\%&  \\
% AWQ & 
%  &  &  &  & 
%  &  &  &  & 
%  &  &  &  \\
% OmniQuant & 
%  &  &  &  & 
%  &  &  &  & 
%  &  &  &  \\
QuIP & 
 -- & -- & -- & %-- & 
 4.00 & 4.76 & 6.29 &  %& 
 4.00 & 3.58 &  5.38 \\%&  \\
SqueezeLLM & 
4.05 & 5.23 & 6.78 &  %& 
4.04 & 4.67 & 6.15 &  %& 
4.03 & \textbf{3.20} & 5.04 \\%&  \\
GPTVQ 1D &  
% 4.13 & 5.28 & 6.81 &  %& 
% 4.13 & 4.66 & 6.15 &  %& 
4.06 & 5.27 & 6.83 &  %& 
4.06 & 4.67 & 6.17 &  %& 
4.03 & \textbf{3.20} & 5.04
% 4.08 & 3.19 & 5.02
% 4.13 & 3.19 & 5.02 &  \\
\\
LNQ  (Ours) & 
4.05  & 5.26 & 6.82 &  
4.04  & 4.67 & 6.17 &  
4.03 & \textbf{3.20} & 5.04 \\
LNQ + GuidedQuant (Ours) & 
4.05 & \textbf{5.21} & \textbf{6.75} &  
4.04 & \textbf{4.65} & \textbf{6.14} &  
4.03 & \textbf{3.20} & \textbf{5.03} \\
\bottomrule
\end{tabular}
\end{adjustbox}
\label{tab:scalar}
% \vspace{-1em}
\end{table*}

\begin{table*}[t]
\caption{Weight-only vector post-training quantization results \textit{without fine-tuning} to the end-to-end loss. Wiki2 and C4 denotes perplexity on WikiText2 and C4, respectively. The perplexity is measured with the context size of 4096.}
\vspace{0.5em}
\centering
\begin{adjustbox}{max width=1.69\columnwidth}
\begin{tabular}{l ccc ccc ccc}
\toprule
& \multicolumn{3}{c}{Llama-2-7B} & \multicolumn{3}{c}{Llama-2-13B} & \multicolumn{3}{c}{Llama-2-70B} \\
\cmidrule(lr){2-4}\cmidrule(lr){5-7}\cmidrule(lr){8-10}
Method & 
Bits$\downarrow$ & 
Wiki2$\downarrow$ & 
C4$\downarrow$ & 
% Tok/s$\uparrow$ &
Bits$\downarrow$ & 
Wiki2$\downarrow$ & 
C4$\downarrow$ & 
% Tok/s$\uparrow$ &
Bits$\downarrow$ & 
Wiki2$\downarrow$ & 
C4$\downarrow$  \\
% Tok/s$\uparrow$ \\
\cmidrule(lr){1-1}\cmidrule(lr){2-2}\cmidrule{3-4}\cmidrule(lr){5-5}\cmidrule{6-7}\cmidrule(lr){8-8}\cmidrule{9-10}
% \cmidrule(lr){1-1}\cmidrule(lr){2-4}\cmidrule(lr){5-7}\cmidrule(lr){8-10}
Original & 
16 & 5.12 & 6.63 &  %& 
16 & 4.57 & 6.05 &  %& 
16 & 3.12 & 4.97 \\%&  \\
\cmidrule(lr){1-1}\cmidrule(lr){2-2}\cmidrule{3-4}\cmidrule(lr){5-5}\cmidrule{6-7}\cmidrule(lr){8-8}\cmidrule{9-10}
% \cmidrule(lr){1-1}\cmidrule(lr){2-4}\cmidrule(lr){5-7}\cmidrule(lr){8-10}
GPTVQ 2D&
2.13 & 10.66 & 12.81 & %-- & 
2.13 & 7.55 & 9.82 & %-- & 
2.13 & 5.06 & 7.09 \\%& -- \\
GPTVQ 4D&
2.25 & 7.89 & 10.25 & %-- & 
2.25 & 6.36 & 8.43 & %-- & 
2.25 & 4.44 & 6.28 \\%& -- \\
QuIP\# &
2.00 & 8.22 & 11.01 &  %& 
2.00 & 6.06 & 8.07 &  %& 
2.00 & 4.16 & 6.01 \\%&  \\
AQLM & 
2.02 & 6.59 & 8.54 &  %& 
2.19 & 5.37 & 7.16 &  %& 
2.07 & 3.94 & 5.72 \\%&  \\
QTIP & 
2.00 & 6.82 & 8.96 &  %& 
2.00 & 5.52 & 7.39 &  %& 
2.00 & 3.87 & 5.69 \\%&  \\
QTIP + GuidedQuant (Ours) & 
2.00 & \textbf{6.11} & \textbf{7.99} &  %& 
2.00 & \textbf{5.33} & \textbf{7.05} &  %& 
2.00 & \textbf{3.80} & \textbf{5.61} \\%&  \\
\cmidrule(lr){1-1}\cmidrule(lr){2-2}\cmidrule{3-4}\cmidrule(lr){5-5}\cmidrule{6-7}\cmidrule(lr){8-8}\cmidrule{9-10}
% \cmidrule(lr){1-1}\cmidrule(lr){2-4}\cmidrule(lr){5-7}\cmidrule(lr){8-10}
GPTVQ 2D&
3.13 & 5.63 & 7.32 & %-- & 
3.13 & 4.87 & 6.45 & %-- & 
3.13 & 3.38 & 5.18  \\%&-- \\
QuIP\# & 
3.00 & 5.60 & 7.34 &  %& 
3.00 & 4.90 & 6.50 &  %& 
3.00 & 3.41 & 5.20 \\%&  \\
AQLM & 
3.04 & 5.46 & 7.08 &  %& 
3.03 & 4.82 & 6.37 &  %& 
3.01 & 3.36 & 5.17 \\%&  \\
QTIP & 
3.00 & 5.38 & 6.99 &  %& 
3.00 & 4.74 & 6.28 &  %& 
3.00 & 3.27 & 5.09 \\%&  \\
QTIP + GuidedQuant (Ours) & 
3.00 & \textbf{5.28} & \textbf{6.87} &  %& 
3.00 & \textbf{4.71} & \textbf{6.22} &  %& 
3.00 & \textbf{3.25} & \textbf{5.08} \\%&  \\
\cmidrule(lr){1-1}\cmidrule(lr){2-2}\cmidrule{3-4}\cmidrule(lr){5-5}\cmidrule{6-7}\cmidrule(lr){8-8}\cmidrule{9-10}
% \cmidrule(lr){1-1}\cmidrule(lr){2-4}\cmidrule(lr){5-7}\cmidrule(lr){8-10}
GPTVQ 2D&
4.13 & 5.24 & 6.77 & %-- & 
4.13 & 4.65 & 6.13 & %-- & 
4.13 & 3.18 & 5.01 \\%& -- \\
QuIP\# & 
4.00 & 5.22 & 6.79 &  %& 
4.00 & 4.65 & 6.15 &  %& 
4.00 & 3.18 & 5.02 \\%&  \\
AQLM & 
4.04 & 5.21 & 6.75 &  %& 
3.94 & 4.65 & 6.14 &  %& 
4.14 & 3.19 & 5.03 \\%&  \\
QTIP & 
4.00 & 5.17 & 6.71 &  %& 
4.00 & 4.62 & 6.10 &  %& 
4.00 & 3.16 & \textbf{5.00} \\%&  \\
QTIP + GuidedQuant (Ours) & 
4.00 & \textbf{5.16} & \textbf{6.68} &  %& 
4.00 & \textbf{4.61} & \textbf{6.09} &  %& 
4.00 & \textbf{3.15} & \textbf{5.00} \\%&  \\
\bottomrule
\end{tabular}
\end{adjustbox}
\label{tab:vector}
% \vspace{-1em}
\end{table*}

% In this section, we present experimental results that underscore the versatility and effectiveness of our method across multiple quantization schemes. We begin by examining various quantization scenarios and identifying the formats best suited to each setting, ultimately focusing on three main approaches: weight-only scalar, weight-only vector, and weight-and-activation quantization. We then detail the results of weight-only quantization, demonstrating that our method achieves state-of-the-art PTQ performance, both in scalar and vector quantization.
% Lastly, we present findings on weight-and-activation quantization, where incorporating the GuidedQuant objective into existing state-of-the-art methods further enhances overall performance.
% Refer to \cref{sec:gq_diff_quant} for details on how we incorporate GuidedQuant objective into existing methods.
% Additional experiments and details, including the overall cost of our method, the effect of the number of groups $g$, and the end-to-end fine-tuning results, are provided in \cref{sec:add}.

In this section, we demonstrate the versatility and effectiveness of our method across various quantization schemes.
We first explore different quantization scenarios and identify the formats best suited to each setting, ultimately focusing on three main approaches: weight-only scalar, weight-only vector, and weight-and-activation quantization.
By integrating the GuidedQuant objective into existing methods, our results consistently achieve state-of-the-art PTQ performance. 
Refer to \cref{sec:gq_diff_quant} for details on how we incorporate GuidedQuant objective into existing methods.
Additional experiments and details, including the overall cost of our method, the effect of the number of groups $g$, and the end-to-end fine-tuning results, are provided in \cref{sec:add}.

\subsection{Weight-only Quantization}

\textbf{Experimental Setup.}
Weight-only quantization primarily accelerates inference latency in low-batch scenarios, where memory bandwidth constitutes the main bottleneck \citep{gholami2024ai}. Among weight-only techniques, three quantization formats are commonly used: \emph{uniform scalar}, \emph{non-uniform scalar}, and \emph{vector} quantization \citep{frantar2022gptq,kim2023squeezellm,tseng2024qtip}. With fixed bit-width constraints, non-uniform scalar quantization generally outperforms uniform scalar quantization, as its search space encompasses that of uniform scalar quantization. Meanwhile, vector quantization can outperform non-uniform scalar quantization by exploiting additional redundancies across weight dimensions.

Despite this, non-uniform scalar quantization offers advantages in inference latency. \Cref{tab:speedup} compares end-to-end single-batch inference latency across these formats using the state-of-the-art GPU kernels: LUT-GEMM \citep{park2022lut} for uniform scalar, Any-Precision-LLM \citep{park2024any} for non-uniform scalar, and QTIP \citep{tseng2024qtip} for vector quantization. Results show that vector quantization incurs higher latency due to its decoding overhead \citep{tseng2024qtip}, whereas uniform and non-uniform scalar quantization have similar latency with minimal decoding overhead. Consequently, non-uniform scalar and vector quantization remain the primary formats of interest for weight-only quantization. In this context, we apply our GuidedQuant to both formats, achieving state-of-the-art performance in each.

For our experiments, we demonstrate the effectiveness of our method on the Llama-2 model family \citep{touvron2023llama}, evaluating on 7B, 13B and 70B model. We use the RedPajama dataset \citep{together2023redpajama} for calibration, following prior work \citep{egiazarian2024extreme, tseng2024quip, tseng2024qtip}, with 1024 sentences, each containing 4096 tokens. We report perplexity on the WikiText2 \citep{wikitext103} and C4 \citep{c4} validation sets.

\textbf{Scalar Post-training Quantization Results.}
We summarize the results of weight-only scalar quantization in \cref{tab:scalar}, comparing our approach with GPTQ \citep{frantar2022gptq}, SqueezeLLM without mixed precision \citep{kim2023squeezellm}, QuIP \citep{chee2024quip}, and GPTVQ 1D \citep{van2024gptvq}. For GPTQ and QuIP, we report the results from \citet{egiazarian2024extreme}, which used the same or a larger calibration dataset, while for GPTVQ 1D, we reproduce the results with the same calibration data while adjusting the group size to align with the average bit-width for a fair comparison (see \cref{app:gptvq_detail} for details).

We evaluate the performance of LNQ both with and without the GuidedQuant objective. Notably, LNQ combined with GuidedQuant consistently outperforms all baselines across various bit-widths and model sizes. Additionally, LNQ with the layer-wise reconstruction objective surpasses GPTVQ 1D in all settings, demonstrating that our approach improves upon GPTVQ 1D by addressing its suboptimal optimization.

\textbf{Vector Post-training Quantization Results.}
For vector post-training quantization (PTQ), we present the results in \cref{tab:vector}. We apply GuidedQuant to the state-of-the-art vector PTQ baseline, QTIP \citep{tseng2024qtip}. We implement it on both the 1MAD and 3INST variants and report the variant that performs better among these two.
Refer to \cref{sec:qtip_var} for results on different variants.
We compare our approach with the following baselines: GPTVQ \citep{van2024gptvq}, QuIP\# \citep{tseng2024quip}, AQLM \citep{egiazarian2024extreme}, and QTIP \citep{tseng2024qtip}. For QuIP\#, AQLM, and QTIP, we report the results from their respective papers, as they used the same or larger calibration datasets than ours. For GPTVQ, we report the reproduced results using our calibration data.
Our method consistently outperforms all vector quantization baselines across different bit-widths and model sizes as well.

\label{sec:exp-wo}

\subsection{Weight-and-activation Quantization}

Weight-and-activation quantization methods apply uniform quantization on both weights and activations to leverage the faster matrix multiplication units in the hardware \citep{ashkboos2024quarot,liu2024spinquant}.
% This approach becomes effective in more compute-bound scenarios, such as batched inference tasks.
State-of-the-art methods for weight-and-activation quantization include QuaRot \citep{ashkboos2024quarot} and SpinQuant \citep{liu2024spinquant}, which use rotation matrices to reduce the activation outliers before applying the uniform quantization. We incorporate our GuidedQuant objective into the weight quantization process of these methods, guiding the model to quantize the weights  more accurately. Specifically, we implement GuidedQuant on top of the SpinQuant using GPTQ weight quantizer and present the results in \cref{tab:wa}.
Following prior work, we use the WikiText2 dataset \citep{wikitext103} for calibration, with 128 sentences, each containing 2048 tokens \citep{ashkboos2024quarot, liu2024spinquant}.
Our objective consistently improves the perplexity compared to the baseline methods, demonstrating its effectiveness.

\begin{table}[t]
\caption{Weight-and-activation quantization results on Llama-2 models. L-2-7B, L-2-13B and L-2-70B denote Llama-2-7B, Llama-2-13B, and Llama-2-70B model, respectively. Wiki2 denotes perplexity on Wikitext2 with the context size of 2048.}
\vspace{0.5em}
\centering
\begin{adjustbox}{max width=1.0\columnwidth}
\begin{tabular}{c l c c c}
\toprule
& & \multicolumn{1}{c}{L-2-7B} & \multicolumn{1}{c}{L-2-13B} & \multicolumn{1}{c}{L-2-70B} \\
\cmidrule(lr){3-3}\cmidrule(lr){4-4}\cmidrule(lr){5-5}
% & \multicolumn{2}{c}{Llama-2-7B} & \multicolumn{2}{c}{Llama-2-13B} & \multicolumn{2}{c}{Llama-2-70B} \\
% \cmidrule(lr){2-3}\cmidrule(lr){4-5}\cmidrule(lr){6-7}
Bits & Method &  
Wiki2$\downarrow$ &  
Wiki2$\downarrow$ &
Wiki2$\downarrow$ 
\\
% \cmidrule(lr){1-1}\cmidrule(lr){2-4}\cmidrule(lr){5-7}\cmidrule(lr){8-10}
% \cmidrule(lr){1-2}\cmidrule{3-4}\cmidrule(lr){5-6}\cmidrule{7-8}
\cmidrule(lr){1-2}\cmidrule(lr){3-3}\cmidrule(lr){4-4}\cmidrule(lr){5-5}
16& Original & 5.47 & 4.88 & 3.32 \\
\cmidrule(lr){1-2}\cmidrule(lr){3-3}\cmidrule(lr){4-4}\cmidrule(lr){5-5}
\multirow{3}{*}{\shortstack[c]{W4A4KV4}}
&QuaRot & 6.08 & 5.39 & 3.80
\\
&SpinQuant & 5.95 & 5.24 & \textbf{3.71}
\\
&SpinQuant + GQuant (Ours) & \textbf{5.89} & \textbf{5.19} & \textbf{3.71} 
\\
\cmidrule(lr){1-2}\cmidrule(lr){3-3}\cmidrule(lr){4-4}\cmidrule(lr){5-5}
\multirow{3}{*}{\shortstack[c]{W4A4KV16}}
&QuaRot & 6.02 & 5.34 & 3.77
\\
&SpinQuant & 5.90 & 5.22 & \textbf{3.68}
\\
&SpinQuant + GQuant (Ours) & \textbf{5.84} & \textbf{5.17} & \textbf{3.68}
\\
\bottomrule
\end{tabular}
\end{adjustbox}
\label{tab:wa}
% \vspace{-1em}
\end{table}

\section{Additional Related Work}

There’s a large body of work on neural network compression, even when considering only quantization for LLMs, making a complete overview infeasible. Instead, we focus here on the works most related to ours. 
\paragraph{Hessian-based Compression}
%Second-order error information has long been used to compress neural networks. 
Neural networks compression based on the second-order Taylor approximation of the end loss (Eq \eqref{eq:second-order}) dates back to the early works of \citet{lecun1989optimal} and \citet{hassibi1992second}. 
OBD \citep{lecun1989optimal} introduced this approach for pruning, under the assumption that the Hessian matrix is diagonal. % pruned weights using a diagonal Hessian approximation. 
OBS \citep{hassibi1992second} improved upon this by %incorporating off-diagonal Hessian entries, 
dropping the diagonal assumption and instead approximating the Hessian by the empirical Fisher information matrix. However, applying OBS to large neural networks remains computationally intractable. To address this, various more efficient Hessian approximations have been proposed, including the K-FAC approximation \citep{martens2015optimizing, zeng2018mlprune, wang2019eigendamage,ouderaa2024the}, block-diagonal Fisher approximation \citep{singh2020woodfisher, kurtic2022optimal, li2021brecq}, and diagonal Fisher approximation \citep{choi2016towards, theis2018faster, kim2023squeezellm, bai2024skim}. Other strategies 
%include using layer-wise reconstruction objectives \citep{dong2017learning, frantar2022optimal, frantar2022gptq} or 
directly estimate inverse-Hessian vector products \citep{frantar2021m}. 
The most similar approaches to GuidedQuant are ones that employ block-diagonal Fisher approximation, which achieve a good trade-off between approximation accuracy and computation and storage cost. %\citep{li2021brecq}, which employs a saliency-weighted layer-wise error objective for CNN quantization.
However, these methods remain intractable at the scale of modern LLMs (see \cref{app:blk_hess}). 
%However, applying this objective directly to modern LLMs is both computationally and storage-wise intractable, thus we resolve the issue by averaging the Hessian within each group of output neurons.

\paragraph{Gradient-based Compression} Various compression methods are based on a first-order Taylor approximation of the end loss, with respect to output feature maps or gates applied to them \cite{molchanov2017pruning,molchanov2019importance, you2019gate}, or weights \cite{ding2019global}. The one most similar to GuidedQuant is \cite{molchanov2019importance}, which employs the same criterion in Eq. \eqref{eq:ours} to prune filters and neurons in vision models. However, as explained in \cref{sec:challenge}, adopting this criterion for quantizing modern LLMs is infeasible, without the averaging approximation we propose.  

\paragraph{Non-uniform Scalar PTQ for LLMs}
PTQ encompasses a vast array of work, so we focus on non-uniform scalar PTQ methods for LLMs that use look-up tables (codebooks) for weight decoding, which are closely related to our LNQ algorithm. One approach is zero-shot quantization, which requires no calibration data: Dynamic Tree Quantization \citep{dettmers20218} defines a new data type with dynamic exponential bits and stores decoded values in the codebook; Quantile Quantization \citep{dettmers2023case} saves quantile values of the weight distribution; and QLoRA \citep{dettmers2023qlora} introduces the NF4 data type using quantiles of a standard normal distribution. These methods share a global codebook, with each layer maintaining its own scale parameters. HIGGS \citep{malinovskii2024pushing} further refines this by adopting MSE-optimal grids for the standard normal distribution and applying rotation matrices to approximate Gaussian weight distributions. Another line of work involves one-shot quantization methods that optimize the output quantization error using calibration data. For instance, SqueezeLLM \citep{kim2023squeezellm} optimizes separate channel-wise codebooks via the $k$-means algorithm, while a 1D variant of GPTVQ \citep{van2024gptvq} alternates between optimizing assignments with the GPTQ algorithm and refining codebooks with gradient descent. The GPTVQ 1D shows the strongest performance among this line of research.
Although not a scalar PTQ method, the vector quantization variant of AQLM \citep{egiazarian2024extreme} also follows a similar paradigm, optimizing assignments through CD and codebooks via gradient descent. \looseness=-1

\section{Conclusion}

We introduced GuidedQuant, a novel PTQ approach that integrates gradient information from the end loss while preserving cross-weight dependencies within output channels. GuidedQuant improves state-of-the-art methods  across quantization formats, including weight-only scalar, weight-only vector, and weight-and-activation quantization. Furthermore, we identified inefficiencies in the current state-of-the-art methods for non-uniform scalar quantization and proposed LNQ, a new algorithm that, when combined with GuidedQuant, improves over the state-of-the-art performance. These contributions advance the efficiency and accuracy of quantization for modern LLMs.

\section*{Impact Statement}
This work advances the compression of LLMs, in particular via post-training quantization. As discussed, quantization, and model compression more broadly, reduces the memory and computational requirements of LLMs and speeds up inference, thus reducing their environmental impact and enabling their use on resource-constrained devices and for latency-critical applications. This can also help democratize access to these models for organizations with limited resources and support privacy-preserving, offline applications.
On the other hand, compression methods, including quantization, can adversely affect fairness in language models \citep{ramesh2023comparative}. While there are ongoing efforts to address fairness concerns in pruned LLMs \citep{zayed2024fairness}, extending these mitigation strategies to quantized models remains an important direction for future research. Furthermore, reducing the cost of using LLMs can also lower the barrier to their use by malicious actors. Finally, the energy and resources saved through compression might be reinvested elsewhere, so the net reduction in environmental harm is not guaranteed (Jevons paradox \citep{alcott2005jevons}).

\section*{Acknowledgements}
This work was supported by Samsung Electronics Co., Ltd. (IO250418-12669-01), % NPRC
Mobile eXperience (MX) Business, Samsung Electronics Co., Ltd., % MX
Institute of Information \& Communications Technology Planning \& Evaluation (IITP) grant funded by the Korea government (MSIT) [No. RS-2020-II200882, (SW STAR LAB) Development of deployable learning intelligence via self-sustainable and trustworthy machine learning, No. RS-2021-II211343, Artificial Intelligence Graduate School Program (Seoul National University), and No. 2022-0-00480, RS-2022-II220480, Development of Training and Inference Methods for Goal-Oriented Artificial Intelligence Agents], % STAR LAB, AI graduate, Goal oriented
and Basic Science Research Program through the National Research Foundation of Korea (NRF) funded by the Ministry of Education (RS-2023-00274280). % NRF
Hyun Oh Song is the corresponding author.

\bibliography{main}
\bibliographystyle{icml2025}

%%%%%%%%%%%%%%%%%%%%%%%%%%%%%%%%%%%%%%%%%%%%%%%%%%%%%%%%%%%%%%%%%%%%%%%%%%%%%%%
%%%%%%%%%%%%%%%%%%%%%%%%%%%%%%%%%%%%%%%%%%%%%%%%%%%%%%%%%%%%%%%%%%%%%%%%%%%%%%%
% APPENDIX
%%%%%%%%%%%%%%%%%%%%%%%%%%%%%%%%%%%%%%%%%%%%%%%%%%%%%%%%%%%%%%%%%%%%%%%%%%%%%%%
%%%%%%%%%%%%%%%%%%%%%%%%%%%%%%%%%%%%%%%%%%%%%%%%%%%%%%%%%%%%%%%%%%%%%%%%%%%%%%%
\newpage
\appendix
\onecolumn
\section{Proofs}
\label{app:proof}
Here, we prove \cref{prop:chainrule} and \cref{prop:lnq}, each restated here for convenience.

\chainrule*
% \begin{remark} \label{prop:chainrule2} 
% Minimizing the proposed objective $\left\lVert \frac{\partial \ell}{\partial \mathbf{z}} \odot (\bX\hbw - \bX\bw) \right\rVert_2^2$ is equivalent to minimizing second-order error estimate, $\frac{1}{2} (\hbw - \bw)^\top \bF (\hbw - \bw)$, where $\bF$ is the Fisher information matrix $\frac{1}{n}\sum_{i=1}^n \nabla \ell_i(\mathbf{w}) \nabla \ell_i(\mathbf{w})^\top$.
% \end{remark}
\begin{proof}
Recall that $\bZ^{(l)} = \bX^{(l)} {\bW}^{(l)}$. Then, by chain rule we have that $\frac{\partial \ell_i}{\partial \bw^{(l)}_j} = \frac{\partial \ell_i}{\partial Z_{ij}^{(l)}} (\bX^{(l)}_{i,:})^\top$. Note also that $\frac{\partial \ell}{\partial Z_{ij}^{(l)}} = \frac{\partial \ell_i}{\partial Z_{ij}^{(l)}}$. Hence, 
\begin{align*}
    \left\lVert \frac{\partial \ell}{\partial \bZ^{(l)}} \odot (\bX^{(l)} {\bW}^{(l)} - \bX^{(l)} \widehat{\bW}^{(l)}) \right\rVert_F^2 &=  \sum_{j=1}^{d_\mathrm{out}^{(l)}} \sum_{i=1}^n \left(\frac{\partial \ell_i}{\partial Z_{ij}^{(l)}}\bX^{(l)}_{i,:} (\bw^{(l)}_j - \widehat{\bw}^{(l)}_j)\right)^2 \\
    &=  \sum_{j=1}^{d_\mathrm{out}^{(l)}} \sum_{i=1}^n \left(\left(\frac{\partial \ell_i}{\partial \bw^{(l)}_j}\right)^\top(\bw^{(l)}_j - \widehat{\bw}^{(l)}_j) \right)^2\\
    &= n \sum_{j=1}^{d_\mathrm{out}^{(l)}} (\bw^{(l)}_j - \widehat{\bw}^{(l)}_j)^\top \bF^{(l)}_j (\bw^{(l)}_j - \widehat{\bw}^{(l)}_j),
\end{align*}
where the last equality follows from the definition of the Fisher blocks $\bF^{(l)}_j= \tfrac{1}{n} \sum_{i=1}^n (\frac{\partial \ell_i}{\partial \bw^{(l)}_j})(\frac{\partial \ell_i}{\partial \bw^{(l)}_j})^\top$.
%=  \tfrac{1}{n} \sum_{i=1}^n (\frac{\partial \ell_i}{\partial Z_{ij}^{(l)}})^2 (\bX^{(l)}_{i,:})^\top \bX^{(l)}_{i,:} = \tfrac{1}{n} \X^\top \diag(\frac{\partial \ell_i}{\partial Z_{ij}^{(l)}})^2 \X$.
Taking the sum over $l \in [L]$ on both sides yields the claim. 
\iffalse
We can derive the second-order error estimate, $\frac{1}{2} (\hbw - \bw)^\top \bF (\hbw - \bw)$, from the proposed objective as follows:
\begin{align*}
    \left\lVert \frac{\partial \ell}{\partial \mathbf{z}} \odot (\bX\hbw - \bX\bw) \right\rVert_2^2 &= \sum_{i=1}^n \left(\frac{\partial \ell_i}{\partial z_i}\left(X_{i,:}^\top \hbw - X_{i,:}^\top \bw\right)\right)^2 \\
    &= \sum_{i=1}^n \left(\frac{\partial \ell_i}{\partial (X_{i,:}^\top \bw)}\left(X_{i,:}^\top \hbw - X_{i,:}^\top \bw\right)\right)^2 \\
    &= \sum_{i=1}^n \left(\frac{\partial \ell_i}{\partial (X_{i,:}^\top \bw)}X_{i,:}^\top \left(\hbw - \bw\right)\right)^2 \\
    &= \sum_{i=1}^n \left(\frac{\partial \ell_i}{\partial (X_{i,:}^\top \bw)}\frac{\partial \left(X_{i,:}^\top \bw\right)}{\partial \bw} \left(\hbw - \bw\right)\right)^2 \\
    &= \sum_{i=1}^n \left(\frac{\partial \ell_i}{\partial \bw} \left(\hbw - \bw\right)\right)^2 = \sum_{i=1}^n \left(\nabla \ell_i(\bw)^\top \left(\hbw - \bw\right)\right)^2  \\
    &= \sum_{i=1}^n \left(\hbw - \bw\right)^\top \nabla \ell_i(\bw) \nabla \ell_i(\bw)^\top \left(\hbw - \bw\right)  \\
    &= n\left(\hbw - \bw\right)^\top \left(\frac{1}{n} \sum_{i=1}^n \nabla \ell_i(\bw) \nabla \ell_i(\bw)^\top\right) \left(\hbw - \bw\right)  \\
    &= n\left(\hbw - \bw\right)^\top \bF \left(\hbw - \bw\right)
\end{align*}
Therefore, minimizing the proposed objective, $\left\lVert \frac{\partial \ell}{\partial \mathbf{z}} \odot (\bX\hbw - \bX\bw) \right\rVert_2^2$, is equivalent to minimizing second-order error estimate, $\frac{1}{2} (\hbw - \bw)^\top \bF (\hbw - \bw)$.
\fi
\end{proof}

\lnq*

\begin{proof} We first show that the objective value is non-increasing in LNQ. For all $t \geq 1$, we have $\bP_{t}^{(j)}\mathbf{1}_m = \mathbf{1}_{d_\mathrm{in}}$ and thus the corresponding quantized weights $\hat{\mathbf{\bw}}_{j} = \bP^{(j)}_{t} \bc^{(j)}_{t}$ are feasible 
Hence, CD is initialized with a feasible solution at each iteration $t$, so it acts as a descent method. Then, 
%By the definition of the update step for $\bc$ and $\bP$, we have: 
\begin{align*} 
f_j(\bc_{t}^{(j)}, \bP_{t}^{(j)}) &\geq f_j(\bc_{{t}+1}^{(j)}, \bP_{t}^{(j)}) &&\text{(since $\bc_{{t}+1}^{(j)} = \underset{\bc^{(j)} \in \mathbb{R}^m}{\mathrm{argmin}}\;\;f_j(\bc, \bP_{t})$)} \\ 
&\geq f_j(\bc_{{t}+1}^{(j)}, \bP_{{t}+1}^{(j)}), &&\text{(since CD does not increase the objective value)} 
\end{align*} 
for all $t \geq 1$.
%For a formal proof that coordinate descent is non-increasing, refer to Eq. (17) in \citep{luo1992convergence}.
Since $f_j(\bc,\bP)$ is bounded below by $0$, the sequence $\{f_j(\bc_{t}^{(j)}, \bP_{t}^{(j)})\}$ is monotonically non-increasing and bounded below. Hence, it converges to its infimum by the monotone convergence theorem. 
\end{proof}

\section{Hyperparameters and Details}
\label{sec:hyperparam}

In this section, we clarify the hyperparameters and details of the methods discussed in the main paper.

\subsection{GuidedQuant} \label{app:gq_detail}

The proposed GuidedQuant method has a single hyperparameter: the number of group $g$ used to average the Hessian matrices $\overbar{\bH}_j$ (see \cref{sec:challenge}). 
For weight-only quantization experiments, we set $g=4$ for Llama-2-7B and Llama-2-13B, and $g=2$ for Llama-2-70B. For weight-and-activation quantization experiments, we set $g=1$.
For the hyperparameter $g$, we selected the number of groups to be as large as possible within the limits of our computational and memory constraints. Notably, GuidedQuant also maintains strong performance with smaller values of $g$ (see \cref{sec:group}).

Computing the Hessian (Line 4 in \cref{alg:gq}) and running the quantization algorithm $\mathcal{Q}$ (Line 5 in \cref{alg:gq}) for each group and layer can be parallelized.
We parallelize Hessian computation across groups. For quantization, we parallelize across groups in LNQ + GuidedQuant, while in QTIP + GuidedQuant and SpinQuant + GuidedQuant, we run this step in a sequential manner to minimally change the codebase of the original methods.

\subsection{LNQ} \label{app:lnq_detail}

%We first provide the summary of LNQ algorithm in \cref{alg:lnq}.
The proposed LNQ method has two hyperparameters: (1) the number of iterations during which we alternate between optimizing $\bc$ and $\bP$ ($T$ in \cref{alg:lnq}), and (2) the number of coordinate descent iterations over the output dimensions ($K$ in \cref{alg:lnq}). For Llama-2-7B and Llama-2-13B, we use $T=2$ and $K=4$, and for Llama-2-70B, we use $T=1$ and $K=4$ in all the experiments.

We further explain a derivation of the time complexity of the proposed LNQ algorithm (\cref{alg:lnq}), discussed in \cref{sec:lnq_alg}. First, the time complexity of the Cholesky decomposition for a matrix $\bH \in \mathbb{R}^{d_\mathrm{in} \times d_\mathrm{in}}$ is $O(d_\mathrm{in}^3)$ (Line 1).  

For optimizing the codebook (Line 4 and 14), we analyze the computational cost within the loop as follows:  
\vspace{-1em}
\begin{itemize} \itemsep=0pt
    \item Computing $\bL^\top \bP^{(j)}$ requires $O(d_\mathrm{in}^2 m)$ time.
    \item Computing $\bL^\top \bw_j$ requires $O(d_\mathrm{in}^2)$ time.
    \item \texttt{torch.linalg.lstsq} function uses QR decomposition of $\bL^\top \bP^{(j)}$ to compute least squares solution, which requires $O(d_\mathrm{in} m^2)$ time.
\end{itemize}
\vspace{-1em}
Since $d_\mathrm{in} \gg m$, the dominant cost is $O(d_\mathrm{in}^2 m)$. 
% Considering the loop iterations, the total complexity for optimizing codebook is $O(d_\mathrm{in}^2 d_\mathrm{out} m)$.

For computing $\hat{\bw}_j = \bP^{(j)} \bc^{(j)}$ (Line 5), the cost is $O(d_\mathrm{in}m)$.

% For optimizing the assignments using CD, the coordinate-wise solution for each coordinate $i$ (Line 8-10) can be computed as follows \citep[Lemma 1]{behdin2023quantease}, \citep[Section B.2]{chee2024quip}:

% \vspace{-1em}
% \begin{align}
%     &\!\!\!\!\!\!\underset{\widehat{W}_{i,j} \in \{c_1^{(j)}, \ldots, c_m^{(j)}\}}{\mathrm{argmin}} \!\!\!\! (\hbw_j \!-\! \bw_j)^\top \bH (\hbw_j \!-\! \bw_j) \label{eq:cd_exhaustive}
%  %   \\= &\!\!\!\!\underset{\widehat{W}_{i,j} \in \{c_1, \ldots, c_m\}}{\mathrm{argmin}} \!\!\!\!  H_{i,i} \left( \widehat{W}_{i,j} - W_{i,j} \right)^2 - 2  \left(\widehat{W}_{[]d_{in} \setminus i, j} - W_{d_{in} \setminus i, j} \right) H_{i, d_\mathrm{in} \setminus i}\left(\widehat{W}_{i,j} - W_{i,j}  \right) 
%     \\= &~~~\round_j\left(W_{i,j} - \frac{\bH_{i, [d_\mathrm{in}] \setminus i}}{H_{i,i}} \left(\widehat{\bW}_{[d_\mathrm{in}] \setminus i, j} - \bW_{[d_\mathrm{in}] \setminus i, j}\right)\right), \label{eq:cd_closed}
% \end{align}
% where $\round_j(\cdot): \mathbb{R} \rightarrow \mathbb{R}$ denotes rounding to the nearest point in the grid $\{c_1^{(j)}, \ldots, c_m^{(j)}\}$. 
%Evaluating all possible choices exhaustively and finding solution for Eq. \eqref{eq:cd_exhaustive} requires $O(d_\mathrm{in} m)$ per update. 
%Using the closed-form solution (Eq. \eqref{eq:cd_closed}) reduces 
In CD, the cost of the minimizing the objective for each coordinate $i$ (Line 8, Eq. \eqref{eq:cd_closed}) is $O(d_\mathrm{in} + m)$. Since $d_\mathrm{in} \gg m$, the dominant cost is $O(d_\mathrm{in})$. 
% In our work, we use the parallelized exhaustive approach since the targeted bit-width and $m$ are sufficiently small. For larger bit-widths, the closed-form solution is recommended to reduce runtime. 
Considering loop iterations, optimizing the code (Lines 6-12) takes $O(d_\mathrm{in}^2 K)$ time complexity.

Therefore, the cost of Lines 4-12 is $O(d_\mathrm{in}^2 (m + K))$, and the cost of Lines 2-15 is $O(d_\mathrm{in}^2 d_\mathrm{out} T (m + K))$.
Including the Cholesky decomposition, the total time complexity of LNQ algorithm is $O(d_\mathrm{in}^3 + d_\mathrm{in}^2 d_\mathrm{out} T (m + K))$.

% \mtodo{Closed form solution update for CD is as follows for each $i$ by \citep[Lemma 1]{behdin2023quantease}: 
% \begin{align*}
%     \hat{W}_{i, :} &= \text{round}(W_{i, :} - \frac{H_{i, d_{in} \setminus i}}{H_{i,i}} (\hat{W}_{d_{in} \setminus i, :} - {W}_{d_{in} \setminus i, :})), \\
%     &= \text{round}(\hat{W}_{i, :} - \frac{H_{i, :}}{H_{i,i}} (\hat{W} - {W}))
% \end{align*}
% where $\text{round}$ is rounding to the nearest point. The cost of this update is $O(d_{in} d_{out})$, so the cost per iteration of CD is $O(d_{in}^2 d_{out})$.
% }

    \begin{algorithm}[t]
    \caption{Efficient CD algorithm with precomputation}
    \label{alg:cd_v1}
    \begin{algorithmic}[1]
    \INPUT Hessian of the objective $\bH \in \mathbb{R}^{d_\mathrm{in}\times d_\mathrm{in}}$, input weight $\bW \in  \mathbb{R}^{d_\mathrm{in} \times d_\mathrm{out}}$, current codebook $\bc^{(j)}\in \mathbb{R}^m$ and current quantized weight $\hbW \in  \mathbb{R}^{d_\mathrm{in} \times d_\mathrm{out}}$. Initialize $\mathbf{Q} \in \{1, \ldots, m\}^{d_\mathrm{in} \times d_\mathrm{out}}$ (rounded indices).\\
    \vspace{1em}
    \STATE $\mathbf{\widetilde{H}} \gets \diag(\bH)^{-1} \bH$ , $\bU \gets \mathrm{StrictUpper}(\mathbf{\widetilde{H}})$.
    \vspace{0.5em}
    \FOR{$k = 1$ {\bf to} $K$}
    \STATE $\bB \gets \bU (\hbW - \bW)$
    \FOR{$i = 1$ {\bf to} $d_\mathrm{in}$}
    \STATE $\hbW_{i, :} \leftarrow \round(\bW_{i, :} - \bB_{i, :})$, $\mathbf{Q}_{i, :} \gets \mathrm{RoundIdx}(\bW_{i, :} - \bB_{i, :})$
    \STATE $\bB_{(i+1):, :} \gets \bB_{(i+1):, :} + \bU_{(i+1):, i} (\hbW_{i,:} - \bW_{i,:})$
    \ENDFOR
    \ENDFOR
    \STATE $\forall i \in [d_\mathrm{in}], j \in [d_\mathrm{out}], q\in [m]: P^{(j)}_{iq} =
            \begin{cases} 
            1 & \text{if } q = Q_{ij}, \\
            0 & \text{otherwise.}
            \end{cases}$ \hfill \COMMENT{Extracting assignment matrix}
    \OUTPUT $\bP^{(1)}, \ldots, \bP^{(d_\mathrm{out})}$.
    \end{algorithmic}
    \end{algorithm}
    
    \begin{algorithm}[t]
    \caption{Efficient CD algorithm with precomputation and lazy batch-updates}
    \label{alg:cd_v2}
    \begin{algorithmic}[1]
    \INPUT Hessian of the objective $\bH \in \mathbb{R}^{d_\mathrm{in}\times d_\mathrm{in}}$, input weight $\bW \in  \mathbb{R}^{d_\mathrm{in} \times d_\mathrm{out}}$, current codebook $\bc^{(j)}\in \mathbb{R}^m$ and current quantized weight $\hbW \in  \mathbb{R}^{d_\mathrm{in} \times d_\mathrm{out}}$. Initialize $\mathbf{Q} \in \{1, \ldots, m\}^{d_\mathrm{in} \times d_\mathrm{out}}$ (rounded indices).\\
    \vspace{1em}
    \STATE $\mathbf{\widetilde{H}} \gets \diag(\bH)^{-1} \bH$ , $\bU \gets \mathrm{StrictUpper}(\mathbf{\widetilde{H}})$.
    \vspace{0.5em}
    \FOR{$k = 1$ {\bf to} $K$}
    \STATE $\bB \gets \bU (\hbW - \bW)$
    \FOR{$s = 1,~ b+1, ~2b+1, \ldots, ~ d_\mathrm{in} - b + 1$ }
    \FOR{$i = s$ {\bf to} $s + b - 1$}
    \STATE $\hbW_{i, :} \leftarrow \round(\bW_{i, :} - \bB_{i, :})$, $\mathbf{Q}_{i, :} \gets \mathrm{RoundIdx}(\bW_{i, :} - \bB_{i, :})$
    \STATE $\bB_{(i+1):(s+b), :} \gets \bB_{(i+1):(s+b), :} + \bU_{(i+1):(s+b), i} (\hbW_{i,:} - \bW_{i,:})$
    \ENDFOR
    \STATE $\bB_{(s+b):, :} \gets \bB_{(s+b):, :} + \bU_{(s+b):, s:(s+b)} (\hbW_{s:(s+b),:} - \bW_{s:(s+b),:})$
    \ENDFOR
    \ENDFOR
    \STATE $\forall i \in [d_\mathrm{in}], j \in [d_\mathrm{out}], q\in [m]: P^{(j)}_{iq} =
            \begin{cases} 
            1 & \text{if } q = Q_{ij}, \\
            0 & \text{otherwise.}
            \end{cases}$ \hfill \COMMENT{Extracting assignment matrix}
    \OUTPUT $\bP^{(1)}, \ldots, \bP^{(d_\mathrm{out})}$.
    \end{algorithmic}
    \end{algorithm}

\subsection{Efficient Implementation of CD Algorithm in LNQ}\label{sec:CDefficient}
In the LNQ algorithm (\cref{alg:lnq}), computing the solution across all output channels $j \in [d_\mathrm{out}]$ is independent and thus fully parallelizable. Therefore, we perform the coordinate descent (CD) updates for each output channel in parallel.

\paragraph{Coordinate-wise Closed-form Solution.} For a given quantized weight matrix $\hbW \in \mathbb{R}^{d_\mathrm{in} \times d_\mathrm{out}}$, the CD update for the $i$-th input coordinate can be computed in parallel using the coordinate-wise closed-form solution as follows \citep[Lemma 1]{behdin2023quantease}:
\begin{align} \hbW_{i,:} \gets \round\left(\bW_{i,:} - \frac{\bH_{i, [d_\mathrm{in}] \setminus i}}{H_{i,i}} \left(\hbW_{[d_\mathrm{in}] \setminus i, :} - \bW_{[d_\mathrm{in}] \setminus i, :}\right)\right), \label{eq:cd_closed_par} \end{align}
where $\round(\cdot): \mathbb{R}^{1 \times d_\mathrm{out}} \rightarrow \mathbb{R}^{1 \times d_\mathrm{out}}$ rounds $j$-th element to the nearest point in the grid $\{c_1^{(j)}, \ldots, c_m^{(j)}\}$.
We adopt this coordinate-wise closed-form solution within the CD loop.

\paragraph{Precomputation Trick.}

On GPUs, the coordinate-wise CD update in Eq.~\eqref{eq:cd_closed_par} can be accelerated by precomputing parts of the update that remain unchanged during previous coordinate updates.
Specifically, when updating the $i$-th coordinate, the components of $\hbW$ corresponding to coordinates $(i+1)$ to $d_\mathrm{in}$ remain fixed and can therefore be precomputed before entering the CD loop:
\begin{align*}
\bB \coloneqq \frac{\bH_{i, [d_\mathrm{in}] \setminus i}}{H_{i,i}} \left(\hbW_{[d_\mathrm{in}] \setminus i, :} - \bW_{[d_\mathrm{in}] \setminus i, :}\right) = \underbrace{\frac{\bH_{i, 1:i}}{H_{i,i}} \left(\hbW_{1:i, :} - \bW_{1:i, :}\right)}_{\text{Cannot be precomputed before the CD loop}} + \underbrace{\frac{\bH_{i, (i+1):}}{H_{i,i}} \left(\hbW_{(i+1):, :} - \bW_{(i+1):, :}\right)}_{\text{Can be precomputed before the CD loop}}.
\end{align*}

To take advantage of this, we precompute the second term (which corresponds to future coordinates) for all $i \in [d_\mathrm{in}]$ in parallel using matrix operations before entering the CD loop:
\begin{align*} \bB \gets \mathrm{StrictUpper}(\widetilde{\bH}) (\hbW - \bW), \end{align*}
where $\widetilde{\bH}$ is obtained by dividing each row of $\bH$ by the corresponding diagonal entry $H_{i,i}$, and $\mathrm{StrictUpper}(\cdot)$ extracts the strictly upper triangular part of the matrix.

During the CD loop, we use the precomputed $\bB$ to compute the coordinate-wise update in Eq.~\eqref{eq:cd_closed_par}, and after updating the $i$-th coordinate, we incrementally update $\bB$ to reflect the new value of $\hbW_{i,:}$:
\begin{align*}
&\hbW_{i, :} \leftarrow \round(\bW_{i, :} - \bB_{i, :})\\
&\bB_{(i+1):, :} \gets\bB_{(i+1):, :} +  \mathrm{StrictUpper}(\mathbf{\widetilde{H}})_{(i+1):,i} (\hbW_{i,:} - \bW_{i,:}).
\end{align*}
The full CD algorithm incorporating this precomputation strategy is provided in \cref{alg:cd_v1}.

This acceleration trick has been proposed in QuIP \citep[Appendix B.2.] {chee2024quip} and QuantEase \citep{behdin2023quantease}. It is worth noting that this precomputation trick does not change the theoretical time complexity, but improves practical performance by exploiting the GPU parallelization. In particular, the CD update for the $i$-th coordinate in \cref{eq:cd_closed_par} requires $2 d_\mathrm{out}(d_\mathrm{in} - 1)$ FLOPs without precomputation, while with precomputation, the cost is reduced to $2 d_\mathrm{out}(d_\mathrm{in} - i)$ FLOPs.

\paragraph{Lazy Batch-updates.}

After incorporating the precomputation trick, we observe that the update steps within the CD loop (Lines 4–7 in \cref{alg:cd_v1}) resemble the OBQ update scheme used in the GPTQ method \citep{frantar2022optimal,frantar2022gptq}. In OBQ, each iteration involves rounding a single coordinate and adjusts the not-yet-rounded coordinates accordingly. Analogously, our CD update with precomputation rounds $\bW_{i, :} - \bB_{i, :}$ for the $i$-th coordinate and incrementally updates $\bB_{(i+1):, :}$ to reflect the new values of $\hbW_{i, :}$.

Both OBQ and our CD update suffer from a low compute-to-memory ratio: although each iteration involves relatively few FLOPs, it requires frequent reading and writing to large matrices. As a result, these updates tend to be memory-bound and suffer from poor GPU utilization.
To mitigate this, GPTQ introduces \textit{lazy batch-updates}, in which a batch of coordinates (with batch size $b = 128$) is processed together. Within each batch, updates are applied sequentially to each coordinate, while corrections are made only for the remaining unprocessed coordinates within the batch. Once all $b$ coordinates in the batch are updated, a global correction step is performed for the rest of the matrix. This strategy improves memory efficiency by reducing the frequency of global updates.

We adopt this lazy batch-updates approach in our CD implementation with precomputation trick. Specifically, we restrict updates to the relevant portion of $\bB$ within each block of $b$ coordinates, and defer global updates to $\bB$ until the entire block has been processed. This significantly reduces memory-bound operations and enhances GPU utilization. The final efficient CD algorithm incorporating both precomputation trick and lazy batch-updates is given in \cref{alg:cd_v2}.

QuIP \citep{chee2024quip} also supports lazy batch-updates in their open-source code, though it is not mentioned in their paper. QuantEase \citep{behdin2023quantease} does not use this approach in their implementation.
As with the precomputation trick, lazy batch-updates do not change the theoretical time complexity. However, they substantially accelerate the overall algorithm in practice by better utilizing GPU resources.

\paragraph{Speedup Factor}

To demonstrate the speedup achieved by our optimization techniques for the CD algorithm, we report the quantization time for quantizing the Llama-2-7B model into 4-bit precision on a single RTX 6000 Ada GPU.
Without any optimizations, adopting the naive strategy of exhaustively evaluating the objective function for all coordinate choices and selecting the option with the lowest value takes 3.9 hours to quantize the entire model. Applying the coordinate-wise closed-form solution described in Eq. \eqref{eq:cd_closed_par} reduces this time to 2.7 hours. Incorporating the precomputation trick further lowers it to 1.2 hours. Finally, applying lazy batch-updates brings the total quantization time down to just 0.9 hours.
Overall, these optimizations yield more than a 4$\times$ speedup in end-to-end quantization time on GPU.

\subsection{GPTVQ} \label{app:gptvq_detail}
In the original GPTVQ paper \citep{van2024gptvq}, the authors used 128 sentences from the WikiText2 dataset \citep{wikitext103}, each containing 2048 tokens, as a calibration data.
For a fair comparison, we reproduced their method using their open-sourced code but used 1024 sentences of RedPajama dataset \citep{together2023redpajama}, each containing 4096 tokens.
We adopted their default hyperparameters except for the group size and block size, which we adjusted to match the average bit width when comparing with different methods in \cref{tab:scalar}.
We provide a complete list of GPTVQ hyperparameters for each table in \cref{tab:hp_gptvq}.

\begin{table}[t]
\caption{Hyperparameters that we used in reproducing GPTVQ \citep{van2024gptvq} in \cref{tab:scalar} and \cref{tab:vector}.}
\vspace{0.5em}
\centering
\begin{adjustbox}{max width=0.95\columnwidth}
\begin{tabular}{cccccc c}
\toprule
& Weight & VQ & Codebook sharing & Scaling & Codebook
\\
% & \multicolumn{2}{c}{Llama-2-7B} & \multicolumn{2}{c}{Llama-2-13B} & \multicolumn{2}{c}{Llama-2-70B} \\
% \cmidrule(lr){2-3}\cmidrule(lr){4-5}\cmidrule(lr){6-7}
Table & 
bits &  
dim &  
group size & 
block size &
bit-width &
Avg bits
\\
% \cmidrule(lr){1-1}\cmidrule(lr){2-4}\cmidrule(lr){5-7}\cmidrule(lr){8-10}
% \cmidrule(lr){1-2}\cmidrule{3-4}\cmidrule(lr){5-6}\cmidrule{7-8}
\cmidrule(lr){1-1}\cmidrule(lr){2-6}\cmidrule(lr){7-7}
\multirow{4}{*}{\cref{tab:scalar}}
&2 & 1 & 1024 &  -- & 8 & 2.03
\\
&3 & 1 & 2048 &  -- & 8 & 3.03
\\
&4 & 1 & 8192 & 256 & 8 & 4.03
\\
&4 & 1 & 4096 & 128 & 8 & 4.06
\\
\cmidrule(lr){1-1}\cmidrule(lr){2-6}\cmidrule(lr){7-7}
\multirow{4}{*}{\cref{tab:vector}}
% & AQLM + PV-Tuning
% &2.00 & 
% \\
&2 & 2 &  2048 & -- & 8 & 2.13 
\\
&2 & 4 & 32768 & -- & 8 & 2.25
\\
&3 & 2 & 16384 & 64 & 8 & 3.13
\\
&4 & 2 & 65536 & 64 & 8 & 4.13
\\
% \cmidrule(lr){2-2}\cmidrule(lr){3-3}\cmidrule(lr){4-5}
% &QTIP 
% & 4.00 & 
% \\
% &QTIP + GQuant (Ours)
% & 4.00 & 
% \\
\bottomrule
\end{tabular}
\end{adjustbox}
\label{tab:hp_gptvq}
\end{table}

\section{Details on Experimental Setup}
\label{sec:details}

This section provides a detailed explanation of the experimental settings used.

% \subsection{Quantization Objectives Comparison (\cref{fig:obj})}\label{sec:objComparisonSetup}

% In \cref{fig:obj}, we evaluate the effects of different quantization objectives on perplexity, in the setting of weight-only non-uniform scalar quantization. We measure the perplexity on WikiText2 with the context size of 4096. 
% For the \textit{Layer-wise} output error (Eq. \eqref{eq:layerwiseObj}) and \textit{GuidedQuant} approximated (Eq. \eqref{eq:ours2}) objectives, we use the proposed LNQ algorithm (\cref{alg:lnq}) to optimize them. For the \textit{weighted $k$-means} objective (Eq. \eqref{eq:kmeans}), we use the SqueezeLLM method, which employs Lloyd's algorithm to optimize it.
% %the $k$-means clustering objective 
% \citep{kim2023squeezellm,lloyd1982least}.
% These results are taken from \cref{tab:scalar} and thus they follow the same experimental setup therein.
% It is worth noting that both LNQ and Lloyd’s algorithm alternate between optimizing continuous and discrete variables, adopting the optimal solution for the continuous variable. The difference lies in how they handle the discrete variables: LNQ relies on coordinate descent, whereas Lloyd’s algorithm directly finds the optimal discrete solution. Finding the optimal solution for a discrete variable is possible for in Lloyd's algorithm as it addresses a simpler objective that does not include pairwise term.

\subsection{End-to-end Inference Throughput Experiments (\cref{tab:speedup})}\label{sec:inferSetup}

\begin{table}[t]
\caption{End-to-end inference throughput of Llama-2 models on RTX 4090 GPU, including the vector quantization kernel \textit{after} fusing the query/key/value projection matrices into one linear layer and the up/gate projection matrices into another when measuring the throughput. OOM indicates an Out-of-Memory error, meaning the GPU lacks memory to run model inference.}
\vspace{0.5em}
\centering
\begin{adjustbox}{max width=1.0\columnwidth}
\begin{tabular}{l cc cc cc}
\toprule
& \multicolumn{2}{c}{Llama-2-7B} & \multicolumn{2}{c}{Llama-2-13B} & \multicolumn{2}{c}{Llama-2-70B} \\
\cmidrule(lr){2-3}\cmidrule(lr){4-5}\cmidrule(lr){6-7}
Type &  
Bits$\downarrow$ &  
Tok/s$\uparrow$ & 
Bits$\downarrow$ &  
Tok/s$\uparrow$ &
Bits$\downarrow$ & 
Tok/s$\uparrow$ 
\\
% \cmidrule(lr){1-1}\cmidrule(lr){2-4}\cmidrule(lr){5-7}\cmidrule(lr){8-10}
% \cmidrule(lr){1-2}\cmidrule{3-4}\cmidrule(lr){5-6}\cmidrule{7-8}
\cmidrule(lr){1-1}\cmidrule(lr){2-3}\cmidrule(lr){4-5}\cmidrule(lr){6-7}
Original 
& 16 & 67
& 16 & OOM
& 16 & OOM
\\
\cmidrule(lr){1-1}\cmidrule(lr){2-3}\cmidrule(lr){4-5}\cmidrule(lr){6-7}
Uniform scalar 
&2.00 & 334
&2.00 & 200
&2.00 & 47
\\
Non-uniform scalar
& 2.01 & 347
& 2.01 & 203
& 2.01 & 47
\\
Vector
& 2.00 & 200
& 2.00 & 121
& 2.00 & 38
\\
Vector \textit{(fused)}
& 2.00 & 248
& 2.00 & 153
& 2.00 & 42
\\
\cmidrule(lr){1-1}\cmidrule(lr){2-3}\cmidrule(lr){4-5}\cmidrule(lr){6-7}
Uniform scalar
& 3.00 & 260
& 3.00 & 150
& 3.00 & OOM
\\
Non-uniform scalar
& 3.03 & 264
& 3.02 & 148
& 3.01 & OOM
\\
Vector
& 3.00 & 176
& 3.00 & 103
& 3.00 & OOM
\\
Vector \textit{(fused)}
& 3.00 & 209
& 3.00 & 123
& 3.00 & OOM
\\
\cmidrule(lr){1-1}\cmidrule(lr){2-3}\cmidrule(lr){4-5}\cmidrule(lr){6-7}
Uniform scalar
& 4.00 & 214
& 4.00 & 121
& 4.00 & OOM
\\
Non-uniform scalar
& 4.05 & 209
& 4.04 & 116
& 4.03 & OOM
\\
Vector
& 4.00 & 151
& 4.00 & 89
& 4.00 & OOM
\\
Vector \textit{(fused)}
& 4.00 & 176
& 4.00 & 103
& 4.00 & OOM
\\
\bottomrule
\end{tabular}
\end{adjustbox}
\label{tab:speedup_full}
\end{table}

In \cref{tab:speedup}, we measure each model’s inference throughput in generating 100 tokens on RTX 4090 GPU, after integrating the kernels with into a PyTorch-based inference pipeline optimized with the \texttt{torch.compile} function \citep{ansel2024pytorch,nvidia2019cuda}. For QTIP, our chosen vector quantization kernel, we adopt the HYB variant of it as its GPU kernel is publicly available, though it is possible to implement fast GPU kernels with other variants as well \citep{tseng2024qtip}.

For the base model and for models quantized using uniform or non-uniform scalar formats, we fuse the query/key/value projection matrices into one linear layer and the up/gate projection matrices into another when measuring the throughput. This fusion trick can be applied to QTIP as well, provided the matrices are fused before quantization and the scale parameters are shared across layers. However, in the main paper, we present QTIP results without fusion to match the original experimental setup (and reported numbers) from their work, in which they quantize the layers independently without fusing them. Meanwhile, scalar quantization methods quantize the layer in an output channel-wise manner, and this allows fusing matrices even when layers are quantized separately.

For completeness, we include \cref{tab:speedup_full}, which also shows QTIP’s fused end-to-end throughput (measured using dummy values) to illustrate the impact of fusion, restating the relevant results from \cref{tab:speedup}. Although the fusion boosts the throughput, it does not change the conclusion that QTIP still runs more slowly than the scalar quantization methods.

\subsection{Implementation Details for Different Quantization Types} \label{sec:gq_diff_quant}

GuidedQuant employs a quantization algorithm $\mathcal{Q}$ as a subroutine (Line 8 in \cref{alg:gq}). In this section, we clarify which specific quantization algorithm $\mathcal{Q}$ each method uses, which GuidedQuant builds upon in \cref{alg:gq}. We integrate GuidedQuant with three different quantization methods: (1) LNQ for weight-only scalar quantization, (2) QTIP for weight-only vector quantization, and (3) SpinQuant for weight-and-activation quantization. LNQ adopts the algorithm shown in \cref{alg:lnq}, QTIP uses the BlockLDLQ algorithm proposed in \citet{tseng2024quip}, and SpinQuant employs the GPTQ algorithm introduced in \citet{frantar2022gptq}.

\begin{table}[t]
\caption{Total GPU cost incurred during the quantization process for LNQ and QTIP, both with and without GuidedQuant, across various group sizes $g$. We specify the number and type of GPU used in the parentheses. R6A denotes the RTX 6000 Ada GPU.}
\vspace{0.5em}
\centering
\begin{adjustbox}{max width=1.0\columnwidth}
\begin{tabular}{l l ccc ccc}
\toprule
 & &  
\multicolumn{3}{c}{LNQ}  &
\multicolumn{3}{c}{QTIP} \\
\cmidrule(lr){3-5}\cmidrule(lr){6-8}
Model & Method &  
GPU Cost - 2 bits  &  
GPU Cost - 3 bits  & 
GPU Cost - 4 bits  &
GPU Cost - 2 bits  &  
GPU Cost - 3 bits  & 
GPU Cost - 4 bits 
\\
\cmidrule(lr){1-1}\cmidrule(lr){2-2}\cmidrule(lr){3-5}\cmidrule(lr){6-8}
Llama-2-7B & Layer-wise (LNQ, QTIP)
& 0.5 h (1$\times$R6A) & 0.6 h (1$\times$R6A) & 0.9 h (1$\times$R6A)
& 1.3 h (1$\times$R6A) & 1.2 h (1$\times$R6A) & 1.2 h (1$\times$R6A)
\\
&Layer-wise + GQuant ($g=1$) 
& 0.5 h (1$\times$R6A) & 0.6 h (1$\times$R6A) & 0.9 h (1$\times$R6A)
& 1.3 h (1$\times$R6A) & 1.2 h (1$\times$R6A) & 1.2 h (1$\times$R6A)
\\
&Layer-wise + GQuant ($g=2$) 
& 0.6 h (1$\times$R6A) & 0.7 h (1$\times$R6A) & 0.9 h (1$\times$R6A) 
& 1.5 h (1$\times$R6A) & 1.4 h (1$\times$R6A) & 1.5 h (1$\times$R6A)
\\
&Layer-wise + GQuant ($g=4$) 
& 0.7 h (1$\times$R6A) & 0.7 h (1$\times$R6A) & 0.9 h (1$\times$R6A)
& 1.9 h (1$\times$R6A) & 1.9 h (1$\times$R6A) & 1.9 h (1$\times$R6A)
\\
\cmidrule(lr){1-1}\cmidrule(lr){2-2}\cmidrule(lr){3-5}\cmidrule(lr){6-8}
Llama-2-13B & Layer-wise (LNQ, QTIP)
& 0.9 h (1$\times$R6A) & 1.1 h (1$\times$R6A) & 1.6 h (1$\times$R6A)
& 3.0 h (1$\times$R6A) & 2.7 h (1$\times$R6A) & 2.7 h (1$\times$R6A)
\\
&Layer-wise + GQuant ($g=1$) 
& 0.9 h (1$\times$R6A) & 1.1 h (1$\times$R6A) & 1.6 h (1$\times$R6A)
& 3.0 h (1$\times$R6A) & 2.7 h (1$\times$R6A) & 2.7 h (1$\times$R6A)
\\
&Layer-wise + GQuant ($g=2$) 
& 1.1 h (1$\times$R6A) & 1.2 h (1$\times$R6A) & 1.6 h (1$\times$R6A)
& 2.4 h (1$\times$R6A) & 2.2 h (1$\times$R6A) & 2.3 h (1$\times$R6A)
\\
&Layer-wise + GQuant ($g=4$) 
& 1.2 h (1$\times$R6A) & 1.3 h (1$\times$R6A) & 1.7 h (1$\times$R6A)
& 3.0 h (1$\times$R6A) & 3.0 h (1$\times$R6A) & 3.0 h (1$\times$R6A)
\\
\cmidrule(lr){1-1}\cmidrule(lr){2-2}\cmidrule(lr){3-5}\cmidrule(lr){6-8}
Llama-2-70B & Layer-wise (LNQ, QTIP)
& 2.6 h (1$\times$R6A) & 3.3 h (1$\times$R6A) & 5.1 h (1$\times$R6A)
& 12.0 h (1$\times$R6A) & 10.8 h (1$\times$R6A) & 11.0 h (1$\times$R6A)
\\
&Layer-wise + GQuant ($g=1$) 
& 2.6 h (1$\times$R6A) & 3.3 h (1$\times$R6A) & 5.1 h (1$\times$R6A)
& 12.0 h (1$\times$R6A) & 10.8 h (1$\times$R6A) & 11.0 h (1$\times$R6A)
\\
&Layer-wise + GQuant ($g=2$) 
& 3.7 h (1$\times$R6A) & 4.7 h (1$\times$R6A) & 6.8 h (1$\times$R6A)
& 13.0 h (1$\times$R6A) & 11.9 h (1$\times$R6A) & 12.0 h (1$\times$R6A)
\\
\bottomrule
\end{tabular}
\end{adjustbox}
\label{tab:cost_quant}
\end{table}

\begin{table}[t]
\caption{Total GPU cost and disk usage incurred during the gradient and Hessian caching processes for each objective—weighted $k$-means (SqueezeLLM), layer-wise (LNQ, QTIP), and GuidedQuant. We specify the number and type of GPU used in the parentheses. R6A and A100 denote the RTX 6000 Ada GPU and the A100 GPU, respectively. The calibration data are 1024 sentences of the RedPajama dataset, each containing 4096 tokens.}
\vspace{0.5em}
\centering
\begin{adjustbox}{max width=0.8\columnwidth}
\begin{tabular}{l l cc cc}
\toprule
 & &  
\multicolumn{2}{c}{Gradient Caching} &  
\multicolumn{2}{c}{Hessian Caching} \\
\cmidrule(lr){3-4}\cmidrule(lr){5-6}
Model & Method &  
GPU Cost &  
Disk Size & 
GPU Cost &  
Disk Size 
\\
\cmidrule(lr){1-1}\cmidrule(lr){2-2}\cmidrule(lr){3-4}\cmidrule(lr){5-6}
Llama-2-7B & Weighted $k$-means (SqueezeLLM)
& 0.3 h (1$\times$A100) & 13 GiB & -- & -- 
\\
& Layer-wise (LNQ, QTIP)
& -- & -- & 0.3 h (4$\times$R6A) & 27 GiB
\\
& Layer-wise + GQuant ($g=1$)
& 0.3 h (1$\times$A100) & 2 GiB & 0.3 h (4$\times$R6A) & 27 GiB
\\
& Layer-wise + GQuant ($g=2$)
& 0.3 h (1$\times$A100) & 4 GiB & 0.4 h (4$\times$R6A) & 53 GiB
\\
& Layer-wise + GQuant ($g=4$)
& 0.3 h (1$\times$A100) & 7 GiB & 0.8 h (4$\times$R6A) & 106 GiB 
\\
\cmidrule(lr){1-1}\cmidrule(lr){2-2}\cmidrule(lr){3-4}\cmidrule(lr){5-6}
Llama-2-13B & Weighted $k$-means (SqueezeLLM)
& 0.6 h (2$\times$A100) & 25 GiB& -- & -- 
\\
& Layer-wise (LNQ, QTIP)
& -- & -- & 0.5 h (4$\times$R6A) & 52 GiB
\\
& Layer-wise + GQuant ($g=1$)
& 0.6 h (2$\times$A100) & 3 GiB & 0.5 h (4$\times$R6A) & 52 GiB
\\
& Layer-wise + GQuant ($g=2$)
& 0.6 h (2$\times$A100) & 7 GiB & 0.9 h (4$\times$R6A) & 104 GiB
\\
& Layer-wise + GQuant ($g=4$)
& 0.6 h (2$\times$A100) & 13 GiB & 1.5 h (4$\times$R6A) & 208 GiB 
\\
\cmidrule(lr){1-1}\cmidrule(lr){2-2}\cmidrule(lr){3-4}\cmidrule(lr){5-6}
Llama-2-70B & Weighted $k$-means (SqueezeLLM)
& 2.7 h (6$\times$A100) & 129 GiB& -- & -- 
\\
& Layer-wise (LNQ, QTIP)
& -- & -- & 3.5 h (4$\times$R6A) & 366 GiB
\\
& Layer-wise + GQuant ($g=1$)
& 2.7 h (6$\times$A100) & 5 GiB & 3.5 h (4$\times$R6A) & 366 GiB
\\
& Layer-wise + GQuant ($g=2$)
& 2.7 h (6$\times$A100) & 9 GiB & 5.8 h (4$\times$R6A) & 731 GiB
\\
\bottomrule
\end{tabular}
\end{adjustbox}
\label{tab:cost_hess}
\end{table}

% \paragraph{Weight-only PTQ experiments (\cref{tab:scalar} and \cref{tab:vector})}

% \paragraph{Weight-and-activation PTQ experiments (\cref{tab:wa})}

\section{Additional Results and Discussions}

\label{sec:add}

\subsection{Quantization Cost} In this section, we present a detailed breakdown of the computational costs associated with our method, as summarized in \cref{tab:cost_quant} and \cref{tab:cost_hess}. The layer-wise quantization methods on which we build typically require two phases: (1) caching the Hessian matrices to disk, and (2) loading them to quantize weights based on these cached Hessian matrices. It is worth noting that the cost of the first phase (caching) can be amortized if one needs to quantize the same model multiple times at different bit-widths or configurations, as the Hessian matrices can be reused. We report the weight quantization cost in \cref{tab:cost_quant}, and the Hessian-caching cost in \cref{tab:cost_hess}.

From \cref{tab:cost_quant}, observe that for $g=1$, the quantization is identical to standard layer-wise quantization, since the Hessian size is the same. 
Even for $g=2$ or $g=4$, the quantization cost does not increase by more than 50\%. This is because while more Hessian matrices are employed, each weight block to be quantized becomes correspondingly smaller, leaving the total computation unchanged.
All these steps can be performed in an embarrassingly parallel manner; for example, quantizing Llama-2-70B using our LNQ algorithm takes less than three hours when using 8 RTX 6000 Ada GPUs.

We further report the cost of caching Hessian matrices in \cref{tab:cost_hess}, along with the number of GPU used and the disk size requirements. Note that while we used 4 GPUs for caching, this process is also fully parallelizable; using fewer GPUs will simply take longer (it can run on a single GPU), whereas additional GPUs can shorten the total time. Finally, our method’s disk-space requirement is proportional to the number of groups $g$. However, we highlight that for constrained disk space, choosing a smaller number of groups can still capture most of the performance benefits (\cref{tab:group}).

\subsection{Results on Llama-3 Models}

\begin{table*}[t]
\caption{Weight-only scalar post-training quantization results on Llama-3 models. Wiki2 and C4 denotes perplexity on WikiText2 and C4, respectively. The perplexity is measured with the context size of 8192.}
\vspace{0.5em}
\centering
\begin{adjustbox}{max width=1.0\columnwidth}
\begin{tabular}{l ccc ccc}
\toprule
& \multicolumn{3}{c}{Llama-3-8B} & \multicolumn{3}{c}{Llama-3-70B} \\
\cmidrule(lr){2-4}\cmidrule(lr){5-7}
Method & 
Bits$\downarrow$ & 
Wiki2$\downarrow$ & 
C4$\downarrow$ &
Bits$\downarrow$ & 
Wiki2$\downarrow$ & 
C4$\downarrow$
\\
\cmidrule(lr){1-1}\cmidrule(lr){2-2}\cmidrule{3-4}\cmidrule(lr){5-5}\cmidrule{6-7}
Original & 
16 & 5.54 & 7.10 &  %& 
16 & 2.59 & 5.78 \\
\cmidrule(lr){1-1}\cmidrule(lr){2-2}\cmidrule{3-4}\cmidrule(lr){5-5}\cmidrule{6-7}
SqueezeLLM &  
2.01 & 16322 & 1501 & 
2.01 & 38.53 & 38.15 \\
LNQ  (Ours) & 
2.01 & 133.00 & 72.75 &  
2.01 & 24.22 & 19.71 \\
LNQ + GuidedQuant (Ours) & 
2.01 & \textbf{30.80} & \textbf{20.41}  &  
2.01 & \textbf{10.21} & \textbf{11.06} \\
\cmidrule(lr){1-1}\cmidrule(lr){2-2}\cmidrule{3-4}\cmidrule(lr){5-5}\cmidrule{6-7}
SqueezeLLM  &  
3.03 & 7.39 & 8.84 &  %& 
3.02 & 4.12 & 6.44 \\
LNQ  (Ours) & 
3.03 & 7.28 & 8.46 &  %& 
3.01 & 4.57 & 6.61 \\
LNQ + GuidedQuant (Ours) & 
3.03 & \textbf{6.99} & \textbf{8.10} &  %& 
3.01 & \textbf{3.90} & \textbf{6.27} \\
\cmidrule(lr){1-1}\cmidrule(lr){2-2}\cmidrule{3-4}\cmidrule(lr){5-5}\cmidrule{6-7}
SqueezeLLM & 
4.05 & 5.91 & 7.43 &  %& 
4.03 & 2.91 & 5.91 \\
LNQ  (Ours) & 
4.05 & 5.90 & 7.40 & %& 
4.03 & 3.05 & 5.94 \\
LNQ + GuidedQuant (Ours) & 
4.05 & \textbf{5.80} & \textbf{7.32} &  %& 
4.03 & \textbf{2.89} & \textbf{5.89} \\
\bottomrule
\end{tabular}
\end{adjustbox}
\label{tab:scalar-llama3}
\vspace{-0.5em}
\end{table*}
In this section, we present the results of evaluating LNQ and LNQ combined with GuidedQuant on Llama-3-8B and Llama-3-70B models, comparing with SqueezeLLM under a weight-only scalar quantization setting. We present the results in \cref{tab:scalar-llama3}.
We use RedPajama dataset \citep{together2023redpajama} for calibration with 1024 sentences, each containing 4096 tokens.
We set the number of groups to be $g=1$ for Llama-3-8B and Llama-3-70B, and set the hyperparameters for LNQ (and LNQ + GuidedQuant) to be $T=2, K=4$ for Llama-3-8B and $T=1, K=4$ for Llama-3-70B model.
LNQ with GuidedQuant consistently outperforms the baselines, demonstrating the robustness and effectiveness of our approach.

\subsection{Additional Inference Throughput Results}

\begin{table*}[t]
\caption{Weight-only scalar post-training quantization results on Llama-2 models, including end-to-end throughput. Wiki2 and C4 denotes perplexity on WikiText2 and C4, respectively. The perplexity is measured with the context size of 4096.  Throughput is evaluated on an RTX 3090 GPU, reported as the average of 5 runs with standard deviation in parentheses. OOM indicates an Out-of-Memory error, meaning the GPU lacks memory to run model inference.}
\vspace{0.5em}
\centering
\begin{adjustbox}{max width=1.0\columnwidth}
\begin{tabular}{l cccc cccc cccc}
\toprule
& \multicolumn{4}{c}{Llama-2-7B} 
& \multicolumn{4}{c}{Llama-2-13B}  
& \multicolumn{4}{c}{Llama-2-70B} \\
\cmidrule(lr){2-5}\cmidrule(lr){6-9}\cmidrule(lr){10-13}
Method & 
Bits$\downarrow$ & 
Wiki2$\downarrow$ & 
C4$\downarrow$ &
Tok/s$\uparrow$ &
Bits$\downarrow$ & 
Wiki2$\downarrow$ & 
C4$\downarrow$ &
Tok/s$\uparrow$ &
Bits$\downarrow$ & 
Wiki2$\downarrow$ & 
C4$\downarrow$ &
Tok/s$\uparrow$
\\
\cmidrule(lr){1-1}\cmidrule(lr){2-2}\cmidrule{3-5}\cmidrule(lr){6-6}\cmidrule{7-9}\cmidrule(lr){10-10}\cmidrule{11-13}
Original & 
16 & 5.12 & 6.63  & 64.8 (0.1) & 
16 & 4.57 & 6.05  & OOM & 
16 & 3.12 & 4.97  & OOM \\
\cmidrule(lr){1-1}\cmidrule(lr){2-2}\cmidrule{3-5}\cmidrule(lr){6-6}\cmidrule{7-9}\cmidrule(lr){10-10}\cmidrule{11-13}
SqueezeLLM &  
2.01 & 39.58  & 44.05 & 245.1 (1.8) &
2.01 & 16.24  & 19.20 & 140.5 (0.5) &
2.01 &  9.17  & 13.03 &  31.5 (0.0) \\
LNQ  (Ours) & 
2.01 &  23.31  & 26.71  & 244.6 (0.6) &
2.01 &  8.78  & 11.80  & 141.1 (0.4) &
2.01 & 5.23   & 7.31 & 31.6 (0.1)\\
LNQ + GuidedQuant (Ours) & 
2.01 & \textbf{8.83} & \textbf{11.15}  & 244.4 (2.9)  &  
2.01 & \textbf{7.26} & \textbf{9.17}  & 141.2 (0.5) &
2.01 & \textbf{5.04} & \textbf{7.04} &  31.6 (0.1)\\
\cmidrule(lr){1-1}\cmidrule(lr){2-2}\cmidrule{3-5}\cmidrule(lr){6-6}\cmidrule{7-9}\cmidrule(lr){10-10}\cmidrule{11-13}
SqueezeLLM  &  
3.03 & 5.74  & 7.44 & 207.3 (1.6) & 
3.02 & 4.99  & 6.60 & 118.0 (0.5) &
3.01 & 3.53  & 5.31 & OOM \\
LNQ  (Ours) & 
3.03 & 5.89  & 7.74 & 207.3 (2.1) & 
3.02 & 5.02  & 6.68 & 118.0 (0.6) &
3.01 & 3.50  & 5.31 & OOM \\
LNQ + GuidedQuant (Ours) & 
3.03 & \textbf{5.57} & \textbf{7.22} & 207.6 (1.7) & 
3.02 & \textbf{4.91} & \textbf{6.49} &  117.9 (0.6)  &
3.01 & \textbf{3.47} & \textbf{5.27} & OOM \\
\cmidrule(lr){1-1}\cmidrule(lr){2-2}\cmidrule{3-5}\cmidrule(lr){6-6}\cmidrule{7-9}\cmidrule(lr){10-10}\cmidrule{11-13}
SqueezeLLM & 
4.05 & 5.23 & 6.78 & 161.8 (1.5) & 
4.04 & 4.67 & 6.15 & 89.8 (0.1) &
4.03 & \textbf{3.20} & 5.04 & OOM \\
LNQ  (Ours) & 
4.05 & 5.26 & 6.82  & 161.7 (1.6) & 
4.04 & 4.67 & 6.17 & 89.7 (0.1) &
4.03 & \textbf{3.20} & 5.04 & OOM \\
LNQ + GuidedQuant (Ours) & 
4.05 & \textbf{5.21} & \textbf{6.75} & 162.0 (1.8) & 
4.04 & \textbf{4.65} & \textbf{6.14} & 89.8 (0.1) &
4.03 & \textbf{3.20} & \textbf{5.03} & OOM \\
\bottomrule
\end{tabular}
\end{adjustbox}
\label{tab:scalar-llama2-speedup}
\vspace{-0.5em}
\end{table*}
GuidedQuant leverages existing CUDA kernels (Any-Precision-LLM kernel \citep{park2024any} for weight-only scalar and QTIP kernel \citep{tseng2024qtip} for weight-only vector quantization) and optimizes assignment and codebook values, thus achieving improved performance without sacrificing inference throughput. To validate this, we compare weight-only scalar PTQ results on Llama-2 models across methods using the same CUDA kernel, as shown in \cref{tab:scalar-llama2-speedup}. Specifically, we report perplexity and end-to-end throughput for SqueezeLLM, LNQ, and LNQ + GuidedQuant, all using the Any-Precision Kernel \citep{park2024any}. Throughput is measured on an RTX 3090 GPU as the average of 5 runs, with standard deviation in parentheses. Results confirm that our methods (LNQ and LNQ + GuidedQuant) achieve better perplexity while maintaining the same throughput as other method using the identical kernel.

\subsection{Evaluations on Zero-shot and Few-shot Downstream Benchmarks}

In this section, we provide the evaluations on zero-shot and few-shot downstream tasks of our methods (LNQ and LNQ + GuidedQuant) alongside baselines (SqueezeLLM and GPTVQ 1D) under the weight-only scalar quantization settings, using Llama-2-7B and Llama-2-13B models, in \cref{tab:wo_downstream}. The evaluation includes eight zero-shot tasks: BoolQ \citep{clark2019boolq}, PIQA \citep{bisk2020piqa}, SIQA \citep{sap2019siqa}, HellaSwag \citep{zellers2019hellaswag}, WinoGrande \citep{sakaguchi2019winogrande}, ARC-easy \citep{clark2018think}, ARC-challenge \citep{clark2018think}, and OBQA \citep{mihaylov2018can}. For a few-shot benchmark, we include results on the MMLU \citep{hendrycks2021measuring} benchmark in a 5-shot setting.
We evaluate on these tasks using version 0.4.3 of the \texttt{lm-evaluation-harness} library \citep{eval-harness}.

\cref{tab:wo_downstream} reports both accuracy and standard error for all methods. We highlight the best-performing results, as well as those whose accuracy falls within the top score $\pm$ standard error, under the same bit width constraint. The results show that LNQ combined with GuidedQuant consistently matches or surpasses baseline performance, with notable improvements in extreme quantization scenarios, such as 2-bit quantization.

\begin{table*}[t]
\caption{Weight-only scalar post-training quantization results, evaluated on zero-shot and few-shot downstream tasks. Zero-shot Avg denotes the average accuracy across eight zero-shot tasks: BoolQ, PIQA, SIQA, HellaSwag, WinoGrande, ARC-easy, ARC-challenge, and OBQA. For the few-shot benchmark, MMLU (5-shot) denotes accuracy on the MMLU benchmark in a 5-shot setting.
We report the standard error in parentheses and bold the best results, as well as those whose accuracy score falls within the top score $\pm$ standard error.
}
\vspace{0.5em}
\centering
\begin{adjustbox}{max width=1.0\columnwidth}
\begin{tabular}{l ccc ccc}
\toprule
& \multicolumn{3}{c}{Llama-2-7B} & \multicolumn{3}{c}{Llama-2-13B} \\
\cmidrule(lr){2-4}\cmidrule(lr){5-7}
Method & 
Bits$\downarrow$ & 
Zero-shot Avg$\uparrow$ & 
MMLU (5-shot)$\uparrow$ & 
Bits$\downarrow$ & 
Zero-shot Avg$\uparrow$ & 
MMLU (5-shot)$\uparrow$ 
\\
\cmidrule(lr){1-1}\cmidrule(lr){2-2}\cmidrule{3-4}\cmidrule(lr){5-5}\cmidrule{6-7}
Original & 
16 & 59.88 (0.43) & 45.97 (0.41) &  %& 
16 & 62.80 (0.43) & 54.93 (0.40) \\
\cmidrule(lr){1-1}\cmidrule(lr){2-2}\cmidrule{3-4}\cmidrule(lr){5-5}\cmidrule{6-7}
SqueezeLLM &  
2.01 & 41.80 (0.41) & 24.75 (0.36) & %&
2.01 & 42.44 (0.41) & 24.47 (0.36) \\
GPTVQ 1D  & 
2.03 & 37.35 (0.40) & 26.56 (0.37) & %-- & 
2.03 & 46.34 (0.41) & 29.63 (0.38) \\
LNQ  (Ours) & 
2.01 & 40.30 (0.40) & 26.76 (0.37) &  %& 
2.01 & 49.51 (0.42) & 32.51 (0.39) \\
LNQ + GuidedQuant (Ours) & 
2.01 & \textbf{50.39 (0.43)} & \textbf{31.53 (0.39)}  &  %& 
2.01 & \textbf{53.98 (0.43)} & \textbf{40.15 (0.41)} \\
\cmidrule(lr){1-1}\cmidrule(lr){2-2}\cmidrule{3-4}\cmidrule(lr){5-5}\cmidrule{6-7}
SqueezeLLM  &  
3.03 & 57.55 (0.43) & 40.59 (0.41) &  %& 
3.02 & \textbf{61.16 (0.43)} & 49.94 (0.40) \\
GPTVQ 1D  &  
3.03 & 54.92 (0.43) & 41.08 (0.41) &  %& 
3.03 & 60.38 (0.43) & 52.06 (0.40) \\
LNQ  (Ours) & 
3.03 & 56.85 (0.43) & 42.18 (0.41) &  %& 
3.02 & 60.61 (0.43) & 51.62 (0.40) \\
LNQ + GuidedQuant (Ours) & 
3.03 & \textbf{58.16 (0.43)} & \textbf{43.38 (0.41)} &  %& 
3.02 & \textbf{61.00 (0.43)} & \textbf{52.67 (0.40)} \\
\cmidrule(lr){1-1}\cmidrule(lr){2-2}\cmidrule{3-4}\cmidrule(lr){5-5}\cmidrule{6-7}
SqueezeLLM & 
4.05 & \textbf{59.41 (0.43)} & \textbf{44.79 (0.41)} &  %& 
4.04 & \textbf{62.32 (0.43)} & 54.52 (0.40) \\
GPTVQ 1D &  
4.06 &  \textbf{59.23 (0.43)} & \textbf{45.06 (0.41)} &  %& 
4.06 &  \textbf{62.37 (0.43)} & \textbf{54.95 (0.40)} \\
LNQ  (Ours) & 
4.05 & \textbf{59.14 (0.43)} & 44.51 (0.41) &  %& 
4.04 & \textbf{62.40 (0.43)} & \textbf{54.79 (0.40)} \\
LNQ + GuidedQuant (Ours) & 
4.05 & \textbf{59.41 (0.43)} & \textbf{45.16 (0.41)} &  %& 
4.04 & \textbf{62.17 (0.43)} & 54.39 (0.40) \\
\bottomrule
\end{tabular}
\end{adjustbox}
\label{tab:wo_downstream}
\vspace{-1em}
\end{table*}

\subsection{Results on Varying the Number of Groups $g$}
\label{sec:group}
In this section, we present results on how varying the number of groups $g$ (introduced in \cref{sec:challenge}) affects performance, focusing on whether fewer groups preserve accuracy or introduce trade-offs when averaging the Hessian within each group. \cref{tab:group} summarizes the impact of changing $g$ under a non-uniform scalar quantization scheme. 
While increasing $g$ can moderately improve results in extreme cases (e.g., quantizing models into 2 bits), performance differences across the number of groups remain minimal in other scenarios.
Note that for weight-only quantization experiments, we chose $g=4$ for Llama-2-7B and Llama-2-13B, and $g=2$ for Llama-2-70B. Still, smaller number of groups are sufficient for achieving most of the performance gains, making them a practical choice for resource-constrained scenarios.

\begin{table*}[t]
\caption{Results with different number of groups $g$ in weight-only post-training quantization results on non-uniform scalar quantization format, \textit{without fine-tuning} to the end-to-end loss. Wiki2 and C4 denotes perplexity on WikiText2 and C4, respectively, which are measured with the context size of 4096.}
\vspace{0.5em}
\centering
\begin{adjustbox}{max width=1.7\columnwidth}
\begin{tabular}{lc ccc ccc ccc}
\toprule
& \multirow{2}{*}{Number of} & \multicolumn{3}{c}{Llama-2-7B} & \multicolumn{3}{c}{Llama-2-13B} & \multicolumn{3}{c}{Llama-2-70B} \\
\cmidrule(lr){3-5}\cmidrule(lr){6-8}\cmidrule(lr){9-11}
Method &
groups $g$ & 
Bits$\downarrow$ & 
Wiki2$\downarrow$ & 
C4$\downarrow$ & 
Bits$\downarrow$ & 
Wiki2$\downarrow$ & 
C4$\downarrow$ & 
Bits$\downarrow$ & 
Wiki2$\downarrow$ & 
C4$\downarrow$  \\
\cmidrule(lr){1-2}\cmidrule(lr){3-3}\cmidrule(lr){4-5}\cmidrule(lr){6-6}\cmidrule(lr){7-8}\cmidrule(lr){9-9}\cmidrule(lr){10-11}
% \cmidrule(lr){1-1}\cmidrule(lr){2-4}\cmidrule(lr){5-7}\cmidrule(lr){8-10}
Original & -- &
16 & 5.12 & 6.63 &  %& 
16 & 4.57 & 6.05 &  %& 
16 & 3.12 & 4.97 \\%&  \\
\cmidrule(lr){1-2}\cmidrule(lr){3-3}\cmidrule(lr){4-5}\cmidrule(lr){6-6}\cmidrule(lr){7-8}\cmidrule(lr){9-9}\cmidrule(lr){10-11}
LNQ & -- &
2.01 & 23.31 & 26.71 &  %& 
2.01 & 8.78  & 11.80 &  %& 
2.01 & 5.23 & 7.31\\%&  \\
LNQ + GuidedQuant & 1 & 
2.01 & 9.00 & 11.35 &  
2.01 & 7.32 & 9.29 & 
2.01 & 5.11 & 7.06\\ 
 & 2 & 
2.01 & 8.82 & 11.20 &  
2.01 & 7.18 & 9.22 & 
2.01 & 5.04 & 7.04 \\ 
 & 4 & 
2.01 & 8.83 & 11.15 &  
2.01 & 7.26 &  9.17 & 
-- & -- & -- \\ 
\cmidrule(lr){1-2}\cmidrule(lr){3-3}\cmidrule(lr){4-5}\cmidrule(lr){6-6}\cmidrule(lr){7-8}\cmidrule(lr){9-9}\cmidrule(lr){10-11}
LNQ  & -- & 
3.03 & 5.89 & 7.74 &  %& 
3.02 & 5.02 & 6.68 &  %& 
3.01 & 3.50 & 5.31\\%&  \\
LNQ + GuidedQuant & 1 & 
3.03 & 5.55 & 7.23 &  
3.02 & 4.92 & 6.49 & 
3.01 & 3.46 & 5.27 \\ 
 & 2 & 
3.03 & 5.57 & 7.22 &  
3.02 & 4.92 & 6.49 & 
3.01 & 3.47 & 5.27 \\ 
 & 4 & 
3.03 & 5.57 & 7.22 &  
3.02 & 4.91 & 6.49 & 
-- & -- & -- \\ 
\cmidrule(lr){1-2}\cmidrule(lr){3-3}\cmidrule(lr){4-5}\cmidrule(lr){6-6}\cmidrule(lr){7-8}\cmidrule(lr){9-9}\cmidrule(lr){10-11}
LNQ  & -- & 
4.05 & 5.26 & 6.82 &  %& 
4.04 & 4.67 & 6.17 &  %& 
4.03 & 3.20 & 5.04 \\%&  \\
LNQ + GuidedQuant & 1 & 
4.05 & 5.21 & 6.75 &  
4.04 & 4.65 & 6.14 & 
4.03 & 3.20 & 5.03\\ 
 & 2 & 
4.05 & 5.22 & 6.75 &  
4.04 & 4.65 & 6.14 & 
4.03 & 3.20 & 5.03 \\ 
 & 4 & 
4.05 & 5.21 & 6.75 &  
4.04 & 4.65 & 6.14 & 
-- & -- & -- \\ 
\bottomrule
\end{tabular}
\end{adjustbox}
\label{tab:group}
\vspace{-1em}
\end{table*}

\subsection{Ablation Study on Assignments Optimization in LNQ}
\label{sec:abl_disc}

In this section, we evaluate our choice of using cyclic CD algorithm instead of GPTQ to solve Problem \eqref{eq:nuq-master} for a fixed codebook $\bc^{(j)}$ in LNQ. In particular, we compare two variants of LNQ with the GuidedQuant objective:  the variant described in \cref{sec:lnq_alg}, which updates the assignments using cyclic CD, and alternative variant that uses GPTQ for assignments updates. Both variants update the codebook using the closed-form solution in \eqref{eq:conti-sol}. We report the results on Llama-2-7B model, evaluated on  WikiText2 and C4 datasets, in \cref{tab:abl_cd}.
Our experiments show that CD consistently outperforms or matches GPTQ, validating our choice of using CD to  optimize the assignment matrix $\bP^{(j)}$.

%In \cref{tab:abl_cd}, we report the results of the ablation experiments evaluating our choice to optimize $\bP$ (with a fixed $\bc$) using the coordinate descent algorithm for Problem \eqref{eq:nuq-master}.
%We compare the coordinate descent algorithm with the GPTQ algorithm, which is previously employed for discrete assignment optimization in GPTVQ \citep{van2024gptvq}, with both methods optimizing the codebook via the solution of least squares problem.
%We present results obtained using the GuidedQuant objective together with the LNQ algorithm in Llama-2-7B model, evaluated in WikiText2 and C4 dataset.

\begin{table*}[t]
\caption{Ablation study on optimizing discrete assignment $\bP$ in Problem \eqref{eq:nuq-master}. We compare two algorithms for optimizing discrete assignments; GPTQ and coordinate descent algorithm. Wiki2 and C4 denotes perplexity on WikiText2 and C4, respectively, which are measured with the context size of 4096.}
\vspace{0.5em}
\centering
\begin{adjustbox}{max width=1.7\columnwidth}
\begin{tabular}{ll ccc ccc ccc}
\toprule
& \multirow{2}{*}{Optimization} & \multicolumn{3}{c}{Llama-2-7B} & \multicolumn{3}{c}{Llama-2-13B} & \multicolumn{3}{c}{Llama-2-70B}  \\
\cmidrule(lr){3-5}\cmidrule(lr){6-8}\cmidrule(lr){9-11}
Method &
method for $\bP$ & 
Bits$\downarrow$ & 
Wiki2$\downarrow$ & 
C4$\downarrow$ &
Bits$\downarrow$ & 
Wiki2$\downarrow$ & 
C4$\downarrow$ &
Bits$\downarrow$ & 
Wiki2$\downarrow$ & 
C4$\downarrow$ \\
\cmidrule(lr){1-2}\cmidrule(lr){3-3}\cmidrule(lr){4-5}\cmidrule(lr){6-6}\cmidrule(lr){7-8}\cmidrule(lr){9-9}\cmidrule(lr){10-11}
% \cmidrule(lr){1-1}\cmidrule(lr){2-4}\cmidrule(lr){5-7}\cmidrule(lr){8-10}
Original & -- &
16 & 5.12 & 6.63 &  %& 
16 & 4.57 & 6.05 &
16 & 3.12 & 4.97\\
\cmidrule(lr){1-2}\cmidrule(lr){3-3}\cmidrule(lr){4-5}\cmidrule(lr){6-6}\cmidrule(lr){7-8}\cmidrule(lr){9-9}\cmidrule(lr){10-11}
LNQ + GQuant & GPTQ &
2.01 & 9.65 & 11.83 &  %& 
2.01 & 7.96 & 11.65 &
2.01 & \textbf{4.92} & \textbf{6.93}\\
 & Coordinate Descent & 
2.01 &  \textbf{8.83} &  \textbf{11.15} &  
2.01 & \textbf{7.26} & \textbf{9.17} &
2.01 & 5.04 & 7.04\\
\cmidrule(lr){1-2}\cmidrule(lr){3-3}\cmidrule(lr){4-5}\cmidrule(lr){6-6}\cmidrule(lr){7-8}\cmidrule(lr){9-9}\cmidrule(lr){10-11}
LNQ + GQuant  & GPTQ & 
3.03 & 5.58 & 7.25 &  %& 
3.02 & \textbf{4.91} & 6.50 &
3.01 & \textbf{3.47} & \textbf{5.27}\\
 & Coordinate Descent & 
3.03 & \textbf{5.57} & \textbf{7.22}  &  
3.02 & \textbf{4.91} & \textbf{6.49} &
3.01 & \textbf{3.47} & \textbf{5.27}\\
\cmidrule(lr){1-2}\cmidrule(lr){3-3}\cmidrule(lr){4-5}\cmidrule(lr){6-6}\cmidrule(lr){7-8}\cmidrule(lr){9-9}\cmidrule(lr){10-11}
LNQ + GQuant  & GPTQ & 
4.05 & 5.22 & \textbf{6.75} &  %& 
4.04 & \textbf{4.65} & \textbf{6.14} & 
4.03 & \textbf{3.20} & \textbf{5.03} \\
 & Coordinate Descent & 
4.05 & \textbf{5.21} & \textbf{6.75} &  
4.04 & \textbf{4.65} & \textbf{6.14} &
4.03 & \textbf{3.20} & \textbf{5.03}\\
\bottomrule
\end{tabular}
\end{adjustbox}
\label{tab:abl_cd}
\end{table*}

\subsection{End-to-end Fine-tuning Results}
\label{sec:e2e}
Recent weight-only quantization methods have explored fine-tuning quantized models using extensive data and compute to improve performance for low-bit models \citep{tseng2024quip, tseng2024qtip, malinovskii2024pvtuning}. In \cref{tab:scalar_ft}, we summarize the performance of quantized models after further fine-tuning on end loss using more data and compute for scalar weight-only quantization. 
We implement PV-Tuning \citep{malinovskii2024pvtuning} in non-uniform scalar quantization setting and report the performance of both our model and SqueezeLLM after fine-tuning with it.
For SqueezeLLM and LNQ + GuidedQuant, we obtain the results using the official open-source implementation of PV-Tuning.
Our fine-tuning setup uses training data from RedPajama dataset \citep{together2023redpajama}, with a context size of 4096 tokens, a batch size of 128 sentences, and fine-tuning for 128 steps in 2-bit quantization and 32 steps in 3-bit quantization.
For GPTQ (uniform scalar quantization), we report the results from the PV-Tuning paper \citep{malinovskii2024pvtuning}.

The results in \cref{tab:scalar_ft} show that our method remains superior, though the gap narrows at larger bit-widths. We hypothesize that existing PTQ methods, which rely on less accurate surrogate objectives, have smaller gaps at higher bit-widths, allowing fine-tuning to narrow the difference. However, in more extreme compression settings, where the gap is wider, our method maintains its advantage even after fine-tuning.

% For QTIP, the authors reported their own fine-tuned results using the fine-tuning algorithm from QuIP\# \citep{tseng2024quip, tseng2024qtip}. We follow their fine-tuning procedure while using the same calibration and fine-tuning data, applying the process to both our model and QTIP. We report the HYB variant of QTIP for here, following the original paper \citep{tseng2024qtip}.
% The results in \cref{tab:scalar_ft} show that our method remains superior, though the gap narrows at larger bit-widths. We hypothesize that existing PTQ methods, which rely on less accurate surrogate objectives, have smaller gaps at higher bit-widths, allowing fine-tuning to narrow the difference. However, in more extreme compression settings, where the gap is wider, our method maintains its advantage even after fine-tuning.

\begin{table}[t]
\caption{Weight-only quantization results on Llama-2-7B model \textit{after fine-tuning} with end-to-end loss. For scalar quantization methods, we report the performance after fine-tuning with PV-Tuning \citep{malinovskii2024pvtuning}. %For vector quantization methods, we use use QuIP\#-style fine-tuning for QTIP \citep{tseng2024quip,tseng2024qtip}.
}
\vspace{0.5em}
\centering
\begin{adjustbox}{max width=1.0\columnwidth}
\begin{tabular}{c l cc cc}
\toprule
% & \multicolumn{2}{c}{Llama-2-7B} & \multicolumn{2}{c}{Llama-2-13B} & \multicolumn{2}{c}{Llama-2-70B} \\
% \cmidrule(lr){2-3}\cmidrule(lr){4-5}\cmidrule(lr){6-7}
& Method &  
Bits$\downarrow$ &  
Wiki2$\downarrow$ & 
C4$\downarrow$ 
\\
% \cmidrule(lr){1-1}\cmidrule(lr){2-4}\cmidrule(lr){5-7}\cmidrule(lr){8-10}
% \cmidrule(lr){1-2}\cmidrule{3-4}\cmidrule(lr){5-6}\cmidrule{7-8}
\cmidrule(lr){2-2}\cmidrule(lr){3-3}\cmidrule(lr){4-5}
Type  & Original & 16 & 5.12 & 6.63 \\
\cmidrule(lr){1-1}\cmidrule(lr){2-2}\cmidrule(lr){3-3}\cmidrule(lr){4-5}
\multirow{5}{*}{\shortstack[c]{Weight-only\\ Scalar}}
&GPTQ 
&2.14 & 8.43 & 10.82
\\
&SqueezeLLM 
& 2.01 & 6.78 & 8.82
\\
&LNQ + GQuant (Ours)
& 2.01 & \textbf{6.53} & \textbf{8.53}
\\
\cmidrule(lr){2-2}\cmidrule(lr){3-3}\cmidrule(lr){4-5}
&SqueezeLLM 
& 3.03 & 5.53 & 7.23
\\
&LNQ + GQuant (Ours)
& 3.03 & \textbf{5.50} & \textbf{7.14}
\\
% \cmidrule(lr){2-2}\cmidrule(lr){3-3}\cmidrule(lr){4-5}
% &SqueezeLLM 
% & 4.05 & 5.21 & 6.77
% \\
% &LNQ + GQuant (Ours)
% & 4.05 & \textbf{5.20} & \textbf{6.75}
% \\
% \cmidrule(lr){1-1}\cmidrule(lr){2-2}\cmidrule(lr){3-3}\cmidrule(lr){4-5}
% \multirow{4}{*}{\shortstack[c]{Weight-only\\ Vector}}
% &QTIP
% & 2.00 & 5.83 & 7.67
% \\
% &QTIP + GQuant (Ours)
% & 2.00 & \textbf{5.80} & \textbf{7.66}
% \\
% \cmidrule(lr){2-2}\cmidrule(lr){3-3}\cmidrule(lr){4-5}
% &QTIP 
% & 3.00 & 5.28 & 6.87
% \\
% &QTIP + GQuant (Ours)
% & 3.00 & \textbf{5.27} & \textbf{6.86}
% \\
\bottomrule
\end{tabular}
\end{adjustbox}
\label{tab:scalar_ft}
\vspace{-1em}
\end{table}

\subsection{Results on Smaller Bit-width in Weight-and-activation Quantization}

In weight-and-activation quantization, we further conduct an additional experiments with lower bit-widths for weights, specifically 2-bit and 3-bit, while keeping activations and KV caches at 4-bit precision (denoted as W2A4KV4 and W3A4KV4, respectively), on Llama-2-7B model. The results, shown in \cref{tab:wa_extreme}, demonstrate that GuidedQuant outperforms baseline methods by larger margin in these more extreme scenarios, highlighting the strength of our approach under stricter bit-width constraints.

\begin{table}[t]
\caption{Weight-and-activation quantization results on Llama-2-7B model, while quantizing weights into 2- and 3-bits. Wiki2 denotes perplexity on Wikitext2 with the context size of 2048. W$x$A$y$KV$z$ indicates quantizing weights into $x$-, activations into $y$-, and KV cache to $z$-bits, respectively.}
\vspace{0.5em}
\centering
\begin{adjustbox}{max width=1.0\columnwidth}
\begin{tabular}{l cc}
\toprule
% & L-2-7B & L-2-13B \\
Method & 
Bits$\downarrow$ & 
Wiki2$\downarrow$ \\
\cmidrule(lr){1-1}\cmidrule(lr){2-2}\cmidrule{3-3}
Original & 
16 & 5.12  \\
\cmidrule(lr){1-1}\cmidrule(lr){2-2}\cmidrule{3-3}
SpinQuant  & W2A4KV4 & 100.22 \\
SpinQuant + GQuant (Ours) & W2A4KV4 & \textbf{36.05} \\
\cmidrule(lr){1-1}\cmidrule(lr){2-2}\cmidrule{3-3}
SpinQuant  & W3A4KV4 & 6.61 \\
SpinQuant + GQuant (Ours) & W3A4KV4 & \textbf{6.29} \\
\bottomrule
\end{tabular}
\end{adjustbox}
\label{tab:wa_extreme}
\end{table}

\subsection{Comparison with mixed-precision variant of SqueezeLLM}

The dense-and-sparse variant of SqueezeLLM \citep{kim2023squeezellm}, which preserves a small fraction of weights in 16-bit precision to maintain accuracy, is orthogonal to our method and can be combined with it. 
Accordingly, in \cref{tab:wo_dns}, we report results for SqueezeLLM, LNQ, and LNQ + GuidedQuant methods, with the dense-and-sparse approach applied to all of them, using the identical experimental setting with \cref{tab:scalar}.
Following the original SqueezeLLM paper, we retain 0.45\% of the weights in 16-bit and evaluate with 2-, 3-, and 4-bit quantization on the Llama-2-7B model. 
The results show that LNQ with GuidedQuant consistently outperforms the baselines in the dense-and-sparse setting as well, demonstrating the superiority and robustness of our method.

\begin{table}[t]
\caption{Weight-only scalar post-training quantization results on Llama-2-7B model, evaluated under a dense-and-sparse setting, preserving 0.45\% of the weights in 16 bits. Wiki2 and C4 denotes perplexity on WikiText2 and C4, respectively. The perplexity is measured with the context size of 4096.}
\vspace{0.5em}
\centering
\begin{adjustbox}{max width=1.0\columnwidth}
\begin{tabular}{l ccc}
\toprule
% & \multicolumn{3}{c}{Llama-2-7B} \\
% \cmidrule(lr){2-4}
Method & 
Bits$\downarrow$ & 
Wiki2$\downarrow$ & 
C4$\downarrow$\\
\cmidrule(lr){1-1}\cmidrule(lr){2-2}\cmidrule{3-4}
Original & 
16 & 5.12 & 6.63 \\
\cmidrule(lr){1-1}\cmidrule(lr){2-2}\cmidrule{3-4}
SqueezeLLM (0.45\%) & 2.22 & 10.64 & 14.10 \\
LNQ (0.45\%) (Ours) & 2.22 & 8.26 & 10.34 \\
LNQ + GuidedQuant (0.45\%) (Ours) & 2.22 & \textbf{8.00} & \textbf{10.18} \\
\cmidrule(lr){1-1}\cmidrule(lr){2-2}\cmidrule{3-4}
SqueezeLLM (0.45\%) & 3.24 & 5.58 & 7.23 \\
LNQ (0.45\%) (Ours) & 3.24 & 5.49 & 7.15 \\
LNQ + GuidedQuant (0.45\%) (Ours) & 3.24 & \textbf{5.48} & \textbf{7.12} \\
\cmidrule(lr){1-1}\cmidrule(lr){2-2}\cmidrule{3-4}
SqueezeLLM (0.45\%) & 4.27 & 5.22 & 6.75 \\
LNQ (0.45\%) (Ours) & 4.27 & \textbf{5.20} & 6.74 \\
LNQ + GuidedQuant (0.45\%) (Ours) & 4.27 & \textbf{5.20} & \textbf{6.73} \\
\bottomrule
\end{tabular}
\end{adjustbox}
\label{tab:wo_dns}
\end{table}

\subsection{Results on Different QTIP Variants (1MAD, 3INST, HYB)}
\label{sec:qtip_var}
\begin{table*}[t]
\caption{Weight-only post-training quantization results on different QTIP variants (1MAD, 3INST, HYB), \textit{without fine-tuning} to the end-to-end loss. Wiki2 and C4 denotes perplexity on WikiText2 and C4, respectively, which are measured with the context size of 4096.}
\vspace{0.5em}
\centering
\begin{adjustbox}{max width=1.7\columnwidth}
\begin{tabular}{cl c cc cc cc}
\toprule
& & & \multicolumn{2}{c}{Llama-2-7B} & \multicolumn{2}{c}{Llama-2-13B} & \multicolumn{2}{c}{Llama-2-70B} \\
\cmidrule(lr){4-5}\cmidrule(lr){6-7}\cmidrule(lr){8-9}
Variant &
Method & 
Bits$\downarrow$ & 
Wiki2$\downarrow$ & 
C4$\downarrow$ & 
Wiki2$\downarrow$ & 
C4$\downarrow$ & 
Wiki2$\downarrow$ & 
C4$\downarrow$  \\
\cmidrule(lr){1-2}\cmidrule(lr){3-3}\cmidrule(lr){4-5}\cmidrule(lr){6-7}\cmidrule(lr){8-9}
% \cmidrule(lr){1-1}\cmidrule(lr){2-4}\cmidrule(lr){5-7}\cmidrule(lr){8-10}
& Original & 
16 & 5.12 & 6.63 &  %& 
 4.57 & 6.05 &  %& 
 3.12 & 4.97 \\%&  \\
\cmidrule(lr){1-2}\cmidrule(lr){3-3}\cmidrule(lr){4-5}\cmidrule(lr){6-7}\cmidrule(lr){8-9}
1MAD & QTIP & 
2.00 & 7.05 & 9.14 &  %& 
 5.59 & 7.46 &  %& 
 3.87 & 5.70 \\%&  \\
& QTIP + GQuant (Ours) & 
2.00 & \textbf{6.11} & \textbf{7.99} &  %& 
 \textbf{5.33} & \textbf{7.05} &  %& 
 \textbf{3.80} & \textbf{5.61} \\%&  \\
 \cmidrule(lr){2-2}\cmidrule(lr){3-3}\cmidrule(lr){4-5}\cmidrule(lr){6-7}\cmidrule(lr){8-9}
 & QTIP & 
3.00 & 5.38 & 6.99 &  %& 
 4.74 & 6.28 &  %& 
3.27  & 5.09 \\%&  \\
& QTIP + GQuant (Ours) & 
3.00 & \textbf{5.28} & \textbf{6.87} &  %& 
 \textbf{4.71} & \textbf{6.22} &  %& 
 \textbf{3.25} & \textbf{5.08} \\%&  \\ 
 \cmidrule(lr){2-2}\cmidrule(lr){3-3}\cmidrule(lr){4-5}\cmidrule(lr){6-7}\cmidrule(lr){8-9}
 & QTIP &
4.00 & 5.17 & 6.71 &  %& 
 4.62 & 6.10 &  %& 
 3.16 & \textbf{5.00} \\%&  \\
& QTIP + GQuant (Ours) & 
4.00 & \textbf{5.16} & \textbf{6.68} &  %& 
 \textbf{4.61} & \textbf{6.09} &  %& 
 \textbf{3.15} & \textbf{5.00} \\%&  \\ 
\cmidrule(lr){1-2}\cmidrule(lr){3-3}\cmidrule(lr){4-5}\cmidrule(lr){6-7}\cmidrule(lr){8-9}
3INST & QTIP & 
2.00 & 6.82 & 8.96 &  %& 
 5.52 & 7.39 &  %& 
 3.90 & 5.69 \\%&  \\
& QTIP + GQuant (Ours) & 
2.00 & \textbf{6.16} & \textbf{7.99} &  %& 
 \textbf{5.33} & \textbf{7.04} &  %& 
 \textbf{3.82} & \textbf{5.61} \\%&  \\
  \cmidrule(lr){2-2}\cmidrule(lr){3-3}\cmidrule(lr){4-5}\cmidrule(lr){6-7}\cmidrule(lr){8-9}
 & QTIP & 
3.00 & 5.40 & 7.01 &  %& 
 4.74 & 6.28 &  %& 
 3.27 & 5.09 \\%&  \\
& QTIP + GQuant (Ours) & 
3.00 & \textbf{5.30} & \textbf{6.87} &  %& 
 \textbf{4.70} & \textbf{6.22} &  %& 
 \textbf{3.26} & \textbf{5.08} \\%&  \\ 
 \cmidrule(lr){2-2}\cmidrule(lr){3-3}\cmidrule(lr){4-5}\cmidrule(lr){6-7}\cmidrule(lr){8-9}
 & QTIP &
4.00 & 5.17 & 6.71 & 
 4.62 & 6.10 &  %& 
 3.16 & \textbf{5.00} \\%&  \\
& QTIP + GQuant (Ours) & 
4.00 & \textbf{5.16} & \textbf{6.68} &  %& 
 \textbf{4.61} & \textbf{6.09} &  %& 
 \textbf{3.15} & \textbf{5.00} \\%&  \\ 
\cmidrule(lr){1-2}\cmidrule(lr){3-3}\cmidrule(lr){4-5}\cmidrule(lr){6-7}\cmidrule(lr){8-9}
HYB & QTIP & 
2.00 & 6.84 & 9.03 &  %& 
 5.62 & 7.46 &  %& 
 3.93 & 5.74 \\%&  \\
& QTIP + GQuant (Ours) & 
2.00 & \textbf{6.19} & \textbf{8.06} &  %& 
\textbf{5.36} & \textbf{7.10} &  %& 
\textbf{3.84} & \textbf{5.64} \\%&  \\
 \cmidrule(lr){2-2}\cmidrule(lr){3-3}\cmidrule(lr){4-5}\cmidrule(lr){6-7}\cmidrule(lr){8-9}
 & QTIP & 
3.00 & 5.39 & 7.03 &  %& 
 4.76 & 6.31 &  %& 
3.28  & 5.10 \\%&  \\
& QTIP + GQuant (Ours) & 
3.00 & \textbf{5.32} & \textbf{6.89} &  %& 
 \textbf{4.72} & \textbf{6.24} &  %& 
 \textbf{3.27} & \textbf{5.09} \\%&  \\ 
 \cmidrule(lr){2-2}\cmidrule(lr){3-3}\cmidrule(lr){4-5}\cmidrule(lr){6-7}\cmidrule(lr){8-9}
 & QTIP &
4.00 & 5.19 & 6.73 &  %& 
 4.63 & 6.12 &  %& 
 3.17 & 5.01 \\%&  \\
& QTIP + GQuant (Ours) & 
4.00 & \textbf{5.18} & \textbf{6.70} &  %& 
 \textbf{4.61} & \textbf{6.10} &  %& 
 \textbf{3.16} & \textbf{5.00} \\%&  \\ 
\bottomrule
\end{tabular}
\end{adjustbox}
\label{tab:qtip}
\end{table*}

The original QTIP paper introduced three variants of their method: 1MAD, 3INST, and HYB \citep{tseng2024qtip}. Both 1MAD and 3INST are look-up table-free methods, while HYB incorporates a small look-up table that fits within the L1 cache of modern GPUs.
The authors reported post-training quantization results without fine-tuning for the 1MAD and 3INST formats, while quantization with fine-tuning was reported for the HYB format. To maintain consistency, we report the better-performing variant between 1MAD and 3INST in \cref{tab:vector} for both QTIP and our method (QTIP + GuidedQuant). For completeness, the full performance results across 1MAD and 3INST are provided in \cref{tab:qtip}.

It is worth noting that QTIP has only open-sourced the CUDA acceleration kernel for HYB, although it is theoretically possible to implement kernels for 1MAD and 3INST. Therefore, we also include the post-training quantization results (without fine-tuning) for the HYB format as well, summarized in \cref{tab:qtip}. The results show that the variations among QTIP methods have minimal impact on the results, and our method consistently outperforms all others in \cref{tab:vector}, regardless of the QTIP variant chosen.

\subsection{Discussion on Block-diagonal Fisher Approximation}\label{app:blk_hess}

% \begin{figure}[t]
%   \centering
%   \includegraphics[width=1.0\textwidth]{}
%   \caption{Visualization result of the Fisher information matrix for the first two output channels in the \texttt{self\_attn.o\_proj} layer of the first Transformer block in Llama-2-7B model ($8192 \times 8192$ size matrix). Left: the original Fisher matrix; Middle: the WoodFisher style block-diagonal approximation (block size $B=4$); Right: the GuidedQuant approximation (group size $g=4$). Each output channel in this layer has $4096$ weights, thus the matrix represents the components that correspond to two output channels. Both approximations are compared under an equal storage budget.}
%   \label{fig:fisher_approx}
% \end{figure}

% \mtodo{I'm commenting out these figures for now to avoid slow compilation}
% \iffalse

\begin{figure}[t]
    \centering
    % Column Titles
    \makebox[0.06\textwidth]{} % Empty space for row labels
    \makebox[0.28\textwidth]{{Fisher (Original)}} \hfill
    \makebox[0.28\textwidth]{{WoodFisher}} \hfill
    \makebox[0.28\textwidth]{{GuidedQuant (Ours)}} 

   % \vspace{0.5em} % Small space before images

    \makebox[0.06\textwidth][c]{\raisebox{0.5\height}{\rotatebox{90}{ \shortstack{\texttt{self\_attn}\\\texttt{q\_proj}}}}} % Right-aligned row label
   % \hfill
    \begin{minipage}[t]{0.3\textwidth}
        \centering
        \includegraphics[width=\textwidth]{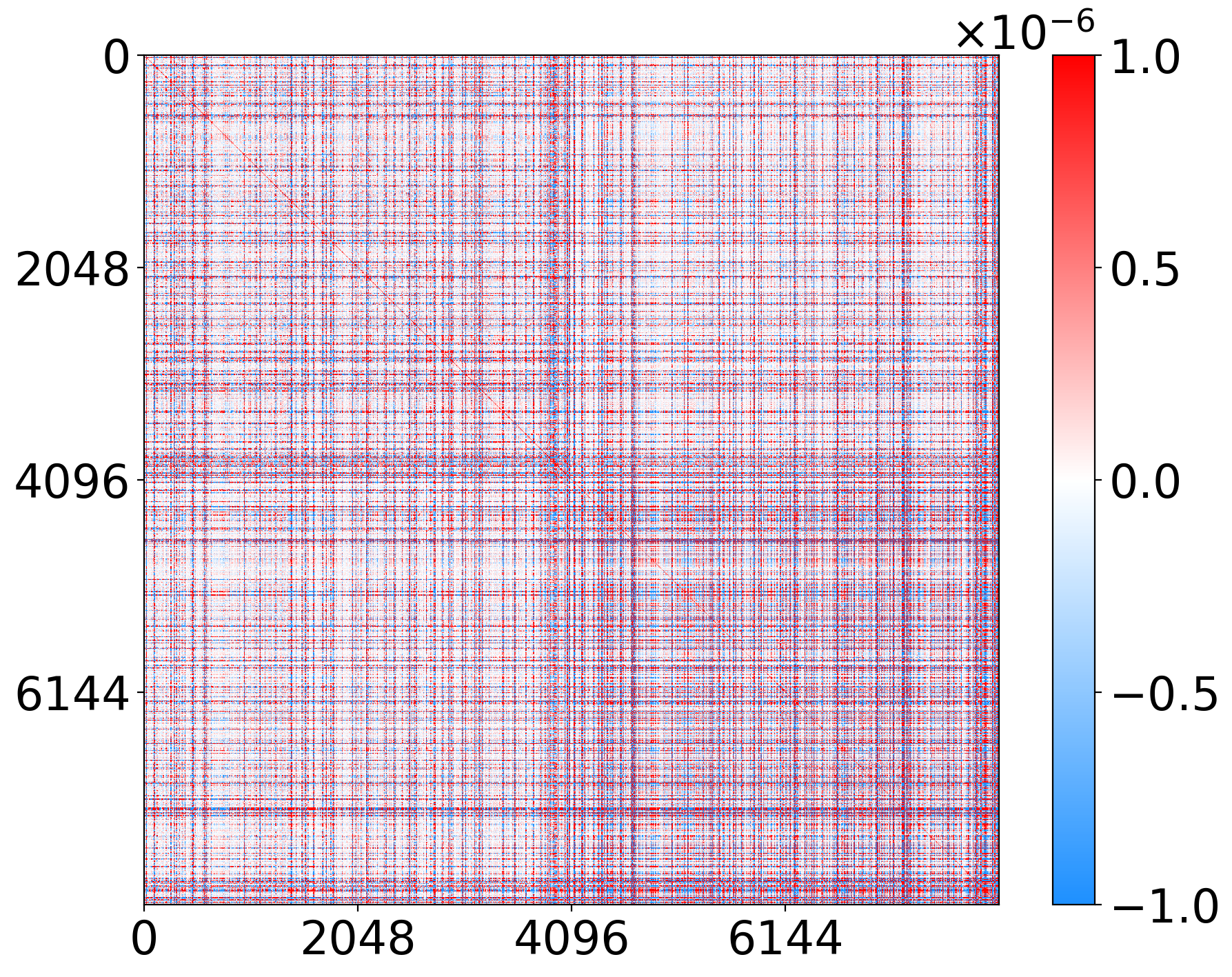}
    \end{minipage}
    \hfill
    \begin{minipage}[t]{0.3\textwidth}
        \centering
        \includegraphics[width=\textwidth]{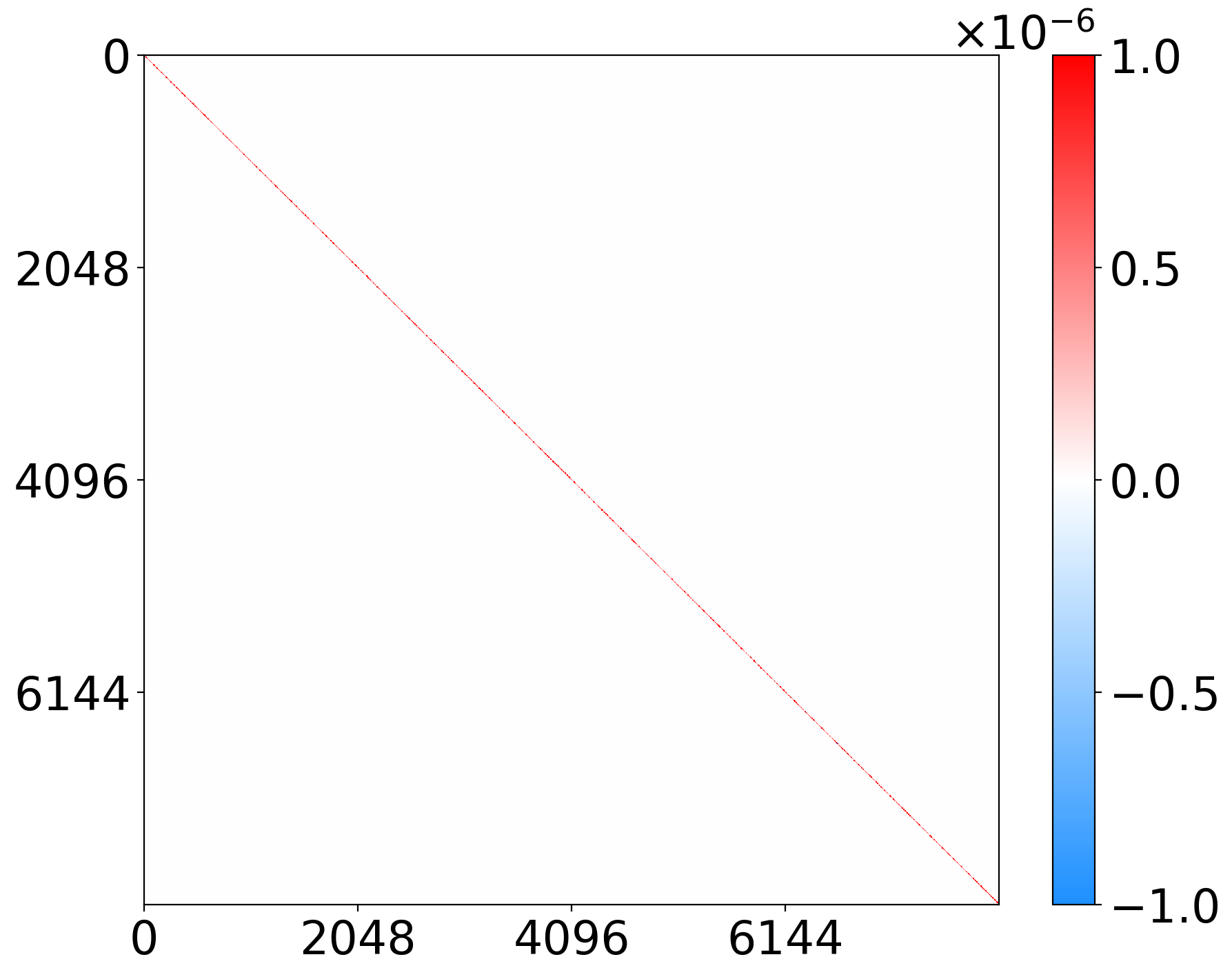}
    \end{minipage}
    \hfill
    \begin{minipage}[t]{0.3\textwidth}
        \centering
        \includegraphics[width=\textwidth]{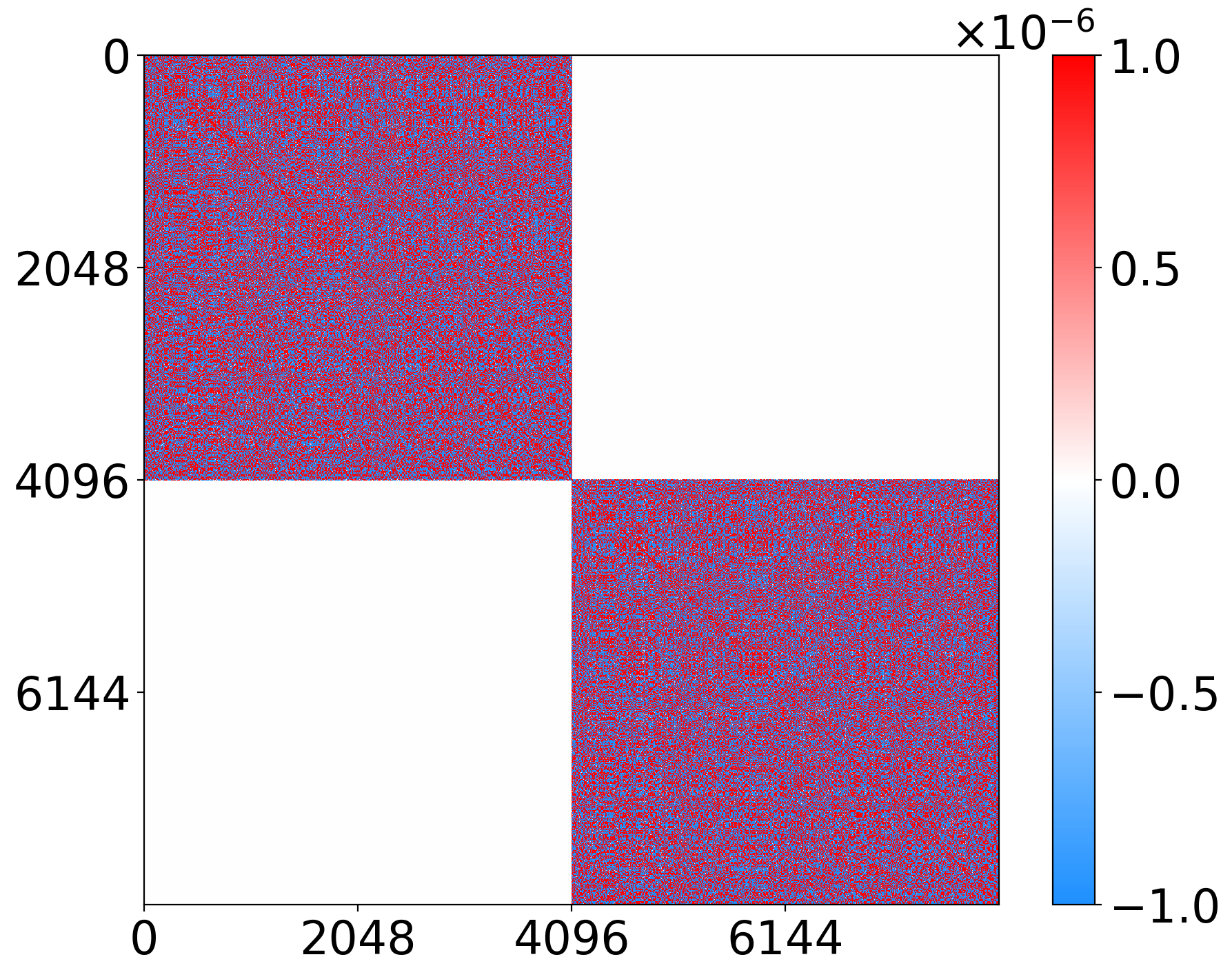}
    \end{minipage}

    \vspace{1em} % Space between rows

    \makebox[0.06\textwidth][c]{\raisebox{0.5\height}{\rotatebox{90}{\shortstack{\texttt{self\_attn}\\\texttt{k\_proj}}}}} % Right-aligned row label
    %\hfill
    \begin{minipage}[t]{0.3\textwidth}
        \centering
        \includegraphics[width=\textwidth]{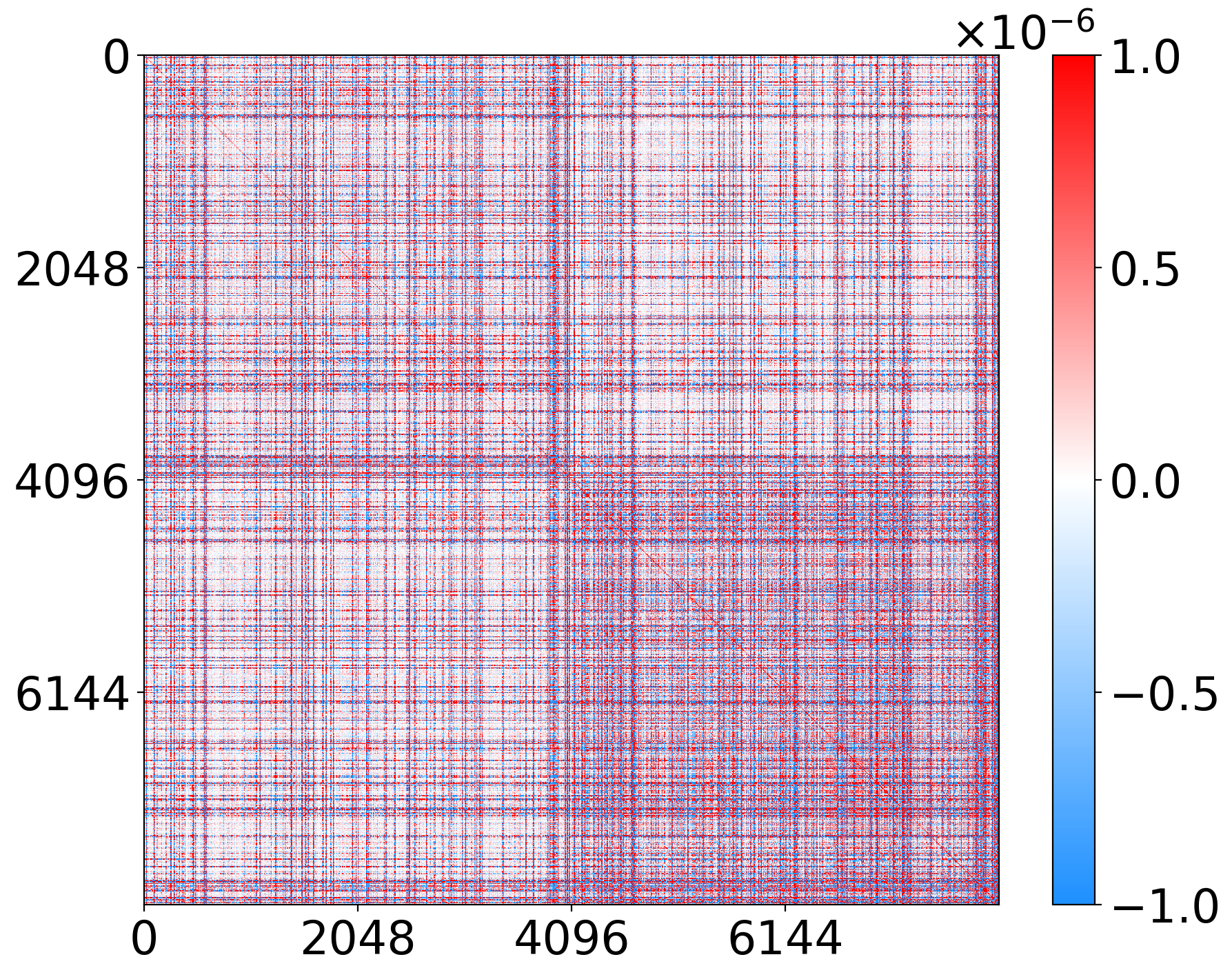}
    \end{minipage}
    \hfill
    \begin{minipage}[t]{0.3\textwidth}
        \centering
        \includegraphics[width=\textwidth]{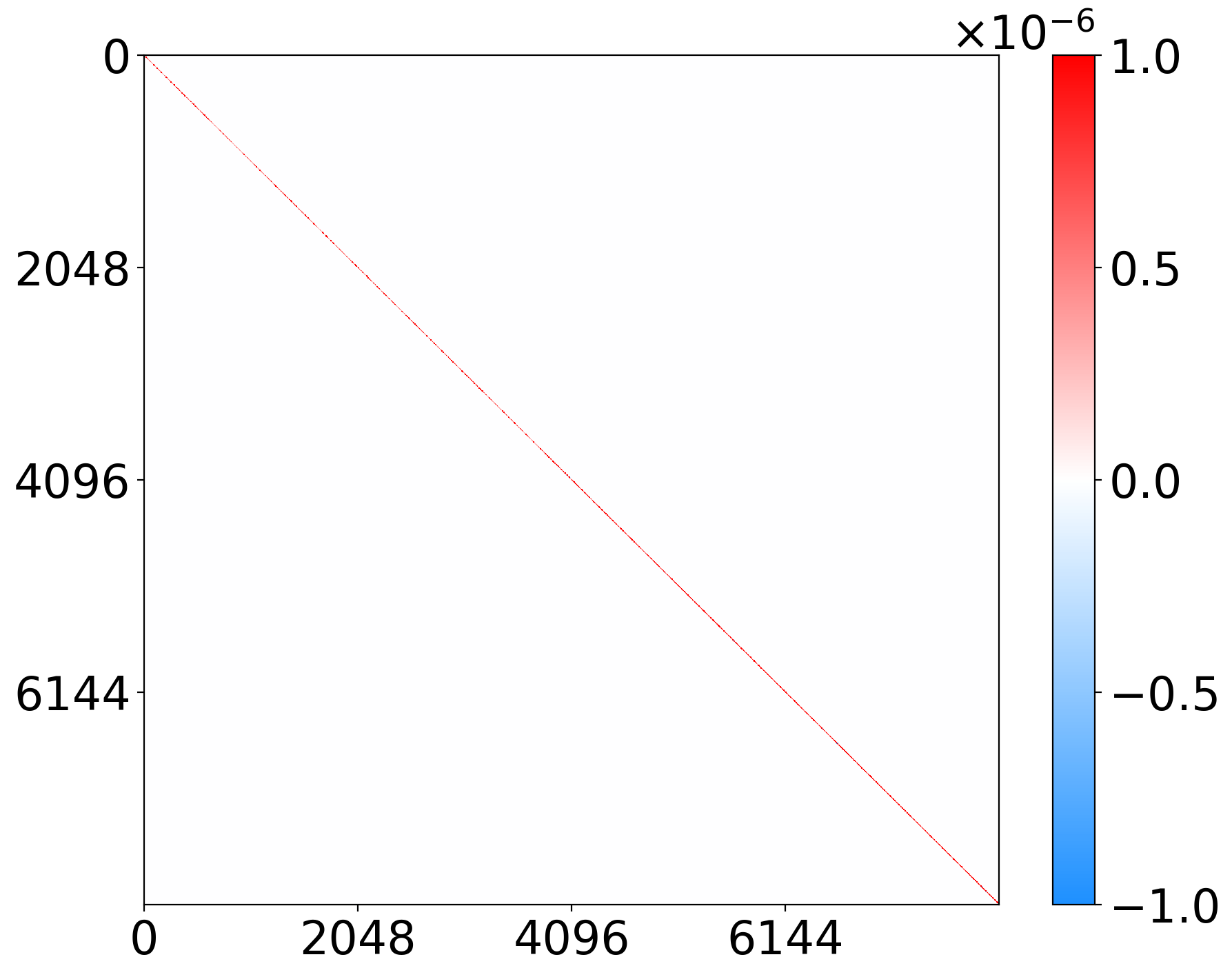}
    \end{minipage}
    \hfill
    \begin{minipage}[t]{0.3\textwidth}
        \centering
        \includegraphics[width=\textwidth]{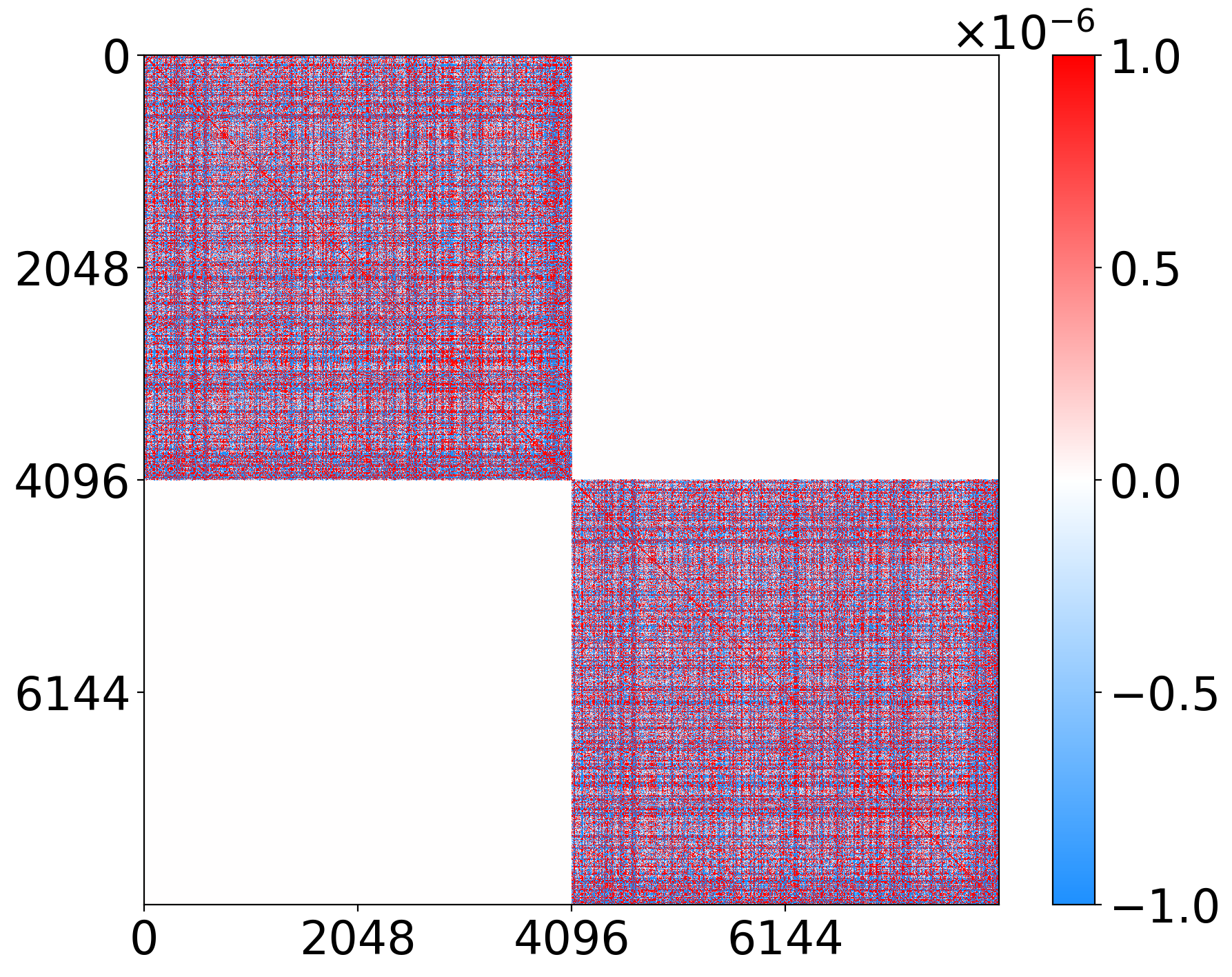}
    \end{minipage}

    \vspace{1em} % Space between rows

    \makebox[0.06\textwidth][c]{\raisebox{0.5\height}{\rotatebox{90}{\shortstack{\texttt{self\_attn}\\\texttt{v\_proj}}}}} % Right-aligned row label
   % \hfill
    \begin{minipage}[t]{0.3\textwidth}
        \centering
        \includegraphics[width=\textwidth]{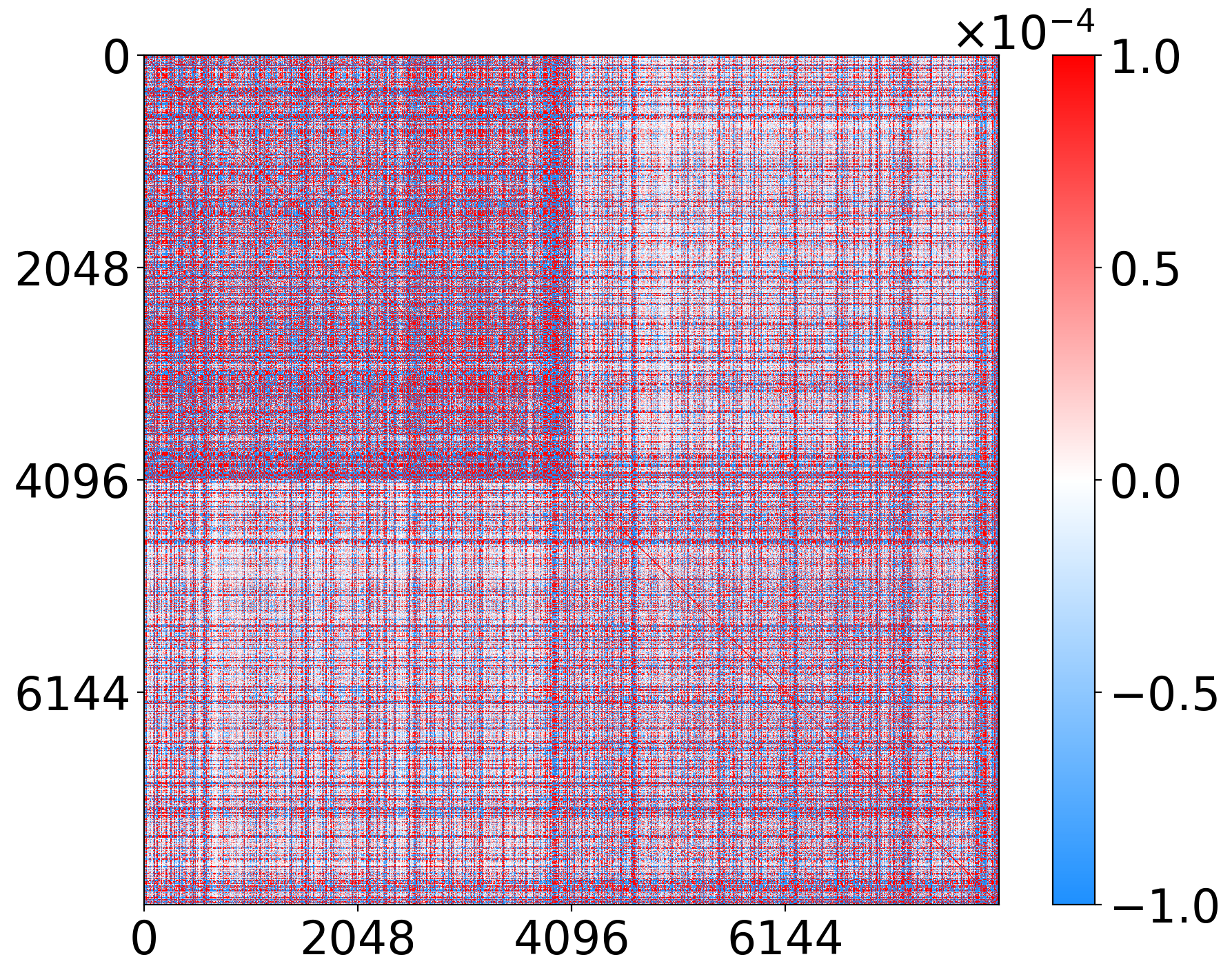}
    \end{minipage}
    \hfill
    \begin{minipage}[t]{0.3\textwidth}
        \centering
        \includegraphics[width=\textwidth]{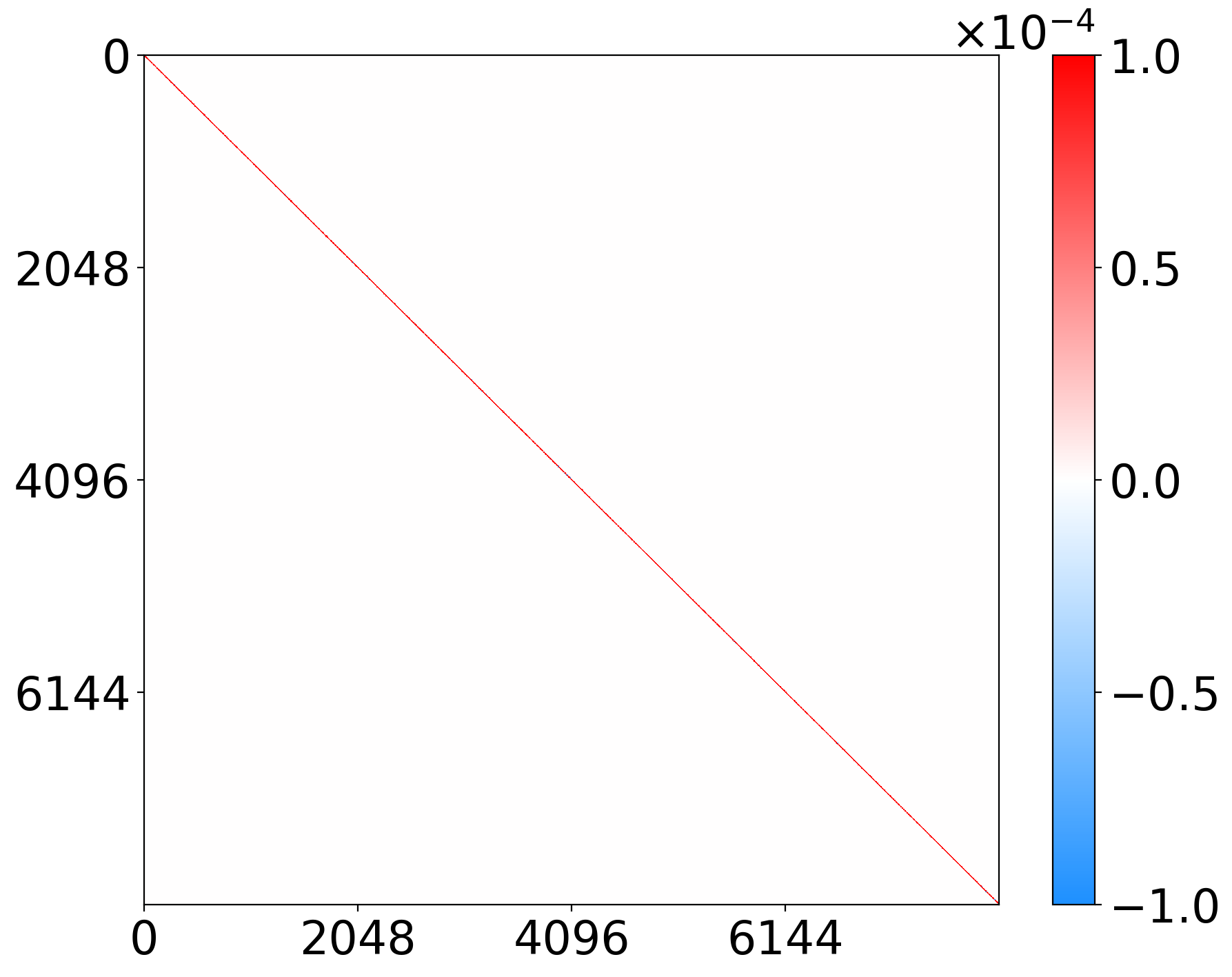}
    \end{minipage}
    \hfill
    \begin{minipage}[t]{0.3\textwidth}
        \centering
        \includegraphics[width=\textwidth]{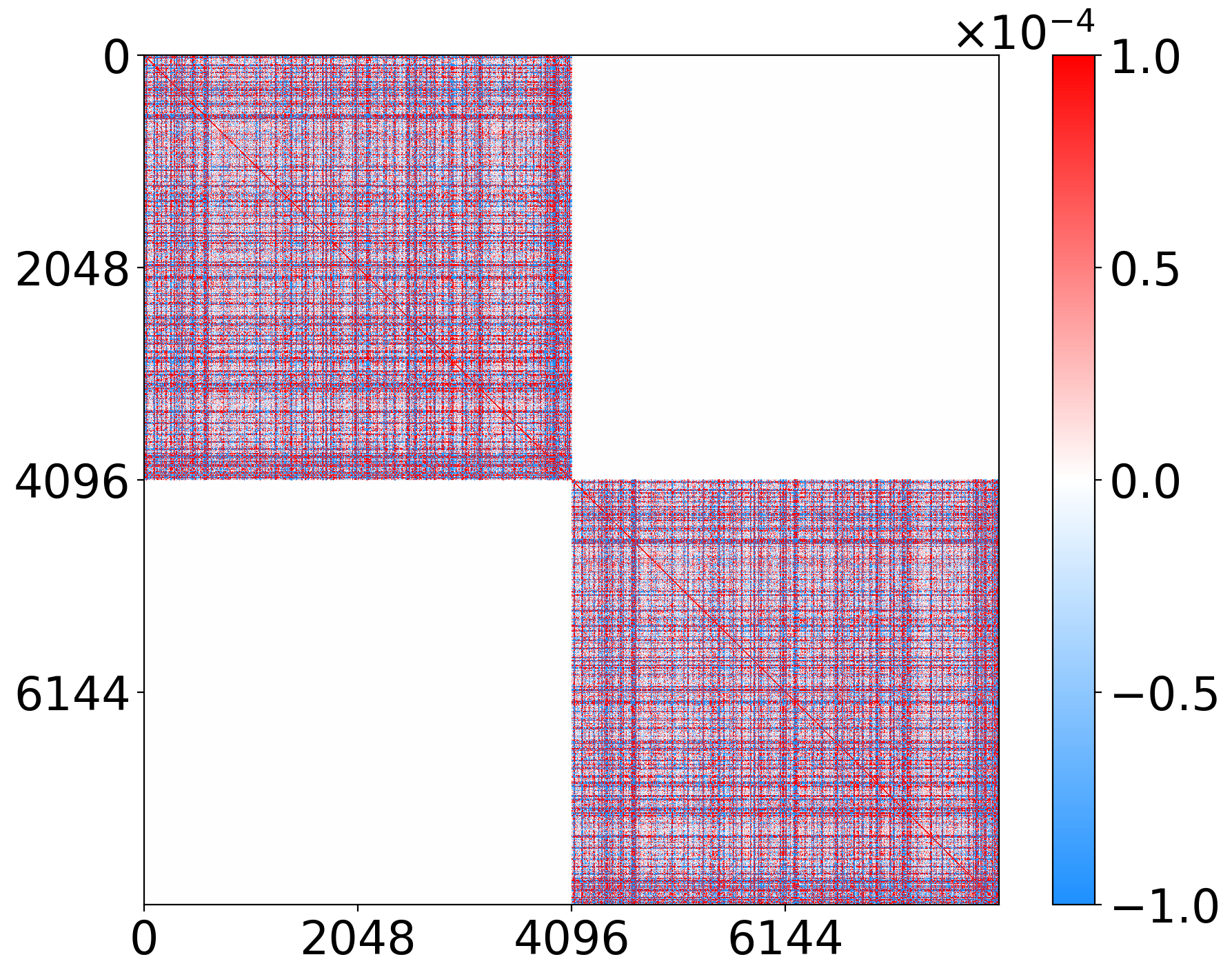}
    \end{minipage}

    \vspace{1em} % Space between rows
    
    \makebox[0.06\textwidth][c]{\raisebox{0.5\height}{\rotatebox{90}{\shortstack{\texttt{self\_attn}\\\texttt{o\_proj}}}}} % Right-aligned row label
    %\hfill
    \begin{minipage}[t]{0.3\textwidth}
        \centering
        \includegraphics[width=\textwidth]{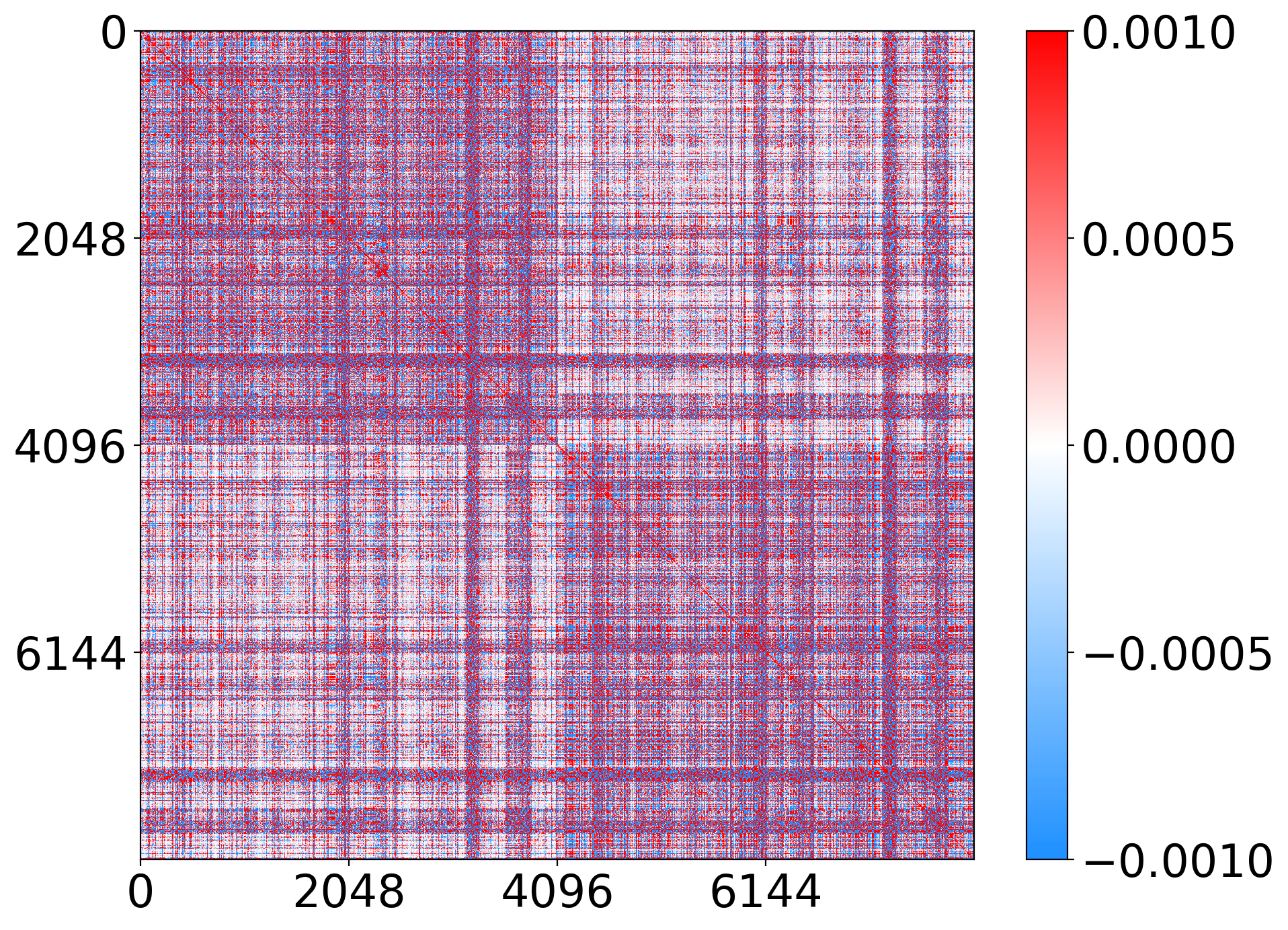}
    \end{minipage}
    \hfill
    \begin{minipage}[t]{0.3\textwidth}
        \centering
        \includegraphics[width=\textwidth]{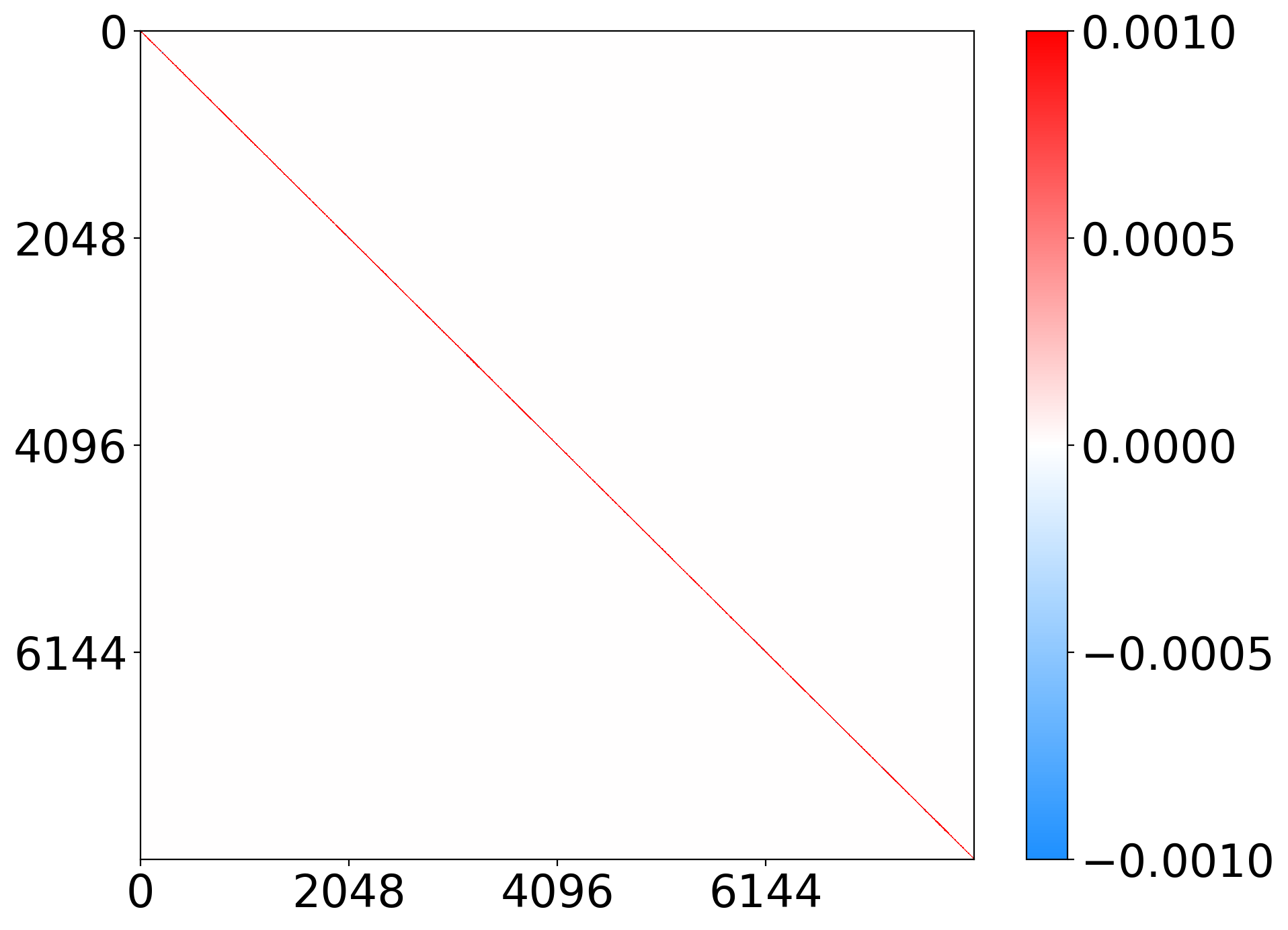}
    \end{minipage}
    \hfill
    \begin{minipage}[t]{0.3\textwidth}
        \centering
        \includegraphics[width=\textwidth]{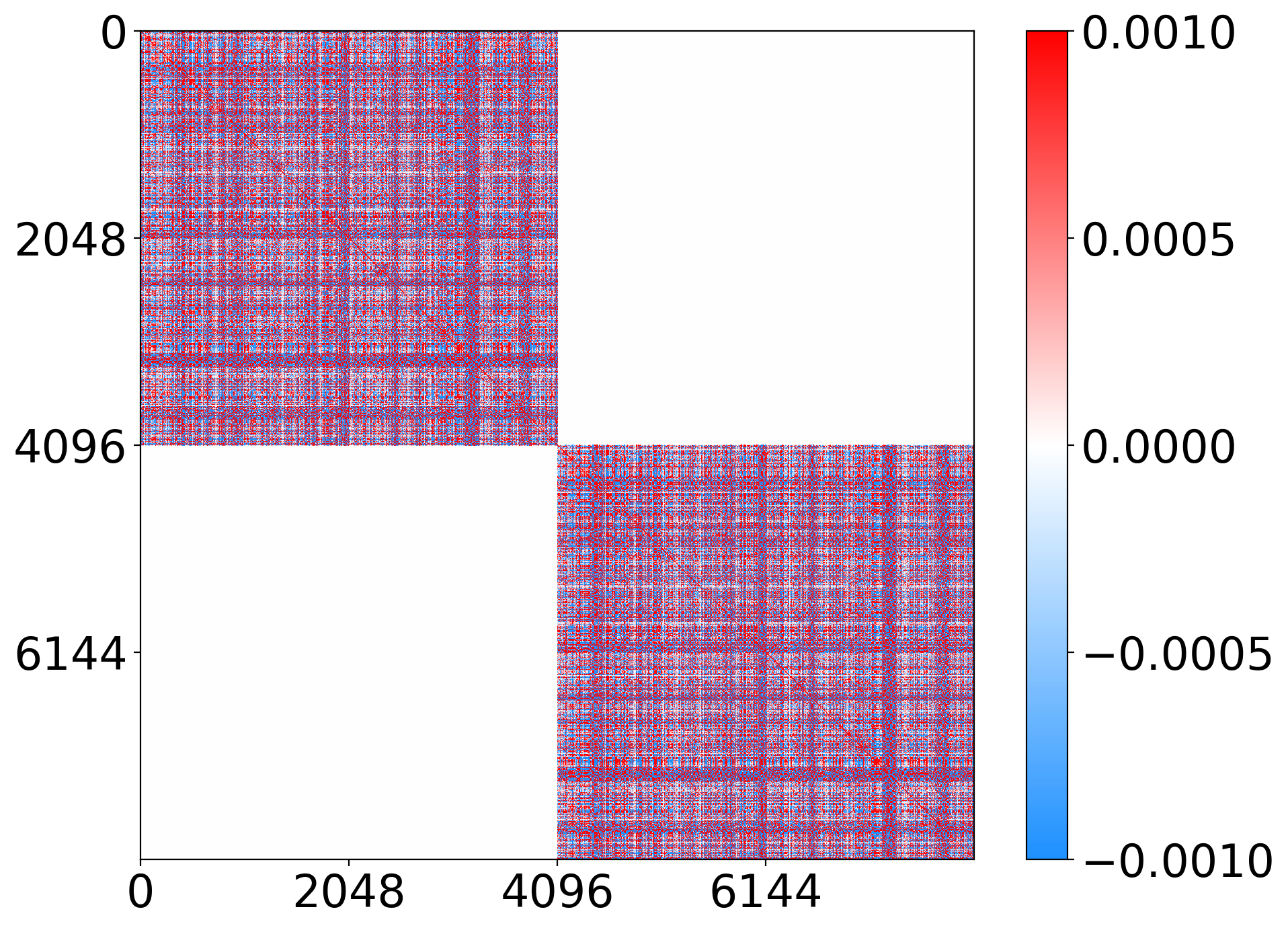}
    \end{minipage}

    \caption{Visualization of the scaled Fisher information matrix, $n\bF_j^{(l)} \times 10^6$, for the first two output channels in the \texttt{self\_attn.q\_proj}, \texttt{self\_attn.k\_proj}, \texttt{self\_attn.v\_proj}, and \texttt{self\_attn.o\_proj} layer of the first Transformer block in Llama-2-7B model. Left: the original Fisher matrices; Middle: the WoodFisher style block-diagonal approximation (block size $B=4$ for all of the layers); Right: the GuidedQuant approximation (the number of groups $g=4$). Both approximations are compared under an equal storage budget.}
    \label{fig:fisher1}
\end{figure}

\begin{figure}[t]
    \centering
    % Column Titles
    \makebox[0.06\textwidth]{} % Empty space for row labels
    \makebox[0.3\textwidth]{{Fisher (Original)}} \hfill
    \makebox[0.3\textwidth]{{WoodFisher}} \hfill
    \makebox[0.3\textwidth]{{GuidedQuant (Ours)}} 

    %\vspace{0.5em} % Small space before images

    \makebox[0.06\textwidth][c]{\raisebox{0.5\height}{\rotatebox{90}{\shortstack{\texttt{mlp}\\\texttt{gate\_proj}}}}}% Right-aligned row label
    \hfill
    \begin{minipage}[t]{0.3\textwidth}
        \centering
        \includegraphics[width=\textwidth]{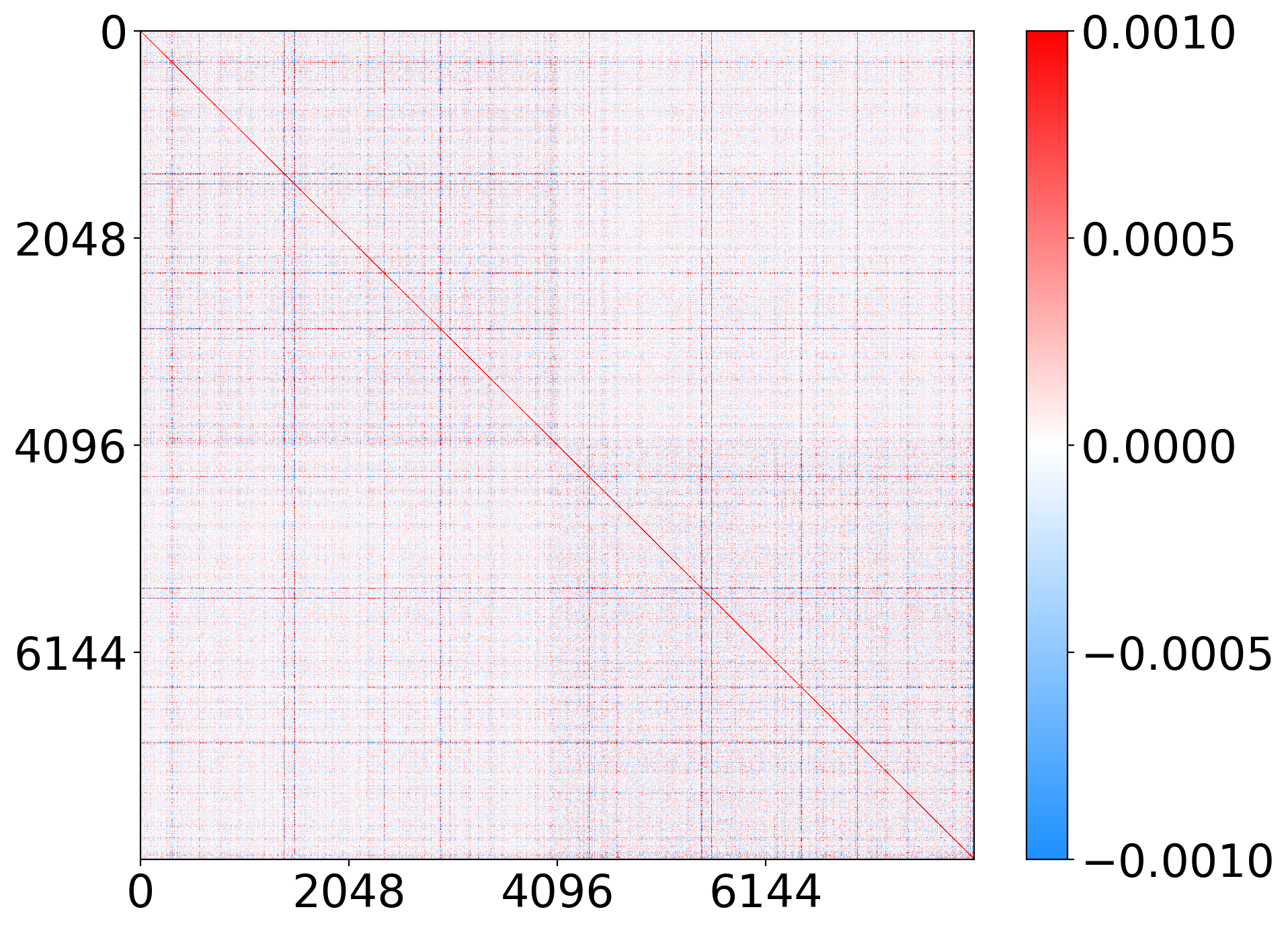}
    \end{minipage}
    \hfill
    \begin{minipage}[t]{0.3\textwidth}
        \centering
        \includegraphics[width=\textwidth]{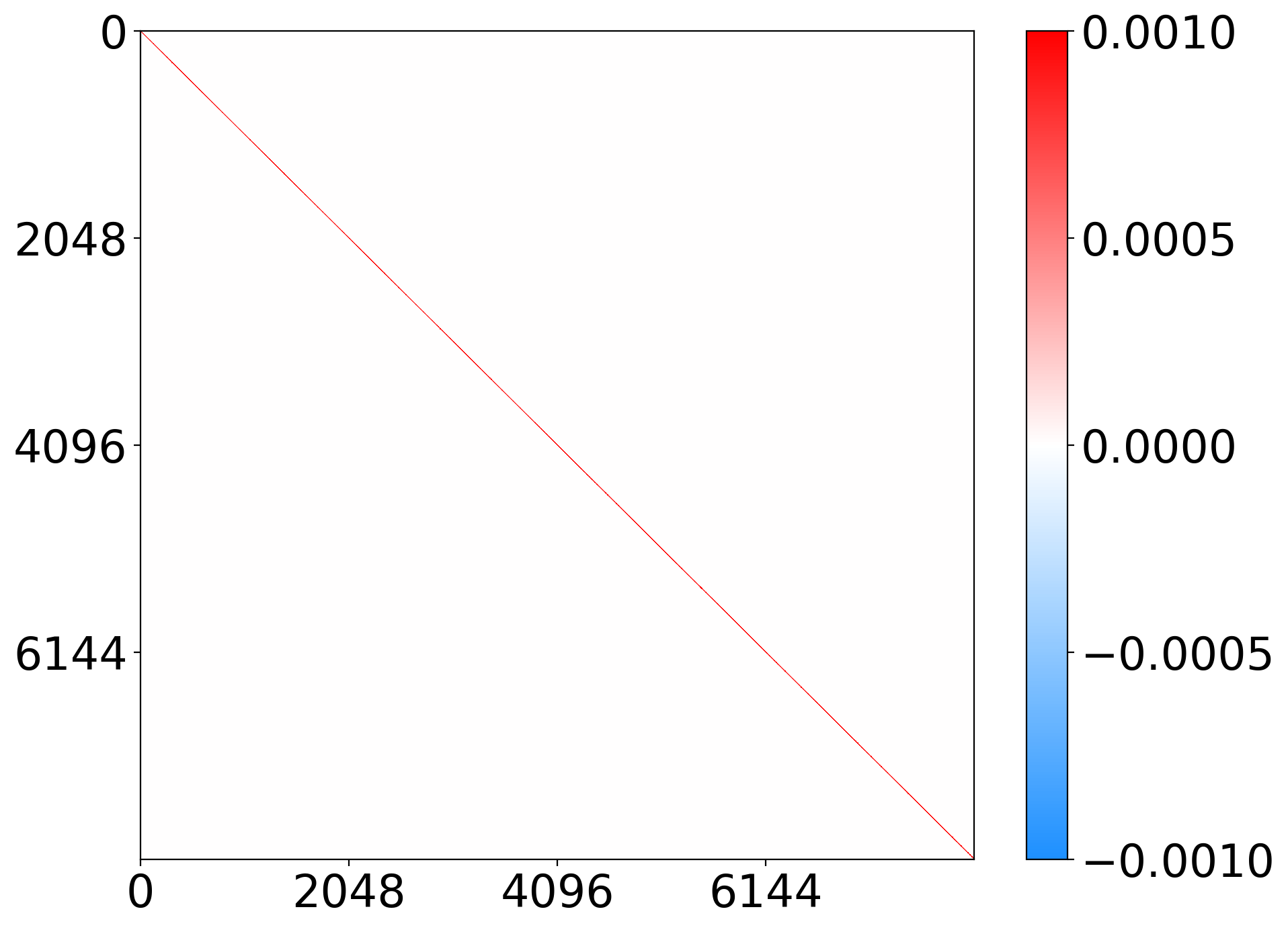}
    \end{minipage}
    \hfill
    \begin{minipage}[t]{0.3\textwidth}
        \centering
        \includegraphics[width=\textwidth]{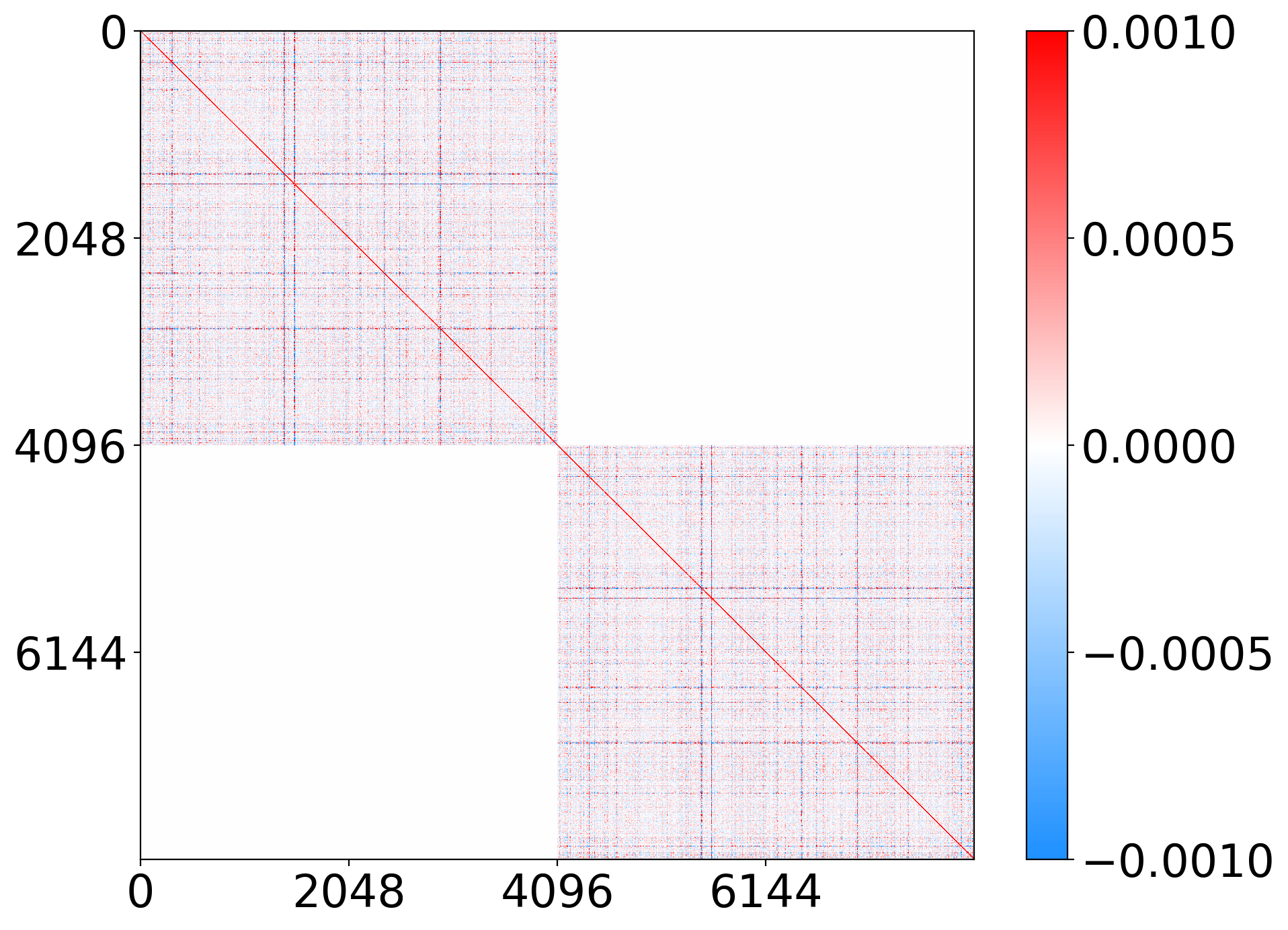}
    \end{minipage}

    \vspace{1em} % Space between rows

    \makebox[0.06\textwidth][c]{\raisebox{0.5\height}{\rotatebox{90}{\shortstack{\texttt{mlp}\\\texttt{up\_proj}}}}} % Right-aligned row label
    \hfill
    \begin{minipage}[t]{0.3\textwidth}
        \centering
        \includegraphics[width=\textwidth]{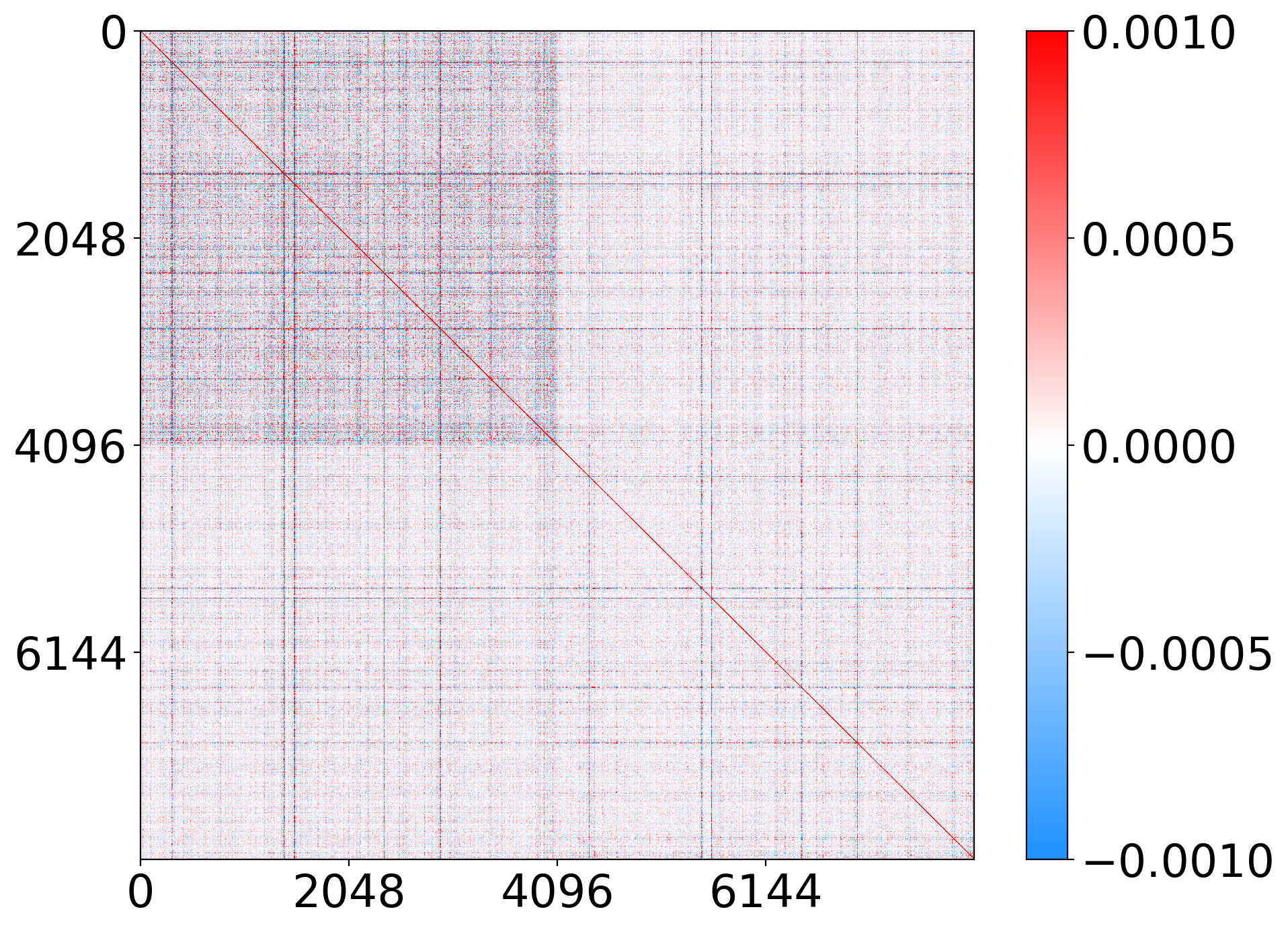}
    \end{minipage}
    \hfill
    \begin{minipage}[t]{0.3\textwidth}
        \centering
        \includegraphics[width=\textwidth]{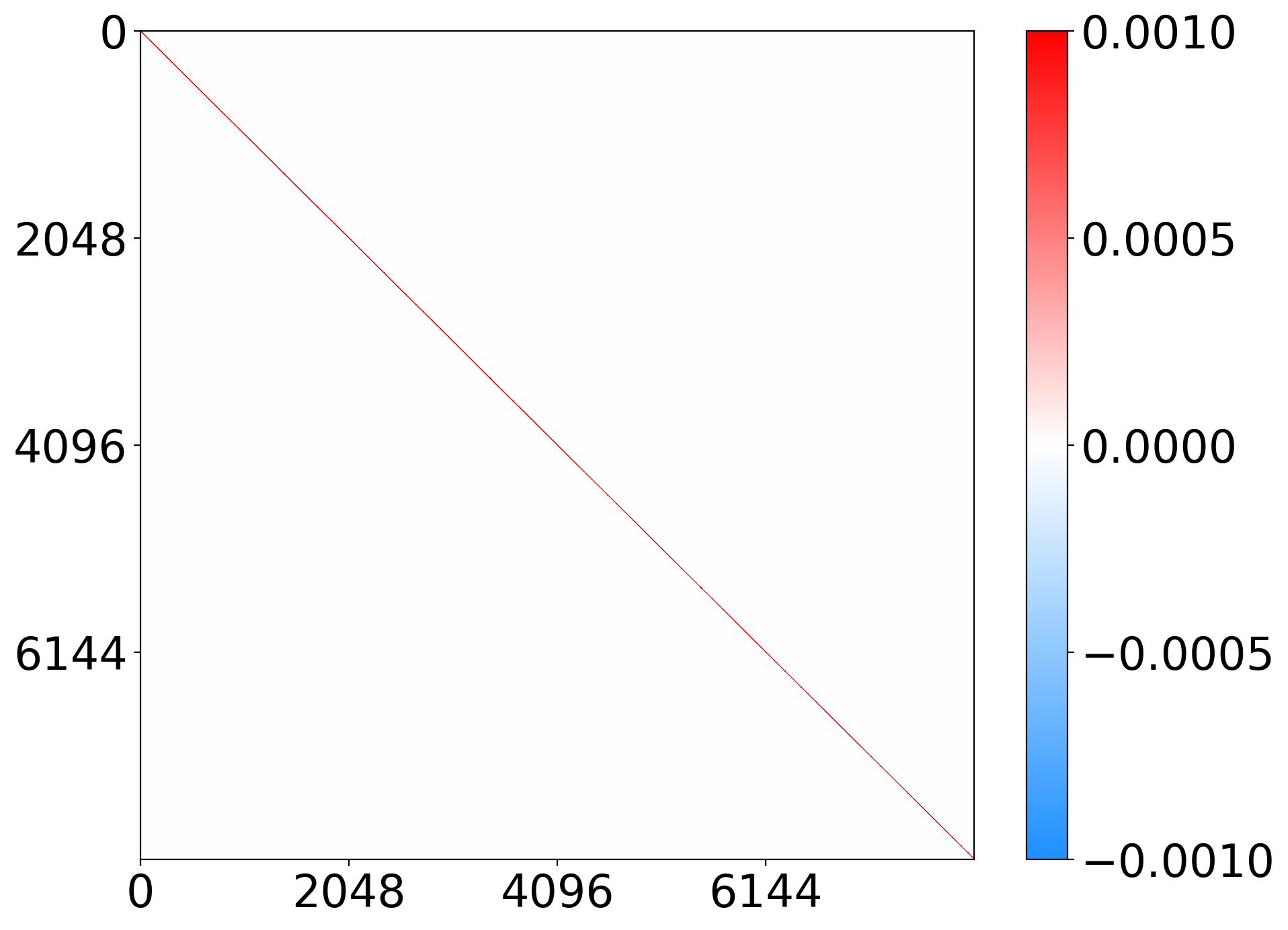}
    \end{minipage}
    \hfill
    \begin{minipage}[t]{0.3\textwidth}
        \centering
        \includegraphics[width=\textwidth]{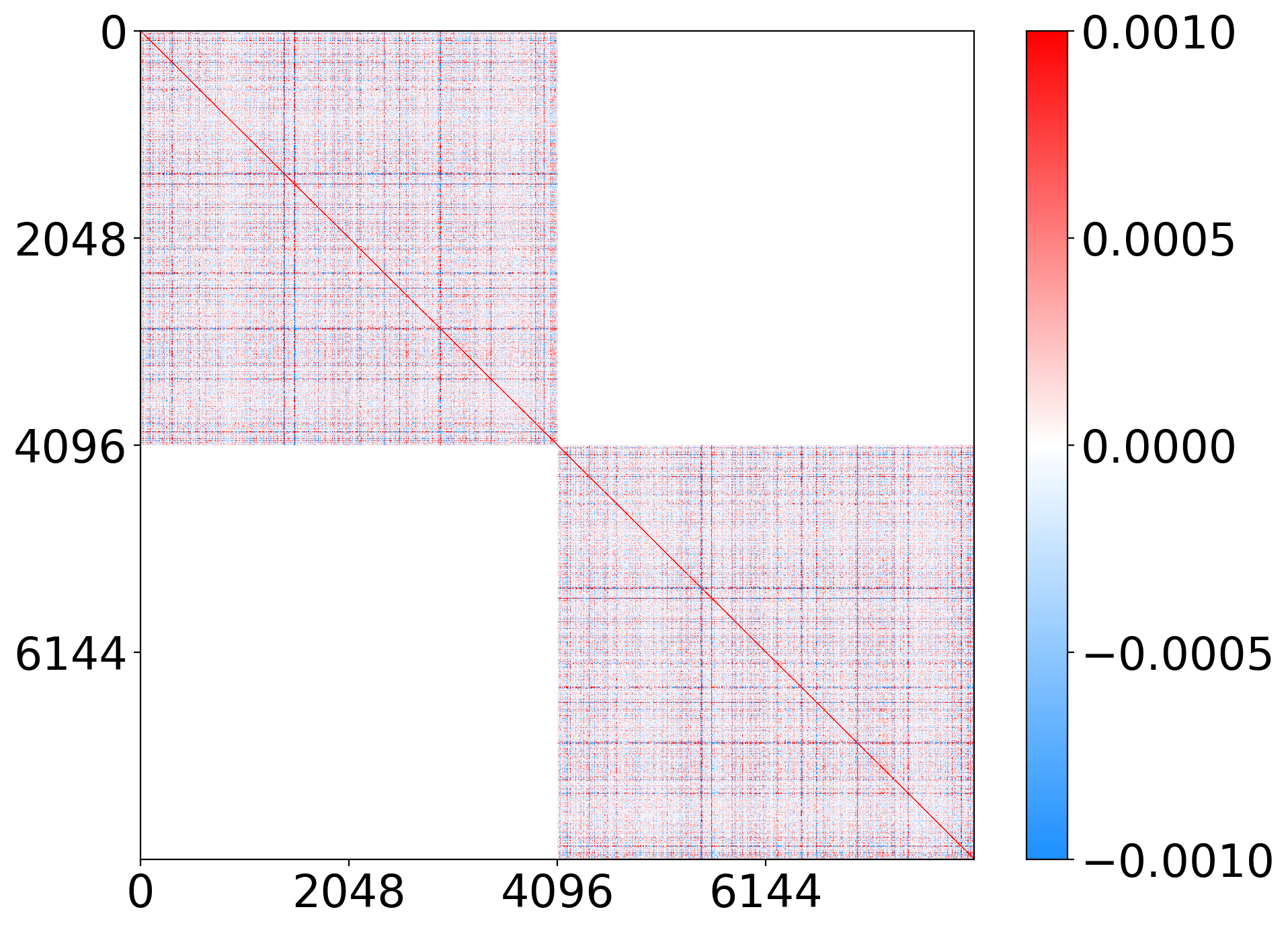}
    \end{minipage}

    \vspace{1em} % Space between rows

    \makebox[0.06\textwidth][c]{\raisebox{0.5\height}{\rotatebox{90}{\shortstack{\texttt{mlp}\\\texttt{down\_proj}}}}}  % Right-aligned row label
    \hfill
    \begin{minipage}[t]{0.3\textwidth}
        \centering
        \includegraphics[width=\textwidth]{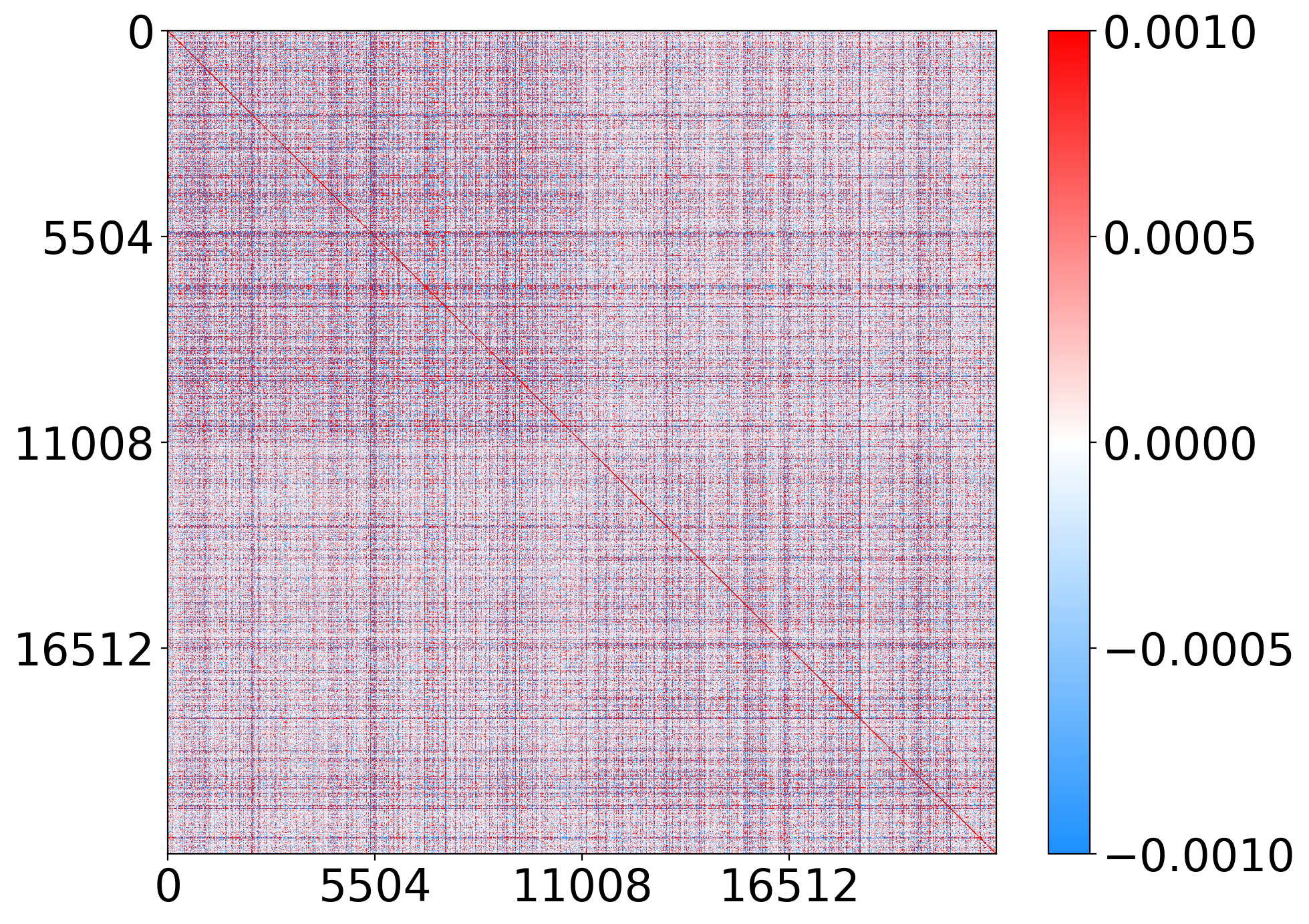}
    \end{minipage}
    \hfill
    \begin{minipage}[t]{0.3\textwidth}
        \centering
        \includegraphics[width=\textwidth]{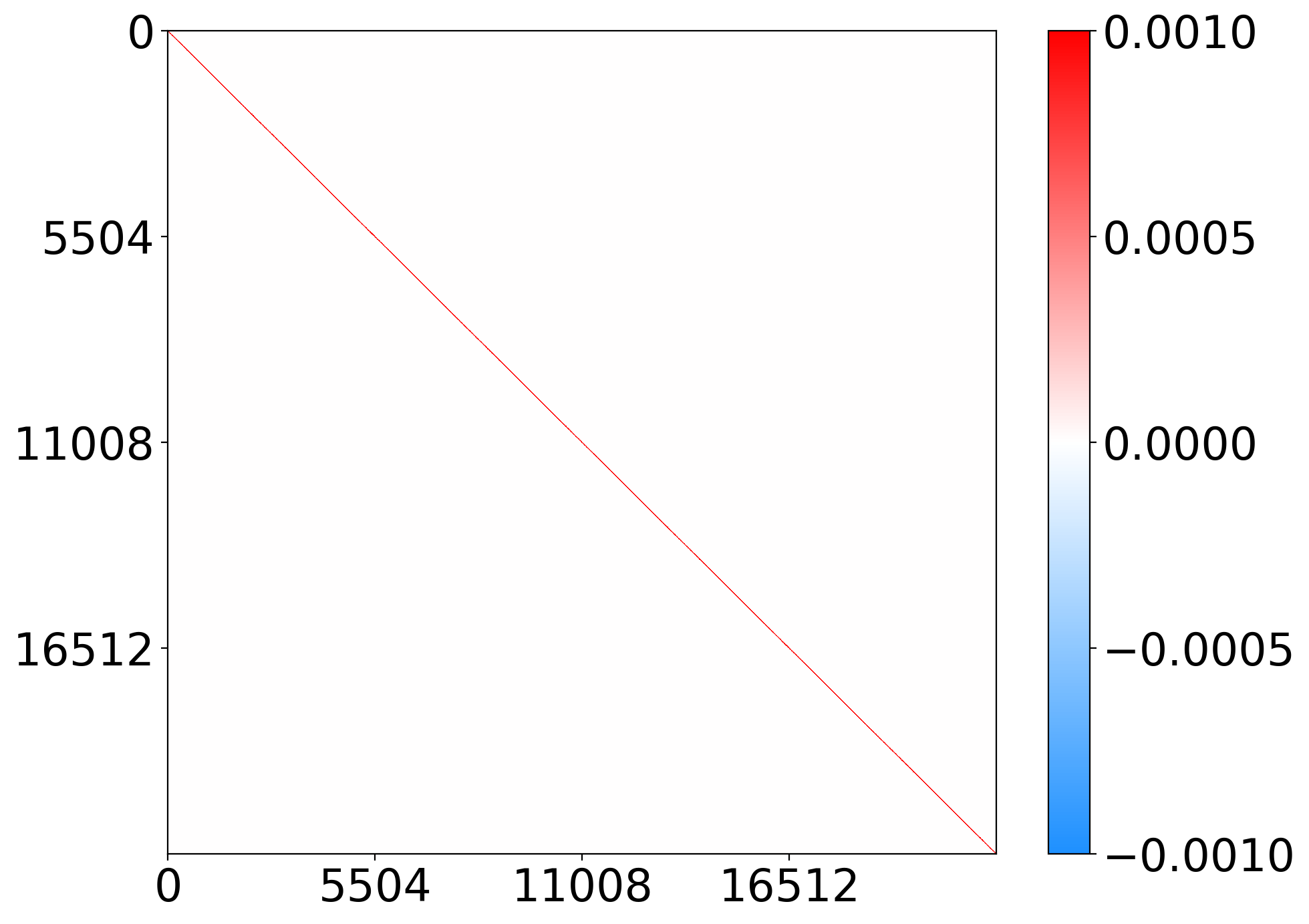}
    \end{minipage}
    \hfill
    \begin{minipage}[t]{0.3\textwidth}
        \centering
        \includegraphics[width=\textwidth]{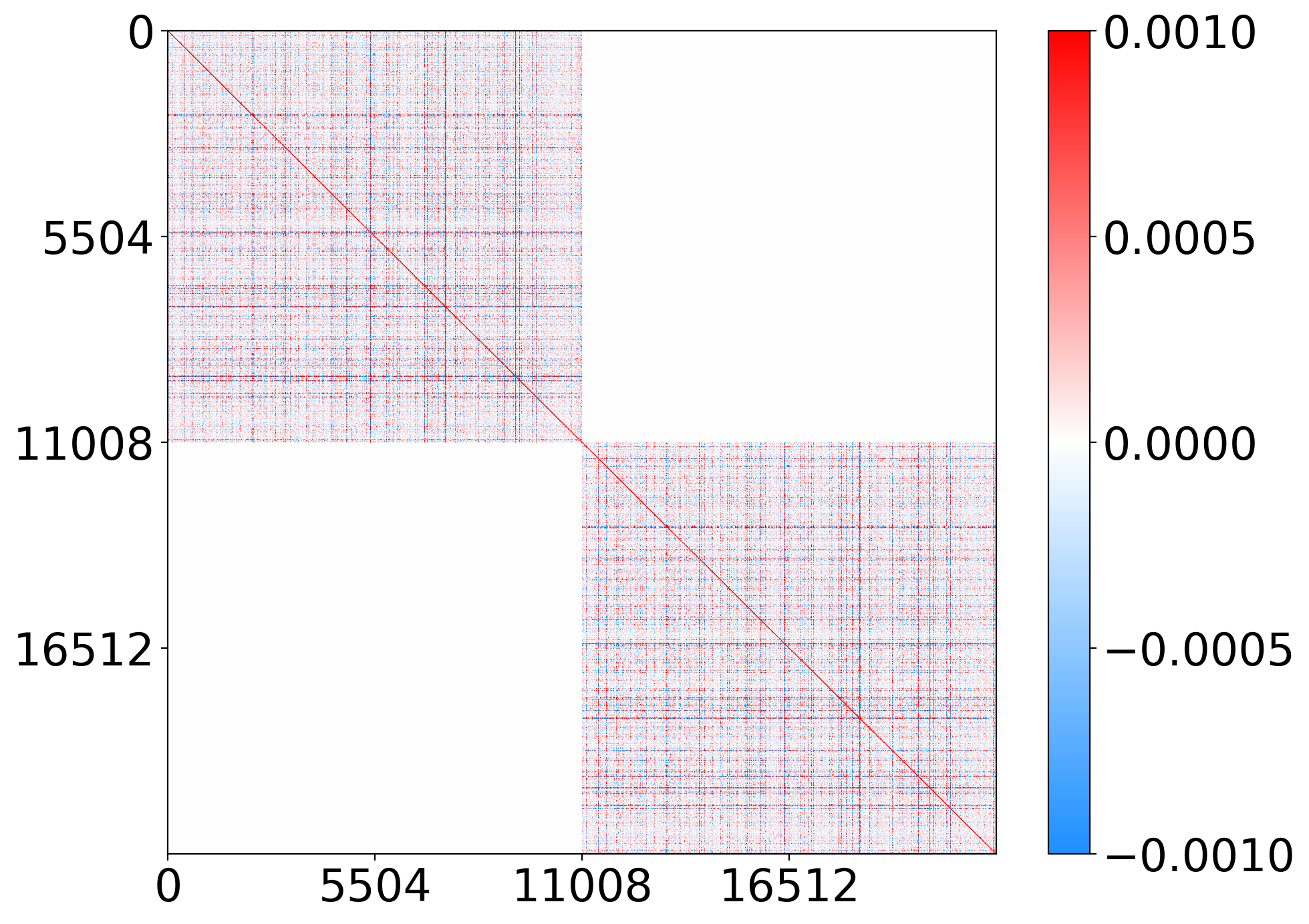}
    \end{minipage}

    \caption{Visualization of the scaled Fisher information matrix, $n\bF_j^{(l)} \times 10^6$, for the first two output channels in the \texttt{mlp.gate\_proj}, \texttt{mlp.up\_proj}, and \texttt{mlp.down\_proj} layer of the first Transformer block in Llama-2-7B model. Left: the original Fisher matrices; Middle: the WoodFisher style block-diagonal approximation (block size $B=2$, $B=2$, and $B=11$, respectively); Right: the GuidedQuant approximation (the number of groups $g=4$). Both approximations are compared under an equal storage budget.}
    \label{fig:fisher2}
\end{figure}

% \fi

In this section, we review existing neural network compression methods that use a block-diagonal Fisher matrix approximation of the Hessian and highlight their differences from GuidedQuant.
In particular, we discuss WoodFisher \citep{singh2020woodfisher} for pruning CNNs, Optimal BERT Surgeon \citep{kurtic2022optimal} for pruning BERT models, and BRECQ \citep{li2021brecq} for quantizing CNNs.

%We denote by $B \in \mathbb{N}$ the size of the row and the column of each block.
%, following the notation of \citep{kurtic2022optimal}.

WoodFisher and Optimal BERT Surgeon use blocks of arbitrary size $B \times B$ along the diagonal to reduce the storage cost. 
WoodFisher explores $B$ size of $\{20, 100, 1000, 5000, 12288, 37000\}$ in ResNet-20 \citep{he2015deep}, while Optimal BERT Surgeon uses $B=50$, since the larger block size does not fit in the memory.
BRECQ leaves the blocks that correspond to the parameters within each residual block in CNNs, and further uses a first-order Taylor approximation on the residual block's outputs to estimate the second-order error for each block to avoid the need to handle prohibitively large matrices.

The proposed GuidedQuant maintains the blocks corresponding to each output channel, resulting the $B$ size to be $4096$ to $11008$ for Llama-2-7B model. Directly computing these block-diagonal matrices would be infeasible, requiring over $110$ TB for and more than $13,000$ GPU hours on RTX 6000 Ada GPU for Llama-2-7B. To address this, GuidedQuant averages the Fisher diagonal blocks within each group, approximately preserving dependencies within each output channel at the scale of modern LLMs. We present the theoretical complexity of GuidedQuant in \cref{sec:challenge}, report its practical cost in \cref{tab:cost_hess}, and report the performance of approximating more (opting for smaller number of groups) in \cref{sec:group}.

In \cref{fig:fisher1,fig:fisher2}, we illustrate submatrices of the scaled Fisher information matrix, $n\bF_j^{(l)} \times 10^6$, for the linear layers in the first Transformer block of the Llama-2-7B model, alongside corresponding approximation results. 
Here, $n$ denotes the number of calibration data, and the results are computed using calibration data from the RedPajama dataset, which consists of 1024 sentences with 4096 tokens each.
Since each linear layer in the model contains $d_{\mathrm{in}} \times d_{\mathrm{out}}$ weights, fully visualizing its Fisher information matrix would yield a matrix of size $d_{\mathrm{in}} d_{\mathrm{out}} \times d_{\mathrm{in}} d_{\mathrm{out}}$, which is computationally prohibitive. Therefore, we restrict our visualization to the submatrix corresponding to the first two output channels of each layer. Since each output channel has $d_{\mathrm{in}}$ weights, this results in visualizing a $2 d_{\mathrm{in}} \times 2 d_{\mathrm{in}}$ matrix.
Within the Transformer block of the Llama-2-7B model, there are seven linear layers: \texttt{self\_attn.q\_proj}, \texttt{self\_attn.k\_proj}, \texttt{self\_attn.v\_proj}, \texttt{self\_attn.o\_proj}, \texttt{mlp.gate\_proj}, \texttt{mlp.up\_proj}, and \texttt{mlp.down\_proj}.
For the first six layers, $d_{\mathrm{in}} = 4096$, so we visualize an $8192 \times 8192$ matrix, while for the final layer (\texttt{mlp.down\_proj}) with $d_{\mathrm{in}} = 11008$, an $22016 \times 22016$ matrix is visualized.

We compare two approximation strategies:
\vspace{-1em}
\begin{itemize} \itemsep=0pt
    \item WoodFisher: This approach retains the blocks size of $B \times B$ along the diagonal. The storage requirement for this method is $B \, d_{\mathrm{in}} \, d_{\mathrm{out}}$.
    \item GuidedQuant: Here, the block size is set to $d_{\mathrm{in}} \times d_{\mathrm{in}}$ and blocks are averaged within groups. This strategy requires $g \, d_{\mathrm{in}}^2$ storage, where $g$ is the number of groups.
\end{itemize}
To ensure a fair comparison, we choose the WoodFisher block size as $B = \lceil g\, d_{\mathrm{out}} / d_{\mathrm{in}} \rceil$. Specifically, we choose $g = 4$ for the GuidedQuant, which results in $B = 4$ for the self-attention projection layers, $B = 2$ for the \texttt{mlp.gate\_proj} and \texttt{mlp.up\_proj} layers, and $B = 11$ for the \texttt{mlp.down\_proj} layer.

The visualizations reveal that the original Fisher information matrix exhibits strong off-diagonal values and a prominent block-diagonal structure with blocks of size $d_{\mathrm{in}} \times d_{\mathrm{in}}$. This indicates stronger interactions among weights within the same output channel compared to those across different channels. Overall, the GuidedQuant approximation captures significantly more of this structural detail than the WoodFisher-style block-diagonal approximation, which retains only arbitrarily sized diagonal blocks.

%%%%%%%%%%%%%%%%%%%%%%%%%%%%%%%%%%%%%%%%%%%%%%%%%%%%%%%%%%%%%%%%%%%%%%%%%%%%%%%
%%%%%%%%%%%%%%%%%%%%%%%%%%%%%%%%%%%%%%%%%%%%%%%%%%%%%%%%%%%%%%%%%%%%%%%%%%%%%%%

\end{document}